\theoremstyle{plain}
\newtheorem{theorem}{Theorem}[section]
\newtheorem{proposition}[theorem]{Proposition}
\newtheorem{lemma}[theorem]{Lemma}
\newtheorem{corollary}[theorem]{Corollary}
\theoremstyle{definition}
\newtheorem{assumption}[theorem]{Assumption}
\theoremstyle{remark}
\newtheorem{remark}[theorem]{Remark}
\DeclareMathOperator{\tr}{Tr}
\DeclareMathOperator{\Cov}{Cov}
\DeclareMathOperator{\Var}{Var}
\DeclareMathOperator{\dist}{dist}
\newcommand{\sodist}{\Theta}
\newcommand{\id}{\mathrm{Id}}
\newcommand{\SO}{\mathrm{SO}}
\renewcommand{\d}{\mathrm{d}}
\newcommand{\R}{{\mathbb{R}}}
\newcommand{\E}{\mathbb{E}}
\renewcommand{\P}{\mathbb{P}}
\newcommand{\Q}{\mathbb{Q}}
\newcommand{\mc}{\mathcal}
\newcommand{\p}{\varphi}
\newcommand{\mcc}{\mathrm{MCC}}
\newcommand{\iso}{\mathrm{iso}}
\newcommand{\vertiii}[1]{{\left\vert\kern-0.25ex\left\vert\kern-0.25ex\left\vert
#1 
    \right\vert\kern-0.25ex\right\vert\kern-0.25ex\right\vert}}
\newcommand{\eps}{\varepsilon}
\newcommand{\diag}{\mathrm{Diag}}
\DeclareMathOperator*{\argmin}{argmin}
\newcommand{\indistribution}{\stackrel{\mc{D}}{=}}
\icmltitlerunning{Robustness of Nonlinear Representation Learning}
\begin{document}

\twocolumn[
\icmltitle{Robustness of Nonlinear Representation Learning}




\begin{icmlauthorlist}
\icmlauthor{Simon Buchholz}{yyy,zz}
\icmlauthor{Bernhard Schölkopf}{yyy,zz,zzz}
\end{icmlauthorlist}

\icmlaffiliation{yyy}{Max Planck Institute for Intelligent Systems,
Tübingen, Germany}
\icmlaffiliation{zz}{Tübingen AI Center, Tübingen, Germany}
\icmlaffiliation{zzz}{ELLIS Institute, Tübingen, Germany}

\icmlcorrespondingauthor{Simon Buchholz}{sbuchholz@tue.mpg.de}

\icmlkeywords{Machine Learning,representation learning, ICA, misspecification, identifiability}

\vskip 0.3in
]



\printAffiliationsAndNotice{}  

\begin{abstract}
  We study the problem of unsupervised representation learning in slightly misspecified settings, and thus formalize the study of robustness of nonlinear representation learning. We focus on the case where the mixing is close to a local isometry in a suitable distance and show based on existing rigidity results that the mixing can be identified up to linear transformations and small errors. In a second step, we investigate Independent Component Analysis (ICA) with observations generated according to $x=f(s)=As+h(s)$ where $A$ is an invertible mixing matrix and $h$ a small perturbation. We show that we can approximately recover the matrix $A$ and the independent components. Together, these two results show approximate identifiability of nonlinear ICA with almost isometric mixing functions. Those results are a step towards identifiability results for unsupervised representation learning for real-world data that do not follow restrictive model classes.
\end{abstract}
\section{Introduction}\label{sec:intro}
One of the fundamental problems of data analysis is the unsupervised learning of representations.
Modern machine learning algorithms excel at learning accurate representations of very complex data distributions.
However, those representations are, in general, not related to the true underlying latent factors of variations that generated the data. 
Nevertheless, it is desirable to not only match the training distribution, but also to identify the underlying causal structure because such representations are expected to improve the downstream performance and improve explainability and robustness \citep{scholkopf2021towards}.  

Generally, we can only hope to learn  the ground truth model in identifiable settings, i.e., when, up to certain symmetries, 
there is a unique model in the considered class generating the observations. 
The simplest example of such a setting is linear ICA where we observe linear mixtures of independent variables. Identifiability of linear ICA was  shown   under mild assumptions in \citet{comon}.
On the other hand, it is well known that for general nonlinear functions ICA is not  identifiable \citep{hyvarinen1999nonlinear}. However, recently, a flurry of results was proved that showed various  identifiability results under additional assumptions. Those results rely among others on restrictions of the function class \citep{post_nonlinear,horan2021when,gresele2021independent,buchholz2022function,kivva2022}, multi-environment data \citep{khemakhem2020variational,hyvarinen2019nonlinear}, or interventional data \citep{ahuja2022towards,seigal2022linear,vonkuegelgen2023nonparametric,buchholz2023learning},
for a broader overview we refer to the  recent survey \citet{hyvarinen2023identifiability}. 

While those results brought significant progress to the field of Causal Representation Learning (CRL) they also generally require strong assumptions which will typically only hold approximately for real data. Thus, it is of 
great importance to understand the various robustness properties of 
such identifiability results. A first step in this direction is the investigation of whether the latent variables can still be
approximately  identified  in settings where a small amount of misspecification is allowed, i.e., when the model assumptions are mildly violated. 
Our analysis here focuses on theoretical questions regarding identifiability 
in misspecified settings, but this has nevertheless profound implications for the empirical side because it clarifies what assumptions generate or do not generate useful learning signals that can be exploited by suitable algorithms. This is particularly important since there is still a lack of algorithms 
that uncover the true latent structure for complex data beyond toy settings.

In this work, we study the problem of approximate identifiability in a setting where the mixing function 
is close to a local isometry and the latent variables are independent (ICA).
In particular, our results can be seen as a generalization of the identifiability result of \citet{horan2021when}. The main contributions of this work can be summarized as follows:
\begin{enumerate}
\item We initiate the study of robustness for representation learning and clarify that some existing results are not robust to arbitrarily small amount of misspecification.
\item We show based on existing rigidity results from
material science  that we can identify the latent variables 
up to a linear transformation and a small perturbation when the mixing function $f$ is close to a local isometry.
\item Then we carefully investigate slightly non-linear ICA,  a setting of independent interest. We show that then the linear part of the mixing can be identified up to a small error, which depends on the strength of the perturbation. 
\item 
Finally, we combine the previous two results to show robustness for representation learning problems with independent latent variables and a mixing function close to a local isometry.
\end{enumerate}
The rest of this paper is structured as follows. 
In Section~\ref{sec:setting} we introduce the general setting of representation learning, ICA, and local isometries, and we then consider the robustness of
identifiability in representation learning in Section~\ref{sec:robustness}.
Our results on approximate linear identifiability for approximate local isometries can be found in Section~\ref{sec:approximate_isometries} followed by our analysis of perturbed linear ICA and a combination of the two results in Section~\ref{sec:perturbed}.
We conclude in Section~\ref{sec:conclusion}. In Appendix~\ref{app:notation}
we collect some notation. In this work, $C$ denotes a generic constant that is allowed to change from line to line. We denote by $\indistribution$
equality of the distributions of two random variables.

\section{Setting}\label{sec:setting}

In this section, we introduce and motivate our main setting.
We assume that we have $d$-dimensional latent variables $S$ 
such that their distribution satisfies $\P\in \mc{P}$ for some class of probability distributions $\mc{P}$. For our analysis of local isometries we make the following mild assumption.
\begin{assumption}\label{as:P1}
We assume that the class of probability distributions
$\mc{P}$ has bounded support $\Omega\subset \R^d$
with density lower and upper bounded.
\end{assumption}
In the second part of this work  we   focus on independent component analysis, i.e., we make the following assumption.
\begin{assumption}\label{as:P2}
All $\P\in \mc{P}$ have independent components, i.e.,
$ 
 \P=\bigotimes_{i=1}^d \P_i
$
for some measures $\P_i$ on $\R$ and we furthermore assume that the $\P_i$ are non-Gaussian and have connected support.
\end{assumption}
The latent variables $S$ are hidden, and we assume that we
observe $X=f(S)$ for some mixing function $f\in \mc{F}(\Omega)$.
Note that the distribution of $X$ is then given by the push-forward $f_\ast\P$.
Here $\mc{F}$ is some function class consisting of functions $f:\Omega\subset\R^d\to \R^D$ and we will always assume without further notice that
$f$ is a diffeomorphism on its image (i.e., injective, differentiable 
and with differentiable inverse on its domain which is a submanifold of $\R^D$).
 Our main interest concerns the function class of local isometries, i.e.,
\begin{align}
\begin{split}
\mc{F}_{\iso}=\{f:\Omega\subset \R^d\to & \R^D:\, \text{ $Df^\top(s) Df(s) = \id$}
\\ &\text{ for all $s\in \Omega$}\}
\end{split}
\end{align}
for some  connected domain $\Omega\subset \R^d$.
The class of local isometries has attracted substantial attention in representation learning because it locally preserves the structure of 
the data, which is a desirable feature in many settings \citep{Tenenbaum2000,donoho_grimes,belkin2003laplacian}.
Closely connected notions like the restricted isometry property play a crucial
role in signal processing \citep{candes2005decoding},
and several works show that making  (parts) of neural networks isometric
improves performance \citep{qi2020deep,liu2021orthogonal,miyato2018spectral}.
 In \citet{gresele2021independent} it  is argued based on the independent mechanism principle that the closely related function class of orthogonal coordinate transformations is a natural function class for representation learning.
Essentially, the argument relies on the well-known fact that two isotropic random vectors are very close to orthogonal up to an  exponentially small probability.
In Appendix~\ref{app:random_functions} we show a construction of a family of random functions that becomes increasingly isometric as $D\to \infty$. 
Let us finally remark that the investigation of the inductive bias of VAEs 
in \citet{rolinek,zietlow,embrace} revealed that their  loss function and architecture promote that the encoder implements an orthogonal coordinate transformation. The empirical success of VAEs on disentanglement tasks is therefore a further motivation to study theoretical properties of 
orthogonal coordinate transformations and local isometries.

For ICA with locally isometric mixing function the following identifiability result was shown.
\begin{theorem}[Theorem~1, \citet{horan2021when}]\label{th:isometry}
We assume that $\mc{P}$ satisfies Assumption~\ref{as:P2}.
Suppose $X=f(S)$ where $S\sim\P\in \mc{P}$ and $f\in \mc{F}_\iso$.
If $X\indistribution \tilde{f}(\tilde{S})$ for some $\tilde{f}\in \mc{F}_\iso$ and $\tilde{S}\sim\tilde{\P}\in\mc{P}$, then 
$f=\tilde{f}\circ P$ for some linear map $P$ which is a product of a permutation and reflections.
\end{theorem}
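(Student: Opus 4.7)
The plan is to reduce this to classical linear ICA by analyzing the transition map $g \defeq \tilde{f}^{-1}\circ f$. First I would check that $g\colon \Omega \to \tilde{\Omega}$ is well defined: because $f$ and $\tilde{f}$ are diffeomorphisms onto their images and $X=f(S) \indistribution \tilde{f}(\tilde{S})$, the two images $M \defeq f(\Omega)$ and $\tilde{M} \defeq \tilde{f}(\tilde{\Omega})$ coincide with the support of $X$, so $\tilde{f}^{-1}$ is defined on all of $f(\Omega)$ and $g$ is a smooth bijection. At any $s \in \Omega$ with $x = f(s)$, the differentials $Df(s)$ and $D\tilde{f}(g(s))$ are linear isometries $\R^d \to T_x M$ onto the common tangent space, so $Dg(s) = (D\tilde{f}(g(s)))^{-1}\circ Df(s)$ satisfies $Dg^\top Dg = \id$. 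In other words, $g$ is itself a local isometry between connected open subsets of $\R^d$.

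Next I would pass from local to global via the classical Euclidean rigidity theorem: any $C^1$ map $g\colon \Omega\to \R^d$ on a connected open set with $Dg^\top Dg = \id$ is affine, $g(s) = Os + b$ with $O \in \mr{O}(d)$. The quick argument differentiates $\langle \partial_i g, \partial_j g\rangle = \delta_{ij}$ and permutes the indices $(i,j,k)$ to conclude $\langle \partial_k \partial_i g, \partial_j g \rangle = 0$; since $\{\partial_j g\}$ is a basis of $\R^d$ at every point, this forces $\partial_k \partial_i g \equiv 0$, so $Dg$ is constant on the connected set $\Omega$.

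With $g$ affine, we have $\tilde{S} \indistribution OS + b$, and both $S$ and $\tilde{S}$ have independent non-Gaussian coordinates by Assumption~\ref{as:P2}. This is the setting of Comon's identifiability theorem for linear ICA (equivalently, an application of Darmois--Skitovich): the only orthogonal matrices sending an independent non-Gaussian vector to an independent vector are signed permutations, so $O = P$ is a product of a permutation and reflections. Absorbing the translation $b$ into the representation (its value is pinned down by the matching of the connected product-type supports of $S$ and $\tilde{S}$), we obtain $f = \tilde{f} \circ P$ as claimed.

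The main obstacle is the rigidity step: the assumption $Df^\top Df = \id$ is purely local, and promoting it to a global affine factorization is the substantive content of the result. Everything else---defining $g$ and applying Comon---is essentially bookkeeping, provided one has done the measure-theoretic work of equating the supports of the two representations of $X$. It is also precisely this rigidity step whose \emph{approximate} version, taken from the materials-science literature, will be needed in Section~\ref{sec:approximate_isometries} to handle mixings that are only close to isometric.
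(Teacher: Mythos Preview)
Your proposal is correct and follows essentially the same two-step approach as the paper's own sketch: first argue that the transition map $\tilde{f}^{-1}\circ f$ is a local isometry between domains in $\R^d$ and hence affine (the Liouville-type rigidity step), then invoke linear ICA identifiability (Comon / Darmois--Skitovich) to pin the orthogonal part down to a signed permutation. You even supply the standard second-derivative computation for the rigidity step that the paper only cites, and you correctly anticipate that the approximate version of this rigidity is what drives Section~\ref{sec:approximate_isometries}.
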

Note that \citet{horan2021when} only claimed disentanglement, but the stronger version stated presently follows as sketched below and is also a special case of Theorem~2 of \citet{buchholz2022function}.
The proof of this result proceeds in two steps. First, we can identify 
$f$ up to an orthogonal linear transformation $A$ by general results. 
Indeed, $\tilde{f}^{-1}\circ f:\Omega\to \R^d$ is a local isometry and
local isometries from $\Omega\subset \R^d\to\R^d$ are affine.
This is a result first shown by Liouville in the 19th century, a simple proof can be found in the recent review of \citet{hyvarinen2023identifiability}.	
A slightly different viewpoint showing the same result was given by \citet{Tenenbaum2000}.
After identifying $S$ up to linear transformations, we can as a second step apply the standard identifiability result for linear ICA  to identify $A$ up to permutations and reflection (there is no scale ambiguity here because we consider local isometries).

We emphasize that this separation into two steps, where first
identifiability up to linear transformations and then full identifiability are shown, is very common, e.g.,
a similar proof strategy has been applied for polynomial mixing functions \citep{ahuja2022towards}, piecewise linear functions \citep{kivva2022}, or for general mixing functions and interventional data \citep{buchholz2023learning}. 

\section{Robustness and approximate Identifiabilility}\label{sec:robustness}

\begin{figure*}[ht!]
\begin{center}
\includegraphics[width=\linewidth]{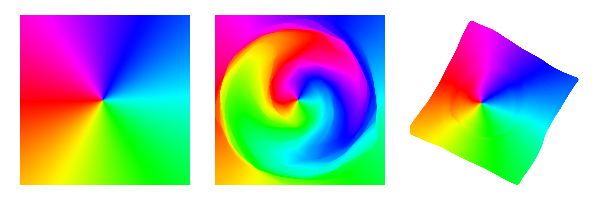}
\end{center}
\caption{
(Left) Color map of Gaussian latent variable $Z$, (Center) Color map of the transformed data $X=f(Z)$ where $f$ is a piecewise linear approximation of a radius dependent rotation (i.e., $f(Z)\overset{\mathcal{D}}{\approx} Z$), (Right) Representation $X'=f'(Z)$ learned by a VAE with ReLU activation functions initialized with $f$ and $f^{-1}$ for decoder and encoder 
and small variance.}
\label{fig:0}
\end{figure*}

Theorem~\ref{th:isometry} proves identifiability, i.e., all data representations agree up to permutation and reflections.
However, this uses crucially that the assumptions are exactly satisfied, i.e., $f$ is a true local isometry, the coordinates of $S$ are perfectly independent, and we know the distribution of $X$ exactly.

In real-world settings, typically none of these assumptions will hold exactly but at best approximately ('all models are wrong' \citep{box1976science}). Thus, identifiability results can only be relevant for applications when they are robust, i.e., they continue to hold in some approximate sense when the assumptions are mildly violated.
For, e.g., supervised learning or distribution learning, there is a large body of research (essentially the field of learning theory) addressing in particular the dependence on the sample size but also 
misspecification attracted considerable attention, e.g., \citet{white1980using,white1982maximum}. For  representation learning there are much fewer results. The sample complexity of linear ICA algorithms has been studied, e.g., by \citet{yau1992compact,tichavsky2006performance,wei2015convergence}. Moreover, \citet{zhang2008minimal} investigated ICA for a perturbed linear model, although mostly empirically and without stringent quantitative results.
To the best of our knowledge, no robustness results are known
for non-linear mixing functions $f$.  
The main focus of this work is to show a first robustness result for a misspecification of the function class, while we do not 
consider misspecification of independence of the ground truth samples and
finite sample effects here.

Let us emphasize that it is an important question because not all identifiability results have a meaningful extension to slight misspecification.  
As a simple motivating example, consider analytic functions (i.e., functions that can be expressed as a power series) and smooth functions (i.e., functions that have arbitrary many derivatives). While those function classes are superficially similar, they exhibit profound differences: Analytic functions $f:\R\to \R$ have the property that there is a unique
continuation of $f|_U$ from any arbitrarily small open set $U$ to a maximal domain of definition (this can be seen as arbitrary far o.o.d.\ generalization) while for smooth functions virtually no information about $f|_{U^\complement}$ can be deduced from $f|_U$.
This indicates that minor misspecification (analytic vs.\ smooth) can render identifiability results void, while the representation capacity 
remains almost the same and the difference can hardly be detected from data.
To connect this more closely to identifiability in representation learning, consider, e.g., an identifiability result such that 
the following properties hold (a similar discussion can be found in the Appendix of \citet{buchholz2023learning})
\begin{itemize}
\item When assuming  that the mixing $f$ is in some  function class 
$\mc{F}$ then $f$ is identifiable (up to certain symmetries, e.g., permutations or linear maps)
\item When the mixing  is just assumed to be an injective and continuous function $f$ is not identifiable, i.e., for  $f\in \mc{F}$ there is $f'$ giving rise to the same observational distribution (for suitable latent distributions) but $f$ and $f'$ are entirely different data representations that are not related by the allowed symmetries.
\item The function space $\mc{F}$ is dense in the space of injective continuous functions, i.e., every injective and continuous function $f'$ can be approximated to arbitrary precision by $f\in \mc{F}$.
\end{itemize}
Note that the last property implies that for $f'\notin \mc{F}$ no meaningful distance to $\mc{F}$ can be defined, as it can be approximated arbitrarily well by functions in $\mc{F}$. This also makes the identifiability result brittle: While $f\in\mc{F}$ can be identified, there is a sequence of functions $\mc{F}\ni f_i\to f'$ converging to the spurious solution $f'$. Thus, it can be arbitrarily difficult to decide in practice whether $f$ generated the data or an 
approximation $f_i\in \mc{F}$ of the spurious solution $f'$.
 Concrete examples of this setting include
polynomial mixing functions and piecewise linear mixing functions
which can be used to identify, e.g., Gaussian or rectangular base distributions \citep{ahuja2022towards,kivva2022}.
We emphasize that these results nevertheless are important theoretical contributions, however, these identifiability results might be difficult to exploit in practice.
Our  observations  here are in spirit similar to a no free lunch theorem \citep{wolpert1997nofree} because it clarifies the fundamental tradeoff between
representation capacity of the function class and identifiability.
Note  that the lack of identifiability can also harm downstream performance, as shown by \citet{saengkyongam2023identifying}.

Let us try to make the point above concrete in a toy setting. 
We consider two-dimensional Gaussian latent variables $Z$.  Let $f$ be 
a piecewise linear mixing function $f$ 
that approximates a radius dependent rotation $m$ \citep{hyvarinen1999nonlinear} which is a map
such that $m(Z)\indistribution Z$. Consider observations $X=f(Z)\overset{\mc{D}}{\approx}Z$.
Note that $f$ and the identity map generate similar observations so it is hard to distinguish between the mixing function $f$ and the identity $\mathrm{id}$
even though $f$ is identifiable for Gaussian latent variables up to linear transformations (see \citet{kivva2022}). 
However, if we train a 
 VAE with ReLU activations that is initialized with $f$ and $f^{-1}$ as decoder and encoder, i.e., we initialize it with the true mixing,  it nevertheless fails to preserve this representation.  Instead, the VAE learns
a simple orthogonal transformation of the data, i.e., even though the map $f$ is in theory identifiable the learning signal is not sufficient to overcome the inductive bias of the learning method (here the  VAE) towards simple functions that, e.g., preserve simple geometric properties. 
An 
illustration of the learned mixing can be found in Figure~\ref{fig:0}.
Additional details for the setting can be found in Appendix~\ref{app:illustration}.

The main purpose of this work is to show that the function class $\mc{F}_\iso$ 
behaves differently, i.e., robust representation learning results can be proved.
To formalize this, we first need to introduce a way to measure the distance $\sodist(f)$
of a function $f$  to the space of local isometries $\mc{F}_\iso$.
This quantity should vanish ($\sodist(f)=0$) when $f\in \mc{F}_\iso$ 
and our goal is to show that when $\sodist(f)$  is small we can 
still approximately identify $f$. For functions $f:\Omega\subset \R^d\to\R^d$ we use 
the following quantity
\begin{align}\label{eq:dist_f_iso}
\begin{split}
\sodist_p^p(f, \Omega) = 
\int_\Omega  &\mathrm{dist}(D f(s),\mathrm{SO}(d))^p
\\
&+ \mathrm{dist}\big( (D f)^{-1}(s),\mathrm{SO}(d)\big)^p\,\d s.
\end{split}
\end{align} 
Here $\dist^2(A, \mathrm{SO}(d))=\min_{Q\in \SO(d)}|Q-A|^2$ refers to the euclidean distance to the space 
$\SO(d)=\{A\in \R^{d\times d}:\, AA^\top=\id, \, \det A=1\}$ of all rotations. We discuss the definition and properties of the distance  in more detail in Appendix~\ref{app:distances}.
The distance $\sodist$ (which is not a distance in the mathematical sense) measures how close a function $f$ is to a local isometry pointwise and integrates this.
If $f\in \mc{F}_\iso(\Omega)$ is a local isometry and if $\det Df>0$ everywhere (which is no loss of generality by reflecting one coordinate), then indeed 
$\sodist_p(f,\Omega) =0$ because $Df\in \SO(d)$ pointwise. For our results we need the second term in
\eqref{eq:dist_f_iso}, i.e., we measure the local deviation from being an isometry of the map and its inverse.
One could also define the metric by integrating with respect to the  latent distribution $\P$ instead of the Lebesgue measure, but we think that the latter is slightly more natural because a common assumption based on the independent mechanism assumption \citep{janzing2010causal,scholkopf2012causal,janzing2016algorithmic} is that $\P$ and $f$ are sampled independently. This also simplifies our analysis slightly.

Our main interest concerns high dimensional embeddings, i.e., 
mixing function $f:\R^d\to M\subset \R^D$ for $d\ll D$ where $M$ is 
an embedded submanifold. In this case, the definition 
 \eqref{eq:dist_f_iso} needs to be slightly adapted, by essentially 
restricting the target of $Df$ to the tangent space of $M$.
For $D\geq d$ we define
\begin{align}\label{eq:dist_f_iso_general}
\begin{split}
\sodist_p^p(f, \Omega) &= 
\int_\Omega \mathrm{dist}^p(D f(z),\SO(d, T_{f(z)}M))
\\
&\quad
+ \mathrm{dist}^p\big( (D f)^{-1}(z),\SO(T_{f(z)}M,d)\big)\,\d z
\end{split}
\end{align} 
where $T_{f(z)}M$ denotes the tangent space of $M$
at $f(z)$ and $\mathrm{SO}(d, T_{f(z)}M)$ denotes
the set of orthogonal matrices $Q\in \R^{D\times d}$
(i.e., $Q^\top Q=\id_d$) with range in $T_{f(z)}M$
that are orientation preserving (here we fix any orientation on $M$). To interpret the second summand, we need to associate certain matrix representations to the maps $(Df)^{-1}$ and 
$\SO(T_{f(z)}M,d)$ and we refer to Appendix~\ref{app:undercomplete} for a careful definition.

Let us now discuss and motivate the choice of distance in a bit more detail.
First, we emphasize that \eqref{eq:dist_f_iso} does not (directly) quantify how
well $f$ is globally approximated by a local isometry, i.e., 
whether there exists a $g\in \mc{F}_\iso$
such that, e.g., $\lVert f-g\rVert_1$ is small.
We note that from a modelling standpoint, it seems more reasonable to assume that 
\eqref{eq:dist_f_iso} is small for the ground truth mixing function than
assuming that $f$ can be globally well approximated by a local isometry because the former corresponds to the common assumption that $f$ preserves the structure of the data locally (in particular distances)  while it is not clear why $f$ should be  globally constrained. In Appendix~\ref{app:random_functions} we show that for  $d\ll D$ 
and certain classes of random functions $\sodist$ will be indeed small. This quantifies the observation that those random functions
become isometric as $D\to \infty$ (similar results can be found in \citet{reizinger2023independent}). 




One difficulty in the setting is that as soon as we allow misspecification 
of 
$f$ the mixing function $f$ will typically be no longer identifiable
(otherwise the existing identifiability result could be strengthened to a larger function class). We focus mostly on the setting of nonlinear ICA and in this case, 
it is well known that $f$ is not identifiable.
Thus, we can only hope to prove a version of approximate identifiability 
that gracefully recovers the classical identifiability result as the misspecification disappears. 
We here  define approximate identifiability of the latent variables based  on the mean correlation coefficient (MCC) 
which has been used before to evaluate the empirical performance of
representation learning algorithms.
For a pair of $d$ dimensional random variables $(S,\tilde{S})$ the MCC is defined
by
\begin{align}\label{eq:MCC_def}
\mathrm{MCC}(S,\tilde{S})=\max_{\pi \in S_d} d^{-1}\sum_{i=1}^d  |\rho(S_i, \tilde{S}_{\pi(i)})|
\end{align}
where $\rho(X,Y)=\Cov(X,Y)/ (\Var(X) \Var(Y))^{1/2}$ denotes the correlation coefficient. Note that  
$\mathrm{MCC}(S,\tilde{S})=1$ implies that $S_i=\lambda_i \tilde{S}_{\pi(i)}$, for a vector $\lambda\in \R^d$, i.e., we recover $Z$ up to permutation and scale.
For MCC values close to 1 this is approximately true, in particular 
$\tilde{S}=PS+\eps$ holds for a scaled permutation $P$ and some error $\eps$ that vanishes as the MCC goes to 1.
 More generally, one could also allow coordinatewise reparametrizations
of $\tilde{S}$
when those are not identifiable
\cite{gresele2021independent}, but this is not necessary when considering 
(perturbed) local isometries.

Let us elaborate on the difference between proper identifiability \citep{xi2023ideterminacy}
and approximate identifiability.
Recall that a proper identifiability results is of the form $f_\ast \P=\tilde{f}_\ast\tilde{\P}$ for $f,\tilde{f}\in \mc{F}$ and
$\P,\tilde{\P}\in \mc{P}$ implies that $\tilde{f}^{-1}\circ f$ is a tolerable ambiguity, e.g., a combination of permutation and reflection in Theorem~\ref{th:isometry}. In this case, an equivalent statement  is that $f_\ast \P=\tilde{f}_\ast\tilde{\P}$ implies
$\mcc(\tilde{f}^{-1}\circ f(S),S)=1$ for $S\sim \P$.
In contrast, for approximate identifiability results
we cannot simply take any $\tilde{f}$ such that  
$\tilde{f}_\ast\tilde{\P}=f_\ast \P$ because there will typically exist
arbitrarily chaotic spurious solutions with no guarantee on
$\mcc(\tilde{f}^{-1}\circ f(S),S)$ for $S\sim \P$.
Instead, we need to make  a specific choice for the  function $\tilde{f}$ 
such that $\tilde{f}_\ast\tilde{\P}=f_\ast \P$ 
which ensures that $\mcc(\tilde{f}^{-1}\circ f(S),S)\approx 1$ for a suitable class of $f$. In our case those will be functions such that $\sodist_p(f,\Omega)$ is small. 
Of course, our choice of $\tilde{f}$ is only allowed to depend on 
the observations $X=f(S)$ but not on $f$ itself.
Here the basic idea is to choose $\tilde{f}$ as close as possible to being a local isometry, i.e., we roughly chose $\tilde{f}$ such that
$\sodist(\tilde{f},\tilde{\Omega})$ is minimal under the constraint 
$\tilde{f}_\ast\tilde{\P}=f_\ast \P$. 
Let us now provide a very informal version of our final approximate identifiability result.
\begin{theorem}[Informal sketch]
Suppose $S\sim \P$ where $\P$ has independent components, and we observe
$X=f(S)$ for some mixing function $f$. Then we can find $\tilde{f}$
such that $\hat{S}=\tilde{f}^{-1}(X)$ satisfies for some $C>0$, $p>1$
\begin{align}
\mcc(\hat{S},S)\geq 1-C \sodist_p^2(f)
\end{align}
where $C$ depends on everything except $f$.
\end{theorem}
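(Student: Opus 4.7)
The plan is to follow the two-step strategy of Theorem~\ref{th:isometry}, turning each step into its quantitative analogue using the results announced in Section~\ref{sec:approximate_isometries} and Section~\ref{sec:perturbed}, and then translating the resulting $L^p$-bound on the latent reconstruction into an MCC lower bound.

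In the first stage I would apply the approximate isometric identifiability result of Section~\ref{sec:approximate_isometries}. The underlying mechanism is Liouville's theorem---a local isometry between open subsets of $\R^d$ is affine---whose quantitative counterpart is a rigidity estimate of Friesecke-James-M\"uller type: if $\lVert \dist(Df,\SO(d))\rVert_{L^p}$ is small, then $Df$ is close to a constant rotation in $L^p$ and $f$ is close to a single affine isometry. Combining this with the observational constraint $X \indistribution \tilde{f}_1(\tilde S)$ yields a near-isometry $\tilde{f}_1$, determined from the law of $X$ alone, together with $A\in\SO(d)$ and $b\in\R^d$ such that
\begin{align*}
\tilde{f}_1^{-1}(f(s)) = A s + b + h(s),
\quad
\lVert h\rVert_{L^p(\Omega)} \leq C\,\sodist_p(f,\Omega).
\end{align*}
The undercomplete regime $d<D$ would be handled by upgrading the rigidity estimate to mappings with values in the submanifold $M$, using the tangent-space distance in \eqref{eq:dist_f_iso_general}.

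In the second stage, the random variable $Y \defeq \tilde{f}_1^{-1}(X) = A S + b + h(S)$ fits exactly the model of the perturbed linear ICA analysis of Section~\ref{sec:perturbed}: an invertible linear mixing of independent non-Gaussian sources plus a small source-dependent perturbation. That analysis returns an orthogonal matrix $B$ extracted from the distribution of $Y$ and satisfying $B^\top A = P + E$ for a signed permutation $P$ and an error $\lVert E\rVert \leq C\,\sodist_p(f,\Omega)$. Setting $\tilde{f}(s)\defeq \tilde{f}_1(Bs+b)$ gives
\begin{align*}
\hat{S} = \tilde{f}^{-1}(X) = P S + E S + B^\top h(S).
\end{align*}
The perturbation $\eta\defeq ES + B^\top h(S)$ is then $O(\sodist_p(f,\Omega))$ in $L^p$ via the operator-norm bound on $E$, the boundedness of $S$ under Assumption~\ref{as:P1}, and the Stage~1 control on $h$. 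A direct expansion of the correlation coefficient yields $\lvert\rho(P_{ii}S_{\pi(i)}+\eta_i, S_{\pi(i)})\rvert\geq 1 - C\,\lVert \eta_i\rVert_{L^2}^2$, so averaging over coordinates and maximizing over permutations delivers $\mcc(\hat{S},S)\geq 1 - C\,\sodist_p^2(f)$. The square arises as the familiar second-order stability of correlation under additive perturbations.

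The main obstacle will be Stage~1. The Friesecke-James-M\"uller estimate is classical when source and target both have dimension $d$, but for mappings into an embedded submanifold of $\R^D$ one has to carefully patch the local rigidity bounds so that the resulting $\tilde{f}_1$ is simultaneously (i) observationally equivalent to $f$, (ii) a local isometry with remainder of order $\sodist_p(f,\Omega)$, and (iii) definable from the law of $X$ alone. A secondary subtlety is that the perturbation $h(S)$ fed into the ICA step is deterministic and source-dependent rather than i.i.d.\ noise, so the stability analysis of the identifying functional used to extract $B$ must control how the cumulants of $Y$ deviate from those of $AS+b$ purely in terms of $\lVert h\rVert_{L^p}$.
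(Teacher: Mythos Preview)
Your proposal is correct and mirrors the paper's own proof of Theorem~\ref{th:local_ica_compact}: apply the quantitative rigidity estimate (Theorem~\ref{th:almost_orthogonal}, based on Friesecke--James--M\"uller) to reduce to a perturbed linear model, then invoke the perturbed ICA analysis (Theorems~\ref{th:pert_matrix} and~\ref{th:mcc1}) and finish with a second-order expansion of the correlation coefficient (Lemma~\ref{le:mcc2}). The only step you do not mention explicitly is a preliminary recentering and linear regression of $h$ onto $S$ (Lemma~\ref{le:center}) so that $\E h'(S)=0$ and $\E(Sh'(S)^\top)=0$, which the paper needs to match the standing hypotheses of the perturbed ICA theorem; also note that the rigidity step actually delivers $\lVert h\rVert_{\P,q}$ with $q=pd/(d-p)$ via Sobolev embedding rather than merely $\lVert h\rVert_{L^p}$, and this higher integrability is what feeds into the moment requirements of Assumption~\ref{as:ICA3}.
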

Note that if $f$ is far away from a local isometry, i.e., $\sodist_p(f)$
is large then the right-hand side is negative, and the 
obtained recovery guarantee is void which is not surprising as nonlinear ICA is not identifiable.
The actual statement can be found in Theorem~\ref{th:local_ica_compact}.
We emphasize again that robustness is an aspect of representation learning
that has previously attracted little attention, and we 
study one specific example, namely robustness of learning approximate local isometries. However, 
there are many alternative robustness questions and possible misspecifications, e.g, we could consider 
latent sources that are only approximately independent or settings where MCC might not be the right measure of identifiability. 
 
%

\section{Approximate linear Identifiability for approximate local Isometries}
\label{sec:approximate_isometries}
We now extend linear identifiability for locally isometric embeddings to approximate isometric embeddings. Restricting the ambiguity to a linear 
transformation is already an important first step which can 
be in principle combined with any result on causal representation learning for
linear mixing functions, examples of recent results for linear mixing functions
include \citet{ahuja2022interventional,seigal2022linear,varici2023score}.
The key ingredient for our results is the  rigidity statement
Theorem~\ref{th:rigidity} in Appendix~\ref{app:approximate_isometries}
which played an important role in the mathematical analysis of elastic materials (see 
\citet{ciarlet1997mathematical} for an overview).

The main consequence of this theorem that is relevant for our work here is the bound
\begin{align}
\begin{split}
    	\min_L&\lVert u-L\rVert_{L^q(\Omega)}
	\\
 &\leq 
	 C(\Omega, p)
\left(\int_\Omega  \dist(Du(s), \SO(d))^p\, \d s\right)^{\frac{1}{p}}
\end{split}
\end{align}
for all $u$ and $q=pd/(d-p)$ where the minimum is over functions of the form $L(s)=As+b$
with $A\in \SO(d)$.
This result shows that when a map $u$ has gradient pointwise close to any, potentially varying, rotation, it is globally close to an affine map. We emphasize that this is highly non-trivial, naively one would expect that the rotation could change 
with $s$. Note that when the right hand-side is zero, i.e., $f$ is a local isometry then we conclude that $u$ is affine recovering the fact that local isometries from $\R^d$ to $\R^d$ are already affine which is the key observation underlying Theorem~\ref{th:isometry}. This rigidity property of almost isometries renders them almost identifiable. This  then generalizes Theorem~\ref{th:isometry}. 

For simplicity,  we mostly present our results for the case where $D=d$
in the main paper. The  additional technicalities  for $d<D$ (which is the main interest of our results) will be discussed in Appendix~\ref{app:undercomplete}. 
For a measure $\P\in \mc{P}$ with support $\Omega\subset\R^d$ we introduce the set of models
\begin{align}\label{eq:def_of_M}
\begin{split}
\mc{M}(f_\ast\P)=\{(g,\Q, \Omega'): g\in \mc{F}(\Omega'), \text{ where}
\\
\text{ $g_\ast \Q = f_\ast \P$, $\mathrm{supp}(\Q)=\Omega'$}\}
\end{split}
\end{align}
that generate the observed distribution of $X=f(S)$. 
We now consider for a fixed $p$ the triple
\begin{align}\label{eq:def_of_g}
\begin{split}
&(g,\Q, \Omega')\in 
\\
&\argmin_{(\bar{g},\bar{Q},\bar{\Omega})\in \mc{M}(f_\ast\P)} \int_{\bar{\Omega}} \mathrm{dist}((D \bar{g})^{-1}(g(s)),\mathrm{SO}(d))^p\,\bar{\Q}(\d s),
\end{split}
\end{align}
i.e., we pick $(g,\Q, \Omega')$ such that $X\sim g_\ast \Q $ so that it generates the observational distribution and in addition make its inverse as isometric as possible.
For $D>d$ we replace the distance to $\SO(d)$ by the distance
to $\SO(T_{g(s)}M, d)$ (see \eqref{eq:def_of_g2} in Appendix~\ref{app:approximate_isometries} for details).
We emphasize that $\mc{M}(f_\ast\P)$ and therefore $g$ only depends on the observational distribution $f_\ast \P$ but not on $f$ directly. Note that we could also minimize a variant of $\Theta(g, \Omega')$
where we integrate with respect to $\Q$.
We remark in passing that we do not prove the existence of a minimizer   of \eqref{eq:def_of_g} (although this should be possible using the direct method of the calculus of variations).  Since the functional is lower bounded,  a minimizing sequence $g_n$ exists
and we can instead approximate the infimum up to arbitrarily small $\eps$
adding an arbitrarily  small additional error term to Theorem~\ref{th:almost_orthogonal} below (see Appendix~\ref{app:approximate_isometries} for details).

\begin{theorem}\label{th:almost_orthogonal}
Suppose we have a latent distribution $\P\in \mc{P}$ 
satisfying Assumption~\ref{as:P1}
with support $\Omega\subset \R^d$ where $\Omega$ is a bounded connected Lipschitz domain.
The observational distribution is given by $X=f(S)\in M\subset \R^D$
where $S\sim \P$ and $f\in \mc{F}(\Omega)$.
Fix a $1<p<\infty$.
Define $(g,\Q, \Omega')$ as in \eqref{eq:def_of_g} (as in 
\eqref{eq:def_of_g2} in
Appendix~\ref{app:undercomplete} in the undercomplete case).
Then there is $A\in \SO(d)$ such that $g^{-1}\circ f(s)= As + h(s)+b$  and $h$ satisfies for $p<d$ and $q=pd/(d-p)$ the bound 
\begin{align}\label{eq:bound_h}
\lVert h\rVert_{\P,q} \leq C_1 \sodist_p(f, \Omega).
\end{align} 
Here $C_1$ is a constant depending on  $d$, $p$, $\Omega$,
and the lower and upper bound on the density of $\P$.
\end{theorem}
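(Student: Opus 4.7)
The plan is to reduce the statement to an application of the rigidity theorem (Theorem~\ref{th:rigidity}) to the composition $u := g^{-1}\circ f\colon\Omega\to\Omega'$. Since $g$ is a diffeomorphism and $g_\ast\Q = f_\ast\P$, the map $u$ is well-defined, $u_\ast\P=\Q$, and $g^{-1}\circ f = u$; so a rigidity-type decomposition $u(s)=As+b+h(s)$ with $h$ small in $L^q(\P)$ is exactly the conclusion sought.

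First I would extract information about $(Dg)^{-1}$ from the minimality defining $(g,\Q,\Omega')$. Because $(f,\P,\Omega)\in\mc{M}(f_\ast\P)$ is an admissible competitor in \eqref{eq:def_of_g}, minimality gives
\[
\int_{\Omega'}\dist\bigl((Dg)^{-1}(t),\SO(d)\bigr)^p\,\Q(\d t)\ \le\ \int_\Omega \dist\bigl((Df)^{-1}(s),\SO(d)\bigr)^p\,\P(\d s)\ \le\ \|\rho_\P\|_\infty\,\sodist_p^p(f,\Omega).
\]
Pulling the integral back to $\Omega$ via $u_\ast\P=\Q$, and then using the density lower bound $\rho_\P\ge\rho_{\min}$, turns this into a Lebesgue $L^p$ bound $\|\dist((Dg)^{-1}(u(\cdot)),\SO(d))\|_{L^p(\Omega)}\le C\sodist_p(f,\Omega)$, with $C$ depending only on the density bounds.

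Next, the chain rule gives $Du(s)=(Dg)^{-1}(u(s))\,Df(s)$. Writing $BA-Q_BQ_A=(B-Q_B)Q_A+Q_B(A-Q_A)+(B-Q_B)(A-Q_A)$ for the closest rotations $Q_A,Q_B\in\SO(d)$ to $A:=Df(s)$ and $B:=(Dg)^{-1}(u(s))$, I obtain the pointwise estimate
\[
\dist(Du,\SO(d))\ \le\ \dist(A,\SO(d))+\dist(B,\SO(d))+\dist(A,\SO(d))\,\dist(B,\SO(d)).
\]
After raising to the $p$-th power and integrating, the two linear contributions are controlled by $\sodist_p^p(f,\Omega)$ via the previous step. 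The hard part will be the multiplicative cross term: both factors are only in $L^p$, so a direct H\"older argument does not close. I would dispose of it by splitting $\Omega$ at the level set $\{\dist(Df,\SO(d))\le 1\}$: on the sub-level set the product is dominated by $\dist(B,\SO(d))^p$, while the super-level set has Lebesgue measure at most $\sodist_p^p(f,\Omega)$ by Chebyshev, which, combined with the $L^\infty$ control of $u$ coming from the compactness of $\Omega$ and $\Omega'$, lets its contribution be absorbed into $C\sodist_p^p(f,\Omega)$; if $\sodist_p(f,\Omega)$ exceeds a fixed threshold, the desired bound becomes vacuous from the diameter estimate alone. The net outcome is $\|\dist(Du,\SO(d))\|_{L^p(\Omega)}\le C\sodist_p(f,\Omega)$.

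Finally, Theorem~\ref{th:rigidity} applied to $u$ on the Lipschitz domain $\Omega$ produces $A\in\SO(d)$ and $b\in\R^d$ with $\|u-A(\cdot)-b\|_{L^q(\Omega)}\le C(\Omega,p)\,\|\dist(Du,\SO(d))\|_{L^p(\Omega)}$. Setting $h(s):=u(s)-As-b=g^{-1}(f(s))-As-b$ and converting from Lebesgue to $\P$ at a cost of $\|\rho_\P\|_\infty^{1/q}$ yields the bound \eqref{eq:bound_h}, with $C_1$ depending only on $d,p,\Omega$, and the density bounds as claimed. The undercomplete case $d<D$ proceeds analogously, replacing $\SO(d)$ by the tangent-space rotation set $\SO(d,T_{f(s)}M)$ and invoking the manifold version of rigidity developed in Appendix~\ref{app:undercomplete}.
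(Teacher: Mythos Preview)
Your overall strategy matches the paper's: bound the non-isometry of $(Dg)^{-1}$ via minimality (comparing against the competitor $f$ itself), control $\dist(Du,\SO(d))$ for $u=g^{-1}\circ f$ through the chain rule, apply Theorem~\ref{th:rigidity}, and pass between Lebesgue measure and $\P$ using the density bounds. The paper bundles the first two steps into Lemma~\ref{le:simple_bound_T2}.

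The gap is your cross-term argument. On the super-level set $\{\dist(Df,\SO(d))>1\}$ you must bound the integrand $\bigl(\dist(Df,\SO(d))\,\dist((Dg)^{-1}(u),\SO(d))\bigr)^p$, which involves \emph{derivatives}; $L^\infty$ control of $u$ itself (even granting compactness of $\Omega'$, which is nowhere established) says nothing about these quantities, and your ``vacuous above a threshold'' fallback likewise needs an a priori bound on $\|h\|_{\P,q}$, which again would require $\Omega'$ to be bounded. The paper sidesteps the level-set splitting entirely by invoking Lemma~\ref{le:prod_sod} (Lemma~\ref{le:prod_sod2} for $d<D$), which asserts the pointwise linear bound $\dist(AB,\SO(d))\le\tfrac32\dist(A,\SO(d))+\tfrac32\dist(B,\SO(d))$; raising to the $p$-th power and integrating then yields $\int_\Omega\dist^p(Du,\SO(d))\,\d s\le C\,\sodist_p^p(f,\Omega)$ with no cross term to handle. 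Your caution is nonetheless justified: the step $|(A-Q)(B-R)|_F\le\sqrt{|A-Q|_F|B-R|_F}$ in the paper's proof of that lemma does not hold in general, and indeed the lemma as stated fails for $A=B=3\,\id$, so the cross term is a genuine difficulty in both arguments.
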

The proof of this result, including the extension to the undercomplete case can be found in Appendix~\ref{app:approximate_isometries}. 
Let us continue our discussion on the meaning of approximate identifiability in the context of this theorem. First, we note that if $\sodist_p(f,\Omega)$ 
is not small we obtain no useful statement, except that our transformed data
is some function of the original data. This is not surprising: We cannot hope to recover any mixing function because the problem of learning $f$ from $f_\ast\P$ is not identifiable (even when assuming $\P$ known). 
Moreover, our statement only applies to one specific unmixing $g$ which is again unavoidable for the same reason. What we show is that if $f$ and $g$ are both close to being locally isometric, then the concatenation $g^{-1}f$ will be close to a linear (even orthogonal) map. Note that $g$ does not appear on the right-hand side of \eqref{eq:bound_h} because we choose $g$ to be the maximally isometric representation of our observations, but we know that this representation is more isometric than any alternative representation, in particular more isometric than $f$.
This allows us to bound the non-linearity $h$ in terms of $f$ only.
While this result does not have the simplicity of a standard identifiability
result it provides a more general viewpoint. Indeed, if 
$\sodist_p(f,\Omega)=0$, i.e., when $f$ is a local isometry we have $h=0$
and we recover $f$ up to a linear transformation
which is the standard identifiability result for local isometries (see Theorem~2 of \citet{horan2021when}). Our result extends this gracefully to functions that are approximate local isometries in the sense that $\sodist_p(f,\Omega)$ is small.
Let us add some remarks about this result.
\begin{remark}\label{rmk:1}
\begin{itemize}
\item The optimization problem for $g$ in \eqref{eq:def_of_g}
is non-convex, and  difficult to optimize in practice.
\item For $D=d$ the introduction of $g$ is not necessary, instead we can directly apply Theorem~\ref{th:rigidity} to  $f$
and just work with the original data $X$ and directly apply Theorem~\ref{th:rigidity} to $X=f(S)$. 
\item  The assumptions that $\Omega$ is connected and that the density of $\P$ is lower bounded are necessary.
In particular the result does not apply to distributions
$\P$ with disconnected support.

 \item There are alternative assumptions that allow us to remove (or bound) the second term in \eqref{eq:dist_f_iso}, e.g., assuming $Df^\top Df >c_1>0$, i.e., the smallest singular value of $Df$ is bounded below is sufficient (see Lemma~\ref{le:inverse_dist}). 
\end{itemize}
\end{remark}


\section{Perturbed linear ICA}\label{sec:perturbed}

In this section, we consider the problem of independent component analysis
where the mixing is a slight perturbation of a linear function. This is a problem of general interest beyond the main setting considered in this paper because in typical real-world applications 
the mixing will only be approximately linear, so understanding the effect of the non-linear part
is important. 
Concretely, we assume that
data is generated by a perturbed linear model
\begin{align}\label{eq:main_model}
x=f(s)= As+\eta h(s)
\end{align}
where $h:\R^d\to \R^d$ is a non-linear function and $\eta\in\R$ is a
small constant. We can assume that $h$ is centered, i.e., 
$\E(h(S))=0$ and that
 $A$ is 
the regression matrix when regressing $X$ on $S$.
This is equivalent to the relation $\mathbb{E}_S(S h(S)^\top)=0_{d\times d}$.

\begin{remark}\label{rem:h_norm}
We decided to express the nonlinear part of the model as $\eta h$ which allows us to consider $\eta\to0$ to shrink the size of the perturbation. As we check carefully in the proofs, all bounds only depend on $h$ through its norm $\lVert h\rVert_{\P,q}$
for some $q$ specified below.
\end{remark}

It is clear that $f$ cannot be identified from the distribution of $X$ as the mixing is nonlinear. 
Instead, we investigate how well  the linear part given by the matrix $A$ can be recovered
and to what degree we can recover the ground truth sources $S$.
Our results show that as $\eta\to 0$  we recover the identifiability results for linear ICA. One issue that creates substantial theoretical problems in this work 
and in general is that there is a gap between the statistical and computational aspects of ICA, in the sense that linear ICA is identifiable for non-Gaussian 
latent variables, however ICA algorithms typically require a stronger non-degeneracy condition on the latent distribution than non-Gaussianity of the latent variables. In the misspecified setting 
we need to define a (not necessarily computationally feasible) algorithm  to pick a mixing function for the observed data, which then results in the same limitations as the conventional algorithms face.

Let us now quickly review how the independent components can be identified.
Most ICA algorithms consider a function $H:S^{d-1}\to\R$ defined by
\begin{align}\label{eq:def_H}
H(w)=\E G\left(w^\top \Sigma_X^{-\frac12} X\right)
\end{align}
where $\Sigma_X$ denotes the covariance matrix of $X$ so that
$\Sigma_X^{-\frac12} X$ is whitened and $G$ is the so-called contrast function.
Then, under a suitable degeneracy assumption, the independent components correspond to an orthogonal  set of local extrema of $H$.

Indeed, as a motivation for the nonlinear case  we discuss in Appendix~\ref{app:proofs_ica}
the well-known calculation that for linear mixing functions (i.e., $\eta=0$) $H$ has a local extremum if $w^\top \Sigma_X^{-\frac12} X=S_i$ for some $i$, i.e., when $A^\top \Sigma_X^{-\frac12}w=e_i$. 
Note that for linear mixing $A$ the relation $\Sigma_X=AA^\top$ holds, and the
vectors
\begin{align}\label{eq:def_barw}
\bar{w_i}=  (AA^\top)^{\frac12}A^{-\top}e_i
\end{align}
thus satisfy $\bar{w}_i^\top \Sigma_{X}^{-\frac12}X=S_i$
and $|\bar{w}_i|=1$.
We also consider the  matrix $\bar{W}=A^{-1}(AA^\top)^{\frac12}$ which has rows $\bar{w}_i=\bar{W}^\top e_i$ and satisfies $\bar{W}\Sigma_{X}^{-\frac12}X=S$ for $\eta=0$. 

The main goal of this section is to show that this general picture remains approximately true in the perturbed setting, i.e., 
under minor regularity assumptions on $G$ there is a matrix $W$ close to $\bar{W}$ such that its rows are local extrema of $H$.
Let us collect the necessary assumptions for our results. 
\begin{assumption}\label{as:ICA1}
The function $G$ is even, three times differentiable, and there are 
constants $C_g$ and $d_g$ such that for $k\leq 3$
\begin{align}\label{eq:boundg}
|G^{(k)}(x)|\leq C_g (1+|x|)^{\max(d_g-k, 0)}
\end{align}
where $G^{(k)}$ denotes the $k$-th derivative of $G$.
\end{assumption}
We will write $g=G'$ from now on following the notation in the field.
Note that commonly used contrast functions like $G=\ln\circ\cosh$ or $G=|\cdot|^4$ satisfy this assumption.
We need some regularity assumption on the source variables.
\begin{assumption}\label{as:ICA3}
Write $q=\max(d_g,3)$.
Assume that the latent sources $S$  satisfy
for some constant $M<\infty$
\begin{align}
\E(|S|^{q})=M
\end{align}
\end{assumption}
Finally, we need a condition that ensures that the contrast function
can single out the latent variable $S_i$ which is also necessary in the linear case.
\begin{assumption}\label{as:ICA4}
The latent variables $S$ satisfy for an $\alpha_i\neq 0$
\begin{align}\label{eq:cond_nondegen}
\E(S_ig(S_i)-g'(S_i)) = \alpha_i.
\end{align}
\end{assumption}
Then we have the following result.
\begin{theorem}\label{th:pert}
Suppose that $X$ and $S$ are random variables satisfying the relation~\eqref{eq:main_model} and the distribution $\P$ of $S$ 
has independent components (see Assumption~\ref{as:P2})
and $\E(S)=0$ and $\E(S_i^2)=1$.
Assume  $G$ is an even 
contrast function that satisfies the pointwise bounds in Assumption~\ref{as:ICA1} and that $S$ and $G$ satisfy  Assumptions~\ref{as:ICA3} and \ref{as:ICA4} for some $1\leq i\leq d$. Assume that $\lVert h\rVert_{\P,q}\leq 1$ 
where $q=\max(d_g,3)$ is defined in Assumption~\ref{as:ICA3}.
 Let $\bar{w}_i$ be defined as in \eqref{eq:def_barw}.
Then, there is an $\eta_0>0$  (depending on $d$, $\alpha_d$, $A$, $M$, $C_g$, and $q$ but not on $h$) such that for $\eta<\eta_0$ there is a local extremum $w_{i}$ of $H$
close to $\bar{w}_i$ satisfying $|\bar{w}_i-w_{i}|=O(\eta)$. 
In addition, $H$ is strictly convex or concave  (depending on the sign of $\alpha_i$) in a neighborhood  $B_{\kappa}(\bar{w}_i)$ with $\kappa$ independent of $\eta<\eta_0$.
\end{theorem}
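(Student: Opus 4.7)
\textbf{Proof proposal for Theorem~\ref{th:pert}.}

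The plan is to treat $H$ as a smooth perturbation of the classical (unperturbed) ICA contrast, reduce the search for local extrema on the sphere to a fixed-point/implicit-function problem near $\bar w_i$, and then estimate everything using the moment bound on $h$ from Remark~\ref{rem:h_norm} together with the polynomial growth of $G,g,g'$ from Assumption~\ref{as:ICA1}.

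First I would change variables to simplify bookkeeping. Write $B = (AA^\top)^{-1/2}A$, which is orthogonal, so that $B^\top\bar w_i = e_i$. Under the substitution $u = B^\top w$ (and at $\eta=0$) one has $w^\top\Sigma_X^{-1/2}X = u^\top S$, so $H_0(u) \defeq \E G(u^\top S)$ has $e_i$ as a critical point on $S^{d-1}$. A direct computation using independence of $S$ gives
\begin{align*}
\nabla H_0(e_i) = \E(S_i g(S_i))\, e_i,\qquad
\nabla^2 H_0(e_i) = \mathrm{Diag}\bigl(\E g'(S_i),\ldots, \E(S_i^2 g'(S_i)), \ldots, \E g'(S_i)\bigr),
\end{align*}
so that the Riemannian Hessian on the tangent space $T_{e_i}S^{d-1}$ has all eigenvalues equal to $\E g'(S_i) - \E(S_i g(S_i)) = -\alpha_i$. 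Assumption~\ref{as:ICA4} then guarantees that $e_i$ is a non-degenerate local extremum of $H_0$ on the sphere, strictly convex or concave according to $\mathrm{sign}(\alpha_i)$, with a uniform lower bound on $|{-\alpha_i}|$.

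Next I would show that the perturbed contrast is a genuine $C^2$-small perturbation. Because $\E(S h(S)^\top)=0$, the covariance expands as $\Sigma_X = AA^\top + \eta^2\,\E(h(S)h(S)^\top)$, and the bound $\lVert h\rVert_{\P,q}\leq 1$ yields $\Sigma_X^{-1/2} = (AA^\top)^{-1/2} + O(\eta^2)$ in operator norm. Writing
\begin{align*}
H(w) \;=\; \E\, G\!\bigl(w^\top\Sigma_X^{-1/2}(AS + \eta h(S))\bigr),
\end{align*}
and Taylor expanding $G$, $g$, $g'$, the pointwise bound \eqref{eq:boundg} combined with $\E|S|^q\leq M$ and H\"older makes every remainder term integrable with an $O(\eta)$ prefactor. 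After transporting back through the change of variables $u = \Sigma_X^{1/2}(AA^\top)^{-1/2}B^\top w$ (which depends smoothly on $\eta$), I get
\begin{align*}
\sup_{w\in B_{\kappa_0}(\bar w_i)}\Bigl(|H(w)-H_0(B^\top w)| + \lVert\nabla H(w)-\nabla H_0(B^\top w)\rVert + \lVert\nabla^2 H(w)-\nabla^2 H_0(B^\top w)\rVert\Bigr) \;\leq\; C\eta,
\end{align*}
for some fixed $\kappa_0>0$ and $C$ depending on $A$, $d$, $M$, $C_g$, $q$ only.

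Finally I would conclude by a quantitative implicit function argument on the sphere. Since the Riemannian Hessian of $H_0$ at $e_i$ has eigenvalues bounded away from zero by $|\alpha_i|$, the $C^2$-closeness implies that on some ball $B_\kappa(\bar w_i)\cap S^{d-1}$ (with $\kappa$ independent of $\eta<\eta_0$ for $\eta_0$ small enough) the Riemannian Hessian of $H$ is still definite of the same sign. On that ball the Riemannian gradient of $H$ is a small perturbation of that of $H_0\circ B^\top$; applying either Banach fixed point to the corresponding Newton iteration, or the quantitative implicit function theorem to the equation $\mathrm{grad}_{S^{d-1}}H(w)=0$, produces a unique critical point $w_i\in B_\kappa(\bar w_i)\cap S^{d-1}$ satisfying $|w_i-\bar w_i|=O(\eta)$, which is automatically a local extremum by the sign-definite Hessian.

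The main obstacle I expect is step two: controlling the $C^2$-norm of $H-H_0\circ B^\top$ uniformly in a fixed neighborhood of $\bar w_i$ rather than just at the single point $\bar w_i$. The polynomial growth in \eqref{eq:boundg} is exactly the right hypothesis to apply H\"older with exponent $q=\max(d_g,3)$, but care is needed because differentiating twice in $w$ brings out factors that grow like $|X|^2$, so the expansions of $\Sigma_X^{-1/2}$ and of $g,g'$ must be carried out to sufficient order to ensure that the moment bound $\E|S|^q\leq M$ together with $\lVert h\rVert_{\P,q}\leq 1$ closes the estimate with an honest $O(\eta)$ prefactor rather than $O(1)$.
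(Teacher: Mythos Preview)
Your outline is sound and would prove the theorem as stated. The route, however, differs from the paper's in a way worth noting.

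The paper does not compare $H$ to $H_0\circ B^\top$ in $C^2$ and then invoke the implicit function theorem. Instead it first reduces to $A=\id$ via a specific $\eta$-dependent chart
\[
T_\eta(\eps)=\frac{\Sigma_X^{1/2}A^{-\top}(\eps,1)^\top}{|\Sigma_X^{1/2}A^{-\top}(\eps,1)^\top|},
\]
under which $H\circ T_\eta$ becomes exactly the ``linearised'' contrast $\bar H(\eps)=\E\,G\!\big((\eps,1)^\top(S+\eta A^{-1}h)/\sigma_{(\eps,1)}\big)$. It then carries out an explicit third-order Taylor expansion of $\bar H$ in $(\eps,\eta)$, obtaining $\bar H(\eps)=\bar H(0)+\eta\,v\cdot\eps-\tfrac{\alpha_i}{2}|\eps|^2+O(|\eps|^3+\eta^3)$ with all error terms controlled by the $q$-th moment via the same H\"older bookkeeping you identify. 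The extremum is then located not by an implicit-function/fixed-point step but by a direct comparison of $\bar H$ with the explicit parabola $f(\eps)=\eta v\cdot\eps-\tfrac{\alpha_i}{2}|\eps|^2$ on a ball of radius $\sim\eta$; convexity on the sphere is finally transferred through the chart using the standard Hessian formula with Christoffel symbols.

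What each approach buys: your abstract $C^2$-perturbation plus implicit-function argument is shorter and more conceptual, and gives $|w_i-\bar w_i|=O(\eta)$ directly. The paper's explicit expansion is heavier but produces strictly more: it identifies the first-order shift $w_i=\Sigma_X^{1/2}A^{-\top}(\eta\alpha_i^{-1}v,1)^\top+O(\eta^2)$ (the content of the Remark following the theorem), which your Newton/IFT step would not yield without an additional first-order computation. Your flagged obstacle---the uniform $C^2$ estimate in a full neighbourhood---is exactly where the paper spends its effort, and the moment count closes with $q=\max(d_g,3)$ in both approaches because the highest-order remainder involves $|S|^2\cdot|g''(\xi)|\cdot|\Delta|$ with $|g''|\lesssim(1+|\cdot|)^{\max(d_g-3,0)}$.
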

\begin{remark}
In fact, we can characterize $w_{d}$ a bit more precisely by
\begin{align}\label{eq:tilde_wd}
w_{d}=\Sigma_X^{\frac12}A^{-\top}\begin{pmatrix}
\eta\alpha_d^{-1}v \\ 1 
\end{pmatrix}+O(\eta^{2})=\tilde{w}_d+O(\eta^{2})
\end{align}
where $v_i = \E((A^{-1}h)_d(S)g'(S_d)S_i)
+ \E((A^{-1}h)_i g(S_d))$ for $i<d$ and $\alpha_d=\E(S_dg(S_d)-g'(S_d)) $ as in \eqref{eq:cond_nondegen} and similar expressions hold for 
$i<d$.
\end{remark}
The proof can be found in Appendix~\ref{app:proofs_ica}.
We can apply the result to all coordinates simultaneously and obtain the following slight generalization.
\begin{theorem}\label{th:pert_matrix}
Under the same assumptions as in Theorem~\ref{th:pert} where we assume
that the Assumption~\ref{as:ICA4} holds for all $S_i$ with $1\leq i\leq d$ there is a matrix $W$ whose rows $w_i=W^\top e_i$ satisfy $|w_i|=1$,
$w_i$ is a local extremum of $H$,  $A^\top \Sigma_X^{-\frac12}w_i=e_i
+O(\eta)$, and $|W-\bar{W}|=O(\eta)$.
\end{theorem}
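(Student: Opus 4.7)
The plan is to apply Theorem~\ref{th:pert} componentwise and then assemble the resulting extrema into the matrix $W$; the bound $|W-\bar{W}|=O(\eta)$ and the relation $A^\top\Sigma_X^{-\frac{1}{2}}w_i = e_i+O(\eta)$ will drop out of the componentwise estimate together with elementary perturbation theory for $\Sigma_X$.

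First, for each $i\in\{1,\ldots,d\}$, Assumption~\ref{as:ICA4} supplies some $\alpha_i\neq 0$, and Theorem~\ref{th:pert} applied to that index produces a threshold $\eta_{0,i}>0$ such that for $\eta<\eta_{0,i}$ there is a local extremum $w_i$ of $H$ with $|w_i|=1$ and $|w_i-\bar{w}_i|=O(\eta)$. By Remark~\ref{rem:h_norm} the implicit constants depend on $h$ only through $\lVert h\rVert_{\P,q}\leq 1$, so they are uniform in $h$. Setting $\eta_0:=\min_{i}\eta_{0,i}$ makes the construction simultaneous across all coordinates; since Theorem~\ref{th:pert} also guarantees strict convexity or concavity of $H$ on an $\eta$-independent ball $B_\kappa(\bar{w}_i)$, the extremum $w_i$ in that ball is unique and hence well-defined. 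Stack the vectors $w_i$ as rows into the matrix $W\in\R^{d\times d}$.

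The bound $|W-\bar{W}|=O(\eta)$ is then immediate from the rowwise estimates $|w_i-\bar{w}_i|=O(\eta)$ via equivalence of matrix norms on $\R^{d\times d}$, with a prefactor depending only on $d$. For the final relation we note that $\E(S h(S)^\top)=0$ and $\E(h(S))=0$ give
\begin{align*}
\Sigma_X = \Cov(AS+\eta h(S)) = AA^\top + \eta^2 \Cov(h(S)) = AA^\top + O(\eta^2),
\end{align*}
so that $\Sigma_X^{-\frac{1}{2}} = (AA^\top)^{-\frac{1}{2}} + O(\eta^2)$ since the map $M\mapsto M^{-\frac{1}{2}}$ is smooth on a neighborhood of the fixed positive definite matrix $AA^\top$. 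Definition~\eqref{eq:def_barw} of $\bar{w}_i$ yields $A^\top(AA^\top)^{-\frac{1}{2}}\bar{w}_i = A^\top A^{-\top}e_i = e_i$, and combining gives
\begin{align*}
A^\top\Sigma_X^{-\frac{1}{2}}w_i
&= A^\top\Sigma_X^{-\frac{1}{2}}\bar{w}_i + A^\top\Sigma_X^{-\frac{1}{2}}(w_i-\bar{w}_i) \\
&= e_i + O(\eta^2) + O(\eta) = e_i + O(\eta),
\end{align*}
uniformly in $\eta<\eta_0$ since all remaining factors are bounded.

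The main obstacle is not conceptual: the analytic heavy lifting already sits in Theorem~\ref{th:pert}. The one delicate point is to keep track of how the threshold $\eta_{0,i}$ and the implicit constants depend on the index $i$, so that taking a minimum over $i$ still yields a single positive $\eta_0$ and uniform $O(\eta)$ bounds. This is precisely where Remark~\ref{rem:h_norm} and the $\eta$-independent radius $\kappa$ from Theorem~\ref{th:pert} matter: without uniform control as $i$ ranges over $\{1,\ldots,d\}$, the componentwise $O(\eta)$ bounds could in principle degenerate before assembly into $W$.
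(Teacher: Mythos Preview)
Your proposal is correct and follows essentially the same route as the paper: apply Theorem~\ref{th:pert} componentwise (the paper remarks this amounts to permuting coordinates so that index $i$ plays the role of $d$), take the minimum threshold, and stack the resulting extrema as rows of $W$. Your argument is in fact more explicit than the paper's: you spell out the perturbation $\Sigma_X^{-1/2}=(AA^\top)^{-1/2}+O(\eta^2)$ to derive $A^\top\Sigma_X^{-1/2}w_i=e_i+O(\eta)$ from $|w_i-\bar{w}_i|=O(\eta)$, whereas the paper simply asserts this relation (it is implicitly contained in the proof of Theorem~\ref{th:pert}).
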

The main message of these results is that all standard ICA algorithms based on a contrast function $G$ can extract $w_i$ approximately in this setting, and the resulting matrix $W$ is close to $\bar{W}$ the ground truth unmixing of the linear part.  Let us clarify this through the example of gradient-based algorithms.
\begin{corollary}\label{co:convergence}
Under the same assumptions as in Theorem~\ref{th:pert} and for $\eta<\eta_0$
gradient ascent (if $\alpha_i > 0)$ or descent (if $\alpha_i < 0$)
with sufficiently small step size will converge to $w_i$ locally.
\end{corollary}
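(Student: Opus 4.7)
The plan is to reduce Corollary~\ref{co:convergence} to a standard local convergence statement for gradient methods on smooth functions with a non-degenerate critical point, using the uniform (in $\eta$) convexity/concavity window provided by Theorem~\ref{th:pert}. Since $H\colon S^{d-1}\to\R$ is defined on the sphere, "gradient ascent/descent" has to be interpreted as Riemannian gradient flow, e.g.\ via projected Euclidean gradient followed by normalization (retraction). So the first step is to set this up: define the Riemannian gradient $\mathrm{grad}\,H(w)=(\id-ww^\top)\nabla H(w)$ and the iterate $w^{(k+1)}=\Pi_{S^{d-1}}(w^{(k)}\pm \tau\,\mathrm{grad}\,H(w^{(k)}))$, with $\Pi_{S^{d-1}}(v)=v/|v|$.

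Next I would collect the ingredients that Theorem~\ref{th:pert} already hands us. First, $w_i\in B_{\kappa}(\bar{w}_i)$ since $|w_i-\bar{w}_i|=O(\eta)$ and $\kappa$ is independent of $\eta<\eta_0$. Second, $H$ is strictly convex (resp.\ concave) on $B_{\kappa}(\bar{w}_i)\cap S^{d-1}$, which in the Riemannian sense means the spherical Hessian of $H$ satisfies $\mathrm{Hess}\,H\succeq \mu\,\id$ (resp.\ $\preceq-\mu\,\id$) on that neighborhood for some $\mu>0$ that is bounded away from zero uniformly in $\eta<\eta_0$. Third, under Assumption~\ref{as:ICA1} the contrast $G$ is $C^3$ with polynomial growth and $S$ has matching moments (Assumption~\ref{as:ICA3}), so $\nabla H$ and $\nabla^2 H$ are bounded on $S^{d-1}$, yielding a Lipschitz constant $L$ for $\mathrm{grad}\,H$ that is likewise uniform in $\eta<\eta_0$.

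The main step is then a textbook Riemannian strong-convexity argument. Pick any $\kappa'<\kappa$ and any step size $\tau\in(0,1/L)$. For $w^{(k)}\in B_{\kappa'}(w_i)\cap S^{d-1}$ the standard descent lemma on the sphere (using the retraction $\Pi_{S^{d-1}}$, whose second-order error is controlled since we are bounded away from the origin) yields
\begin{equation*}
H(w^{(k+1)})-H(w_i)\;\geq\;(1-\tau\mu)\bigl(H(w^{(k)})-H(w_i)\bigr)
\end{equation*}
in the ascent case (with the obvious sign flip for descent). Combining this linear decrease of the suboptimality gap with the Polyak--{\L}ojasiewicz-type inequality implied by strong convexity, $|w^{(k)}-w_i|^2\leq \frac{2}{\mu}(H(w_i)-H(w^{(k)}))$, gives both that the iterates stay inside $B_{\kappa'}(w_i)$ for all $k$ (choose $\kappa'$ so that the initial sublevel set is contained in $B_\kappa$) and that $w^{(k)}\to w_i$ at a linear rate. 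Choosing the basin of attraction to be any ball strictly inside $B_\kappa(\bar{w}_i)$ suffices for "converges locally."

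The only subtle point, and hence the main obstacle, is the spherical constraint: one must verify that the retraction error introduced by $\Pi_{S^{d-1}}$ does not destroy the descent inequality. This is handled by noting that on the unit sphere $\Pi_{S^{d-1}}$ has bounded second fundamental form, so the retraction error is $O(\tau^2|\mathrm{grad}\,H|^2)$, which can be absorbed into the $(1-\tau\mu)$ contraction by shrinking $\tau$ (independently of $\eta<\eta_0$ because all relevant constants $\mu, L, \kappa$ are). With this adjustment the above argument goes through verbatim and yields local convergence to $w_i$, completing the proof.
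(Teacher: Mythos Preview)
Your proposal is correct and follows the same idea as the paper: the paper's proof simply invokes the well-known fact that (Riemannian) gradient descent converges locally for a strictly convex function with sufficiently small step size, citing \cite{boumal_2023} for the manifold case, whereas you spell out the standard strong-convexity/PL contraction argument together with the retraction-error bookkeeping. The extra detail you provide (uniform $\mu$, $L$, $\kappa$ in $\eta<\eta_0$; absorbing the $O(\tau^2)$ retraction error) is sound and makes explicit what the paper leaves to the reference.
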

This result is a direct consequence of Theorem~\ref{th:pert} and convergence of gradient descent  for strictly convex functions. We provide some more details in Appendix~\ref{app:proofs_ica}.  
The results above established that we can approximately find $A$
as $\eta\to 0$. Next we show that we do not only approximately recover the 
linear mixing, but we can also approximately recover the sources $S$.
\begin{theorem}\label{th:mcc1}
Consider the same assumptions as in Theorem~\ref{th:pert_matrix} and 
let $W$ be (for $\eta<\eta_0$) the matrix as in the statement of Theorem~\ref{th:pert_matrix}.
Define $\hat{S}=W\Sigma_X^{-\frac12}X$. Then there is a constant $C_2$ depending 
on $d$, $\alpha_i$, $A$, $M$, $C_g$, and $q$  such that 
\begin{align}
\mcc(\hat{S},S)\geq 1-C_2\eta^2.
\end{align}
\end{theorem}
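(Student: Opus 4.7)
The plan is to bound the MCC below by inserting the identity permutation in the definition \eqref{eq:MCC_def}, which reduces the problem to showing $|\rho(\hat{S}_i,S_i)| \geq 1 - C\eta^2$ for each coordinate $i$. The structural fact I will exploit is that although the pointwise error $\hat{S}_i - S_i$ is of order $\eta$, its $L^2$ norm is also $O(\eta)$, so its \emph{variance} is $O(\eta^2)$, and Pearson correlation is a quadratic functional in the error. More precisely, writing $\epsilon_i = \hat{S}_i - S_i$, $a = \Cov(\epsilon_i, S_i)$, $b = \Var(\epsilon_i)$, Cauchy--Schwarz gives $a^2 \leq b$ (using $\Var(S_i) = 1$), and expanding
\begin{align*}
\rho(\hat{S}_i, S_i)^2 = \frac{(1+a)^2}{1 + 2a + b} = 1 - \frac{b - a^2}{1 + 2a + b},
\end{align*}
so once I show $b = O(\eta^2)$ (which forces $|a| \leq \sqrt{b} = O(\eta)$ and keeps the denominator bounded away from zero for $\eta$ small), I get $|\rho(\hat{S}_i,S_i)| \geq 1 - C\eta^2$ and the theorem follows after averaging over $i$.

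The main computation is therefore the $L^2$ bound on $\hat{S}_i - S_i$. Substituting the model \eqref{eq:main_model} gives
\begin{align*}
\hat{S}_i - S_i = \bigl(A^\top \Sigma_X^{-\frac12} w_i - e_i\bigr)^{\!\top}\! S \;+\; \eta\, w_i^\top \Sigma_X^{-\frac12} h(S).
\end{align*}
Theorem~\ref{th:pert_matrix} supplies $A^\top \Sigma_X^{-\frac12} w_i - e_i = O(\eta)$, so by independence of components and $\E[S_j^2] = 1$ the first term has $L^2$ norm $O(\eta)$. For the second term, the assumptions $\E[S] = 0$, $\E[S h(S)^\top] = 0$, and $\lVert h\rVert_{\P,q} \leq 1$ yield $\Sigma_X = AA^\top + \eta^2 \E[h h^\top]$, which is a small perturbation of $AA^\top$; hence $\Sigma_X^{-\frac12}$ has operator norm bounded independently of $\eta$ (for $\eta < \eta_0$). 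Combined with $|w_i| = 1$ and $\lVert h\rVert_{\P,2} \leq \lVert h\rVert_{\P,q} \leq 1$, the second term also has $L^2$ norm $O(\eta)$. Squaring gives $\Var(\hat{S}_i - S_i) \leq \E[(\hat{S}_i - S_i)^2] \leq C \eta^2$ with $C$ depending only on the quantities listed in the statement.

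Combining the two steps, summing over $i$, and taking the identity permutation in \eqref{eq:MCC_def} gives $\mcc(\hat{S}, S) \geq 1 - C_2 \eta^2$. The proof is essentially a bookkeeping exercise on top of Theorem~\ref{th:pert_matrix}; the only subtle point is the observation that quadratic cancellation in $\rho^2$ upgrades the linear-in-$\eta$ pointwise error to a quadratic-in-$\eta$ MCC bound, which is ultimately why the theorem yields an $\eta^2$ rather than an $\eta$ rate. I do not expect any genuine obstacle beyond tracking the dependence of $C_2$ on $d$, $\alpha_i$, $A$, $M$, $C_g$, and $q$ through the constants implicit in Theorem~\ref{th:pert_matrix} and in the bound on $\lVert \Sigma_X^{-1/2}\rVert$.
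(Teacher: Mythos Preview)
Your proof is correct and follows essentially the same route as the paper. The paper packages the correlation computation in a standalone Lemma~\ref{le:mcc2} (checking a near-permutation condition on $W\Sigma_X^{-1/2}A$ and a smallness condition on $\eta W\Sigma_X^{-1/2}h$ separately), whereas you inline the same argument via the identity $\rho^2 = 1 - (b-a^2)/(1+2a+b)$ for $\epsilon_i = \hat{S}_i - S_i$; both approaches hinge on the same inputs (Theorem~\ref{th:pert_matrix}, $\E[Sh(S)^\top]=0$, boundedness of $\Sigma_X^{-1/2}$) and the same quadratic mechanism turning an $O(\eta)$ $L^2$ error into an $O(\eta^2)$ correlation deficit.
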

The proof of this result can be found in Appendix~\ref{app:proofs_ica}. It is based on a general bound for the $\mcc$ stated in  Lemma~\ref{le:mcc2}
in Appendix~\ref{app:mcc}.
We provide some experimental evidence that the derived bounds have the optimal scaling in $\eta$ in Appendix~\ref{sec:experiment}.

\begin{remark}\label{rmk:unsatisfactory}
The results might appear a bit unsatisfactory because we only show existence of local extrema of $H$ that allow us to recover the sources approximately 
and local convergence of gradient descent or ascent to these extrema.
However, we do not claim that these are the only extrema or there is an efficient algorithm to find all extrema in polynomial time. This problem is, however, not a feature of the perturbed setting, but the same issue arises already for linear ICA \citep{hyvarinen1999fast}. Note that  for the special case of the kurtosis-based contrast function 
$G(s)=s^4-3$ with linear mixing it can be shown (under additional assumptions) that the vectors $\bar{w}_i$ are the only local maxima (or minima) of $H$
and they can be found in polynomial time as shown by \citet{arora2012provable}.
We expect that this result generalizes to our perturbed setting.
\end{remark}

Finally, we combine the two parts of the analysis to show approximate identifiability of ICA with approximately locally isometric mixing.
To simplify the analysis we assume that the support of $\P$ is compact.
Then we can combine our previous results to obtain the following theorem.
\begin{theorem}\label{th:local_ica_compact}
    Assume that $S\sim \P$ with bounded and connected support $\Omega$ satisfies Assumptions~\ref{as:P1} and \ref{as:P2}.
    Suppose observations are given by $X=f(S)\in M\subset \R^D$ for some $f\in \mc{F}(\Omega)$. 
    Assume that $\P$ satisfies Assumption~\ref{as:ICA1}, \ref{as:ICA3}, and
    \ref{as:ICA4} for some contrast function $G$. Let $q=\max(d_g, 3)$
    and $p=dq/(d+q)$.
    Define $(g,\Q,\Omega')$ as in 
    \eqref{eq:def_of_g} (for $d=D$ and as in 
    \eqref{eq:def_of_g2} in Appendix~\ref{app:undercomplete} for $d<D$) and shift $g$ such that
    $\tilde{X}=g(X)$ is centered. Define $\hat{S}=W\Sigma_{\tilde{X}
    }^{-\frac12}\tilde{X}$
    as in Theorem~\ref{th:pert_matrix} for observations $\tilde{X}$. 
    Then the following bound holds for $\sodist_p(f, \Omega)$
    sufficiently small and some $C_3>0$ 
    \begin{align}
        \mcc(\hat{S},S)\geq 1 - C_3\sodist_p^2(f, \Omega).
    \end{align}
    Here $C_3$ depends on $C_1$
    and $C_2$ from Theorem~\ref{th:almost_orthogonal}
    and \ref{th:mcc1} and the bounds on the density of $\P$.
\end{theorem}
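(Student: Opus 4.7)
The plan is to chain \cref{th:almost_orthogonal} and \cref{th:mcc1}: the first converts an approximate local isometry into a linear-plus-small-error model, and the second handles perturbed linear ICA on that residual system. The exponent $p = dq/(d+q)$ with $q = \max(d_g,3)$ in the statement is chosen precisely so that these two pieces fit together: a direct computation gives $pd/(d-p) = q$, so the Sobolev-type exponent appearing in \eqref{eq:bound_h} matches the $L^q(\P)$ integrability imposed by \cref{as:ICA3} (cf.\ \cref{rem:h_norm}), and $p < d$ is automatic.

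Executing the plan, I would first invoke \cref{th:almost_orthogonal} (or its undercomplete variant built on \eqref{eq:def_of_g2}) with this $p$. This yields $(g,\Q,\Omega')\in\mc{M}(f_\ast\P)$, an $A\in\SO(d)$, a shift $b\in\R^d$, and a residual $h_0$ with
\begin{align*}
g^{-1}\circ f(s) = As + h_0(s) + b, \qquad \lVert h_0\rVert_{\P,q} \leq C_1\,\sodist_p(f,\Omega).
\end{align*}
Shifting $g$ by $b$ (the centering step in the theorem statement) and setting $\tilde X = g^{-1}(X) - b$ gives $\tilde X = AS + \tilde h(S)$ with $\E\,\tilde h(S) = 0$. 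Before appealing to \cref{th:mcc1}, which requires also $\E(S\,\tilde h(S)^\top) = 0$, I would absorb the linear-in-$S$ part of $\tilde h$ into $A$: redefine $A \leftarrow A + \E(\tilde h(S) S^\top)\Sigma_S^{-1}$ and $\tilde h(s) \leftarrow \tilde h(s) - \E(\tilde h(S) S^\top)\Sigma_S^{-1}s$. By H\"older's inequality the correction has operator norm $O(\lVert\tilde h\rVert_{\P,q})$, so the new $A$ remains invertible and $O(\sodist_p(f,\Omega))$-close to $\SO(d)$, while $\lVert\tilde h\rVert_{\P,q}$ is preserved up to a universal factor. Setting $\eta = \lVert\tilde h\rVert_{\P,q}$ and normalising brings $\tilde X$ into the form \eqref{eq:main_model}, and \cref{th:mcc1} delivers $\mcc(\hat S, S) \geq 1 - C_2\eta^2 \geq 1 - C_3\,\sodist_p^2(f,\Omega)$ whenever $\eta < \eta_0$, which holds for $\sodist_p(f,\Omega)$ small enough.

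The main technical obstacle is verifying that the constants $\eta_0$ and $C_2$ from \cref{th:pert,th:mcc1} can be taken uniformly over the $A$ produced by \cref{th:almost_orthogonal}. Inspection of those results shows that their dependence on $A$ enters only through its operator norm and the condition number of $\Sigma_{\tilde X}$; since after the adjustment above $A$ lies in a bounded, well-conditioned neighbourhood of $\SO(d)$, uniformity is routine. A subtler point is that the measure $\Q$ minimising \eqref{eq:def_of_g} need not itself have independent components, but this is inconsequential: \cref{th:mcc1} is applied treating the genuine sources $S\sim\P$ (which do satisfy \cref{as:P2}) as the independent components of the perturbed linear model $\tilde X = AS + \tilde h(S)$, and the $\mcc$ is measured against $S$. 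Finally, in the undercomplete case $d < D$ the intrinsic geometry of the $d$-dimensional image of $g$ is used to define $g^{-1}$ and the corresponding distance in \eqref{eq:def_of_g2}, but once $\tilde X = g^{-1}(X)$ is formed the argument is identical and the constant in the undercomplete version of \eqref{eq:bound_h} from Appendix~\ref{app:undercomplete} takes the role of $C_1$.
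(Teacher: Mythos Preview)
Your proposal is correct and follows essentially the same route as the paper: invoke \cref{th:almost_orthogonal} to obtain a linear-plus-small-error decomposition, center and regress out the linear-in-$S$ part of the residual so that $\E(S\tilde h(S)^\top)=0$ (the paper packages exactly this step as a separate auxiliary lemma), then normalise and feed the result into \cref{th:pert_matrix}/\cref{th:mcc1}. The paper's version differs only in bookkeeping---it isolates the centering-and-regression step as a standalone lemma and tracks $\sigma_{\min}(A')$ explicitly---but the substance, including the choice of $p$ so that the Sobolev exponent matches $q$, is identical.
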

The informal version of this theorem is that by learning a maximally locally isometric transformation of our data followed by running a linear ICA-algorithm we can approximately recover the true sources 
if the ground truth mixing is close to a local isometry. 
A proof of this theorem can be found in Appendix~\ref{app:approx_iso_ica}.
Let us add a few remarks about this result.
\begin{itemize}
    \item For local isometries ($\sodist_p(f,\Omega)=0$) we essentially recover
    Theorem~2 in \cite{horan2021when}.
    \item The assumption that $\P$ has bounded support combined with the independence assumption implies that $\P$ is supported on a cuboid. 
    It was shown in \citep{ahuja2022interventional,Rothetal23} that then linear transformations 
    of $\P$ are identifiable (up to scaling and permutation) and we expect
    that this can be generalized to perturbed linear maps, but it is non-trivial to prove this and to obtain a quantitative statement as we obtain above.
    \item In Appendix~\ref{app:approx_iso_ica} we also show how to relax the condition that $\P$ has compact support, which comes at a price of additional technical difficulties.
\end{itemize}

\section{Conclusion}\label{sec:conclusion}
In this paper, we initialized the theoretical investigation of robustness 
results for representation learning with misspecified models.
While we showed robustness results for linear ICA and for almost isometric\
embeddings 
there are several closely related questions, of which we name a few.
First, it would be of interest to show for the perturbed linear ICA model finite sample guarantees, local convergence guarantees  for algorithms not relying on gradient descent, e.g., for the fast-ICA algorithm, and analyze the effects of additive noise. 
Secondly, one limitation of this work is that we assume that the observational distribution $f_\ast \P$ 
and the manifold it is supported on
are exactly known in the undercomplete case. Note that
by the Nash embedding theorem \citep{nash1954c1} we can approximate uniformly any submanifold by a sequence of local isometries. Thus, here additional regularization by, e.g., penalizing higher order derivatives is necessary to extend robustness results to, e.g., finite sample size or noisy settings. 
The  broader questions that this work motivates is what class  of
identifiability results are brittle and which are robust to misspecification (or finite sample bias).
 A deeper understanding of this landscape (and its intersection 
with real-world datasets) 
 will help to understand which identifiability results give rise to
useful learning signals, and thus provide some guidance for the  
  development of novel algorithms for (causal) representation learning.

 \section*{Acknowledgements}
 This work was supported by the Tübingen AI Center.

\section*{Impact statement}
This paper presents work whose goal is to advance the field of Machine Learning. There are many potential societal consequences of our work, none which we feel must be specifically highlighted here.


\bibliography{ml}
\bibliographystyle{icml2024}

\appendix
\newpage
\onecolumn

\begin{center}
\Large{\textbf{Supplementary Material}}
\end{center}
Here we collect additional results, details, and proofs for the main paper. This appendix is structured as follows. We first introduce some notation and definitions used throughout the paper in Appendix~\ref{app:notation}
then we
 state and prove a lemma that can be used to bound the MCC in our setting
in Appendix~\ref{app:mcc}. We
gather some elementary results about the distance of matrices to the orthogonal group in Appendix~\ref{app:distances}, and  we extend these results to mappings to submanifolds  $M\subset \R^D$
and define our measure of non-isometry also in  this setting
in Appendix~\ref{app:undercomplete}.
In Appendix~\ref{app:approximate_isometries}
we prove our results on approximate linear identifiability for approximate local isometries, followed by our
careful analysis of perturbed linear ICA in Appendix~\ref{app:proofs_ica}. The proof of our combination of the two results can be found in Appendix~\ref{app:approx_iso_ica}.
In Appendix~\ref{app:illustration}
we provide the missing details for the experimental illustration of non-robustness and in Appendix~\ref{sec:experiment} we 
confirm empirically the theoretical convergence rates which we derived for perturbed linear ICA.
Finally, in Appendix~\ref{app:random_functions} we show that a certain class of random functions becomes increasingly isometric as the ambient dimension grows.


\section{Notation and Mathematical Definitions}
\label{app:notation}
In the paper we use notation from different fields. Let us collect a few definitions and references to the literature.
\paragraph{Linear Algebra} We denote by $|A|=|A|_F$ the Frobenius norm on matrices, i.e., $|A|^2=\tr AA^\top=\sum_{i,j}|A_{ij}|^2$.
We will frequently use unitary invariance 
$|X|_F=|QX|_F$ for $Q\in \SO(d)$ of the Frobenius norm.
The symbol $A^{-\top}=(A^{-1})^\top$ denotes the inverse of the transpose.
\paragraph{Probability Theory} We denote by $f_\ast \P$ the pushforward 
of a measure $\P$ along $f$. It is defined by 
$f_\ast\P(A)=\P(f^{-1}A)$ and the distribution of $f(Z)$ is $f_\ast \P$ if
$Z\sim \P$. We denote by $\lVert f\rVert_{\P,p}=\left(\E_\P |f||^p\right)^{1/p}$
the $L^p$ norm.

\paragraph{Sobolev spaces} 
For differentiable functions $u:\Omega\to \R$ we define the Sobolev norm
\begin{align}
\lVert u\rVert_{W^{1,p}(\Omega)}^p=
\int_\Omega |u|^p(s)+|Du|^p(s) \, \d s.
\end{align}
The Sobolev embedding $W^{1,p}(\Omega)\hookrightarrow L^q(\Omega)$ is continuous
for $q=pd/(d-p)$. For details and a proper definition of Sobolev spaces we refer to \citet{adams2003sobolev}. We denote by $\lVert f\rVert_{p}$
the usual $L^p$ norm with respect to the Lebesgue measure.

\paragraph{Differential Geometry} We consider mixing functions $f:\R^d\to M\subset \R^D$ where $M$ is an embedded submanifold. In this case, we need some 
notions from differential geometry such as the tangent space $T_pM$ whose definitions can be found in, e.g., \citet{lee2013introduction}. 

\section{Bounding $\mcc$ for almost Permutations}\label{app:mcc}
Here, we provide a Lemma that allows us to control the MCC between two data representations when the transformation is perturbed linear and the linear part is close to a permutation matrix.
\begin{lemma}\label{le:mcc}
Let $\P$ be a centered probability measure. Assume that for $Z\sim \P$ the bound
\begin{align}\label{eq:mcc_ass1}
\max_{i, j} \frac{\lVert Z_i\rVert_{\P,2}}{\lVert Z_j\rVert_{\P,2}}
\leq c_1
\end{align}
holds for some constant $c_1$.
Let $T(z)= Az + h(z)$ where $A$ is a matrix such that
there is a permutation $\rho$ and a constant $c_2$ with 
\begin{align}\label{eq:mcc_ass2}
\max_i \frac{1}{|A_{i,\rho(i)}|}\sum_{j\neq \rho(i)} |A_{i,j}|\leq c_2
\end{align}
and 
$h$ has the property that for some constant $c_3$
\begin{align}\label{eq:mcc_ass3}
\frac{\lVert h_{i} \rVert_{\P,2} }{|A_{i,\rho(i)}|\lVert Z_{\rho(i)}\rVert_{\P,2}}\leq c_3.
\end{align}
Then the bound 
\begin{align}
\mcc(T(Z), Z) \geq 1-2c_3-2c_1c_2
\end{align}
holds.
\end{lemma}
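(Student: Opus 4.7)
The plan is to establish the bound $|\rho(T(Z)_i, Z_{\rho(i)})| \geq 1 - 2c_3 - 2c_1 c_2$ for the specific permutation $\rho$ that appears in assumption~\eqref{eq:mcc_ass2}, and then take this permutation in the $\max$ defining $\mcc$. Since $\mcc$ is an average over coordinates of absolute correlations, the uniform per-coordinate bound will directly give the claimed inequality.

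For a fixed $i$, I would decompose
\begin{equation*}
T(Z)_i = A_{i,\rho(i)} Z_{\rho(i)} + R_i, \qquad R_i \defeq \sum_{j\neq \rho(i)} A_{ij} Z_j + h_i(Z),
\end{equation*}
and view correlation geometrically as a cosine in the Hilbert space $L^2(\P)$. Writing $\tilde U = U - \E U$ for the centering operation, this gives $\widetilde{T(Z)_i} = A_{i,\rho(i)} Z_{\rho(i)} + \tilde R_i$ since $Z_{\rho(i)}$ is already centered ($\P$ is centered). The key elementary fact I will use is: if $u,v$ are elements of an inner product space with $\lVert u - v\rVert \leq \eps \lVert v\rVert$ and $\eps \in [0,1)$, then
\begin{equation*}
\frac{|\langle u, v\rangle|}{\lVert u\rVert \lVert v\rVert} \;\geq\; \frac{(1-\eps)\lVert v\rVert^2}{(1+\eps)\lVert v\rVert^2} \;\geq\; 1 - 2\eps,
\end{equation*}
proved by applying Cauchy--Schwarz to $\langle u,v\rangle = \lVert v\rVert^2 + \langle u-v,v\rangle$ and the triangle inequality to $\lVert u\rVert$. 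I would apply this with $u = \widetilde{T(Z)_i}$ and $v = A_{i,\rho(i)} Z_{\rho(i)}$, so that $u - v = \tilde R_i$.

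It then remains to estimate $\lVert \tilde R_i\rVert_{\P,2}$. The triangle inequality and $\lVert h_i - \E h_i\rVert_{\P,2}\leq \lVert h_i\rVert_{\P,2}$ yield
\begin{equation*}
\lVert \tilde R_i\rVert_{\P,2} \;\leq\; \sum_{j\neq \rho(i)} |A_{ij}|\, \lVert Z_j\rVert_{\P,2} + \lVert h_i\rVert_{\P,2}.
\end{equation*}
Assumption~\eqref{eq:mcc_ass1} gives $\lVert Z_j\rVert_{\P,2} \leq c_1 \lVert Z_{\rho(i)}\rVert_{\P,2}$, assumption~\eqref{eq:mcc_ass2} gives $\sum_{j\neq \rho(i)} |A_{ij}|\leq c_2 |A_{i,\rho(i)}|$, and assumption~\eqref{eq:mcc_ass3} bounds the $h_i$ term by $c_3 |A_{i,\rho(i)}|\lVert Z_{\rho(i)}\rVert_{\P,2}$. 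Combining,
\begin{equation*}
\frac{\lVert \tilde R_i\rVert_{\P,2}}{|A_{i,\rho(i)}|\,\lVert Z_{\rho(i)}\rVert_{\P,2}} \;\leq\; c_1 c_2 + c_3.
\end{equation*}
Plugging $\eps = c_1 c_2 + c_3$ into the cosine inequality gives $|\rho(T(Z)_i, Z_{\rho(i)})|\geq 1 - 2(c_1 c_2 + c_3)$, and averaging over $i$ (with the permutation $\rho$) lower-bounds $\mcc(T(Z),Z)$ by the same quantity.

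The proof is essentially a clean bookkeeping exercise; the only minor subtlety is handling the centering of $T(Z)_i$ and $h_i(Z)$, but this is automatic because the correlation coefficient centers its arguments and centering can only decrease the $L^2$ norm. The bound is nontrivial only when $c_1 c_2 + c_3 < 1/2$; outside that regime the right-hand side of the claim is non-positive and the inequality is vacuous.
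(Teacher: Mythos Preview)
Your proof is correct and follows essentially the same approach as the paper. Both fix the permutation $\rho$, decompose $T(Z)_i=A_{i,\rho(i)}Z_{\rho(i)}+R_i$, bound the remainder via the triangle inequality and the three assumptions to obtain $\lVert \tilde R_i\rVert_{\P,2}\le (c_1c_2+c_3)\,|A_{i,\rho(i)}|\,\lVert Z_{\rho(i)}\rVert_{\P,2}$, and then arrive at the per-coordinate correlation bound $(1-\eps)/(1+\eps)\ge 1-2\eps$ with $\eps=c_1c_2+c_3$; your packaging of this last step as a Hilbert-space cosine inequality is a clean way to present what the paper does by separately bounding the covariance from below and the variance from above.
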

\begin{proof}
We lower bound the expression \eqref{eq:MCC_def} for the permutation $\pi=\rho^{-1}$ in the statement of the lemma. Let $Z\sim \P$ and $\tilde{Z}=T(Z)$.
Consider the correlation coefficients
\begin{align}
\rho(Z_{\rho(i)}, \hat{Z}_{i})
=\frac{\Cov (Z_{\rho(i)}, \hat{Z}_{i})}{\sigma_{Z_{\rho(i)}}\sigma_{\hat{Z}_{i}}}.
\end{align}
We find, using that $Z_{\rho(i)}$ is centered

\begin{align}
\begin{split}\label{eq:cov_bound}
\left|\Cov (Z_{\rho(i)}, \hat{Z}_{i}) - A_{i,\rho(i)} \E(Z_{\rho(i)}^2)\right|
&=\left|\E(Z_{\rho(i)} \hat{Z}_{i}) - A_{i,\rho(i)} \E(Z_{\rho(i)}^2)\right|
\\
&= \left|\sum_{j} A_{i,j}\E(Z_{\rho(i)} Z_j) +\E(Z_{\rho(i)} h_i(Z) ) - A_{i,\rho(i)} \E(Z_{\rho(i)}^2)\right|
\\
&=
\left|\sum_{j\neq \rho(i)} A_{i,j}\E(Z_{\rho(i)} Z_j) +\E(Z_{\rho(i)} h_i(Z) )\right|
\\
&\leq \lVert Z_{\rho(i)}\rVert_{\P,2}\left(\lVert h_{i}(Z)\rVert_{\P,2}
+ \sum_{j\neq \rho(i)} |A_{i,j}|\lVert Z_i\rVert_{\P,2}\right)
\\
&\leq 
|A_{i,\rho(i)}| \lVert Z_{\rho(i)}\rVert_{\P,2}^2
(c_3+c_2c_1).
\end{split}
\end{align}
Note that the first terms vanish for independent latent variables.
Next we control the variance of $\hat{Z}_i$. We find 
using the triangle inequality
\begin{align}
\begin{split}\label{eq:sigma_bound}
	\sigma_{\hat{Z}_i}^2
	&\leq 	\E (\hat{Z}_i^2)
	\leq  \left( \lVert h_i(Z)\rVert_{\P,2}+\sum_{j} |A_{ij}| \lVert Z_j\rVert_{\P,2} \right)^2
	\\
	&\leq 	
	\lVert Z_{\rho(i)}\rVert_{\P,2}^2
	\left(c_3 |A_{i,\rho(i)}| +|A_{i,\rho(i)}|+c_1 \sum_j |A_{i,j}|\right)^2
	\\
	&\leq 
\lVert Z_{\rho(i)}\rVert_{\P,2}^2 |A_{i,\rho(i)}|^2
\left(1 + c_3+ 	c_2c_1\right)^2.
	\end{split}
\end{align}
We now combine  \eqref{eq:cov_bound} and \eqref{eq:sigma_bound} with $(1+x)^{-1}\geq 1-x$ and find
\begin{align}
|\rho(Z_{\rho(i)}, \hat{Z}_{i})|
\geq \frac{|A_{i,\rho(i)}| \lVert Z_{\rho(i)}\rVert_{\P,2}^2 (1 - c_3 - c_1 c_2)}{\lVert Z_{\rho(i)}\rVert_{\P,2}\cdot \lVert Z_{\rho(i)}\rVert_{\P,2} |A_{i,\rho(i)}|
\left(1 + c_3+ 	c_2c_1\right)}
\geq 
(1 - c_3 - c_1c_2)^2\geq 1-2c_3-2c_1c_2.
\end{align}
The claim follows by bounding
\begin{align}
\mcc(T(Z) ,Z) \geq d^{-1} \sum_i |\rho(Z_{\rho(i)}, \hat{Z}_{i})|
\geq 1-2c_3-2c_1c_2.
\end{align}

\end{proof}
The previous lemma has minimal assumptions on $\P$ and $h$. We now show that 
the lemma can be improved when assuming independence of the coordinates of $\P$.
Otherwise, the statement and proof are similar.
\begin{lemma}\label{le:mcc2}
Let $\P$ be a centered probability measure on $\R^d$ satisfying Assumption~\ref{as:P2}, i.e., with independent components. 
 Assume that for $Z\sim \P$ the bound
\begin{align}\label{eq:mcc2_ass1}
\max_{i, j} \frac{\lVert Z_i\rVert_{\P,2}}{\lVert Z_j\rVert_{\P,2}}
\leq c_1
\end{align}
holds for some constant $c_1$.
Suppose that  $T(z)= Az + h(z)$  and $\E_\P(Z_ih_j(Z))=0$ for all $1\leq i,j\leq d$. 
Assume that $A$ is a matrix such that
there is a permutation $\rho$ and a constant $c_2$ with 
\begin{align}\label{eq:mcc2_ass2}
\max_i \frac{1}{|A_{i,\rho(i)}|^2}\sum_{j\neq \rho(i)} |A_{i,j}|^2\leq c_2^2
\end{align}
and 
$h$ has the property that for some constant $c_3$
\begin{align}\label{eq:mcc2_ass3}
\frac{\lVert h_{i} \rVert_{\P,2} }{|A_{i,\rho(i)}|\lVert Z_{\rho(i)}\rVert_{\P,2}}\leq c_3.
\end{align}
Then the bound 
\begin{align}
\mcc(T(Z), Z) \geq 1-\frac{c_3^2}{2}-\frac{(c_1c_2)^2}{2}
\end{align}
holds.
\end{lemma}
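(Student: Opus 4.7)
The plan is to follow the same general architecture as the proof of Lemma~\ref{le:mcc} (bound the correlation $\rho(Z_{\rho(i)}, \hat{Z}_i)$ for the permutation $\pi = \rho^{-1}$, then average), but to exploit the two independence-type hypotheses to get quadratic rather than linear error terms. Writing $\hat{Z}_i = \sum_j A_{ij} Z_j + h_i(Z)$, the first step is to observe that the covariance calculation \emph{simplifies exactly}: since $\P$ has independent, centered components, $\E(Z_{\rho(i)} Z_j) = 0$ for $j \neq \rho(i)$, and by the assumption $\E_\P(Z_i h_j(Z)) = 0$, the cross term $\E(Z_{\rho(i)} h_i(Z))$ also vanishes. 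Hence $\Cov(Z_{\rho(i)}, \hat{Z}_i) = A_{i,\rho(i)} \lVert Z_{\rho(i)}\rVert_{\P,2}^2$ with no remainder, in contrast to the inequality \eqref{eq:cov_bound} in Lemma~\ref{le:mcc}, which had a linear-in-$(c_3 + c_1 c_2)$ error.

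Next, I would bound the variance of $\hat{Z}_i$ by $\E(\hat{Z}_i^2)$ and expand the square. By the same two independence arguments, the bilinear cross terms $\sum_{j \neq k} A_{ij} A_{ik} \E(Z_j Z_k)$ and $2\sum_j A_{ij} \E(Z_j h_i(Z))$ both vanish identically, leaving $\sigma_{\hat{Z}_i}^2 \leq \sum_j A_{ij}^2 \E(Z_j^2) + \lVert h_i\rVert_{\P,2}^2$. Plugging in \eqref{eq:mcc2_ass1}, \eqref{eq:mcc2_ass2}, and \eqref{eq:mcc2_ass3} gives
\begin{equation*}
\sigma_{\hat{Z}_i}^2 \;\leq\; A_{i,\rho(i)}^2 \lVert Z_{\rho(i)}\rVert_{\P,2}^2 \bigl(1 + c_1^2 c_2^2 + c_3^2\bigr).
\end{equation*}

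Combining these two steps yields $|\rho(Z_{\rho(i)}, \hat{Z}_i)| \geq (1 + c_1^2 c_2^2 + c_3^2)^{-1/2}$. The final small step is the elementary inequality $(1 + x)^{-1/2} \geq 1 - x/2$ for $x \geq 0$, which follows from a one-line convexity (or Taylor remainder) argument applied to $x \mapsto (1+x)^{-1/2} - (1 - x/2)$. This gives $|\rho(Z_{\rho(i)}, \hat{Z}_i)| \geq 1 - \tfrac{1}{2}(c_3^2 + c_1^2 c_2^2)$ uniformly in $i$, and averaging over $i$ in \eqref{eq:MCC_def} using the permutation $\pi = \rho^{-1}$ gives the claimed bound.

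There is really no substantial obstacle here: the proof is mechanically the same as Lemma~\ref{le:mcc}, and the quadratic improvement comes for free from two places where the old proof had to use the triangle inequality or drop a square. Specifically, independence and the vanishing of $\E(Z_i h_j(Z))$ kill the linear-order contributions to both numerator and denominator of the correlation coefficient, and the final bound $(1+x)^{-1/2} \geq 1 - x/2$ is quadratically tighter than the bound $(1-a)^2 \geq 1 - 2a$ used at the end of Lemma~\ref{le:mcc}. The only thing to be a little careful about is that $\hat{Z}_i$ need not be centered, but this only makes $\sigma_{\hat{Z}_i}^2 \leq \E(\hat{Z}_i^2)$ stricter and therefore preserves the direction of the bound on $|\rho|$.
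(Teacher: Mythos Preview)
Your proposal is correct and follows essentially the same approach as the paper: exact computation of $\Cov(Z_{\rho(i)},\hat Z_i)=A_{i,\rho(i)}\lVert Z_{\rho(i)}\rVert_{\P,2}^2$ via independence and $\E(Z_ih_j(Z))=0$, expansion of $\E(\hat Z_i^2)$ with all cross terms vanishing, and the elementary bound $(1+x)^{-1/2}\ge 1-x/2$ (the paper splits this into $\sqrt{1+x}\le 1+x/2$ followed by $(1+x)^{-1}\ge 1-x$, which is the same thing). Your remark that $\sigma_{\hat Z_i}^2\le \E(\hat Z_i^2)$ suffices even if $h_i$ is not centered is a small clarification the paper omits.
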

\begin{proof}
Again, we lower bound the expression \eqref{eq:MCC_def} for the permutation $\pi=\rho^{-1}$ in the statement of the lemma. Let $Z\sim \P$ and $\tilde{Z}=T(Z)$.
We first control
\begin{align}
\rho(Z_{\rho(i)}, \hat{Z}_{i})
=\frac{\Cov (Z_{\rho(i)}, \hat{Z}_{i})}{\sigma_{Z_{\rho(i)}}\sigma_{\hat{Z}_{i}}}.
\end{align}
Using that $Z_{\rho(i)}$ is centered followed by $\E(Z_iZ_j)=\E(Z_ih_j(Z))=0$
(by independence and the assumption for $h$) we get
\begin{align}
\begin{split}\label{eq:cov_bound2}
\Cov (Z_{\rho(i)}, \hat{Z}_{i}) 
&=\E(Z_{\rho(i)} \hat{Z}_{i}) 
\\
&= \sum_{j} A_{i,j}\E(Z_{\rho(i)} Z_j) +\E(Z_{\rho(i)} h_i(Z) ) 
\\
&=
 A_{i,\rho(i)}\E(Z_{\rho(i)}^2 ) .
\end{split}
\end{align}
Next we control the variance of $\hat{Z}_i$. We expand the product and find
(exploiting again independence and that $h$ and $Z$ are uncorrelated)
\begin{align}
\begin{split}\label{eq:sigma_bound2}
	\sigma_{\hat{Z}_i}^2
	&= 	\E (\hat{Z}_i^2)
	=\E \left(  h_i(Z)+\sum_{j} A_{i,j}  Z_j\right)^2
	\\
	&=
	\lVert h_i\rVert_{\P,2}^2+\sum_{j\neq \rho(i)}|A_{i,j}|^2\lVert Z_j\rVert_{\P,2}^2 + |A_{i,\rho(i)}|^2\lVert Z_{\rho(i)}\rVert_{\P,2}^2
	\\
	&\leq 	
	 |A_{i,\rho(i)}|^2 \cdot \lVert Z_{\rho(i)}\rVert_{\P,2}^2
	\left(1 + c_3^2 +c_1^2\frac{1}{|A_{i,\rho(i)}|^2} \sum_j |A_{i,j}|^2\right)
	\\
	&\leq 
\lVert Z_{\rho(i)}\rVert_{\P,2}^2 |A_{i,\rho(i)}|^2
\left(1 + c_3^2+ 	(c_2c_1)^2\right).
	\end{split}
\end{align}
Combining again \eqref{eq:cov_bound2} and \eqref{eq:sigma_bound2}
with $\sqrt{1+x}\leq 1+x/2$ (for $x>0$) and $(1+x)^{-1}\geq 1-x$, we find
\begin{align}
|\rho(Z_{\rho(i)}, \hat{Z}_{i})|
\geq \frac{|A_{i,\rho(i)}| \cdot  \lVert Z_{\rho(i)}\rVert_{\P,2}^2}{\lVert Z_{\rho(i)}\rVert_{\P,2}\cdot |A_{i,\rho(i)}|\cdot \lVert Z_{\rho(i)}\rVert_{\P,2}
\sqrt{1 + c_3^2+ 	(c_2c_1)^2}}
\geq 
\left(1 + \frac{c_3^2}{2}+ 	\frac{(c_2c_1)^2}{2}\right)^{-1}\geq 1 - \frac{c_3^2}{2} - 	\frac{(c_2c_1)^2}{2}.
\end{align}
We finish by noting
\begin{align}
\mcc(T(Z) ,Z) \geq d^{-1} \sum_i |\rho(Z_{\rho(i)}, \hat{Z}_{i})|
\geq 1 - \frac{c_3^2}{2}- 	\frac{(c_2c_1)^2}{2}.
\end{align}

\end{proof}

\section{Auxiliary result on distances to the orthogonal group }
\label{app:distances}
We need some elementary and well-known facts on the distance of matrices to the orthogonal group. 
Let $A\in \R^{d\times d}$ be a matrix with $\det A > 0$. Then we consider a Singular Value Decomposition (SVD)
$A= U\Sigma V^\top$ where $U, V \in \SO(d)$ and $\Sigma=\diag(\sigma_1,\ldots, \sigma_d)\in \diag(d)$ contains the singular values $\sigma_i>0$.
The following Lemma holds.
\begin{lemma}\label{le:sod}
For $A\in \R^{d\times d}$ with $\det A>0$ and any SVD $A=U\Sigma V^\top$  as above, the relation
\begin{align}
 \dist(A,\mathrm{SO}(d))^2 = |(A^\top A)^{\frac12}-\id|_F^2=
 | \Sigma - \id |_F^2 = \sum_{i=1}^d (\sigma_i-1)^2
\end{align}
holds, moreover the minimizer of $| A-Q|_F^2$ is given by $
Q=UV^\top$.
\end{lemma}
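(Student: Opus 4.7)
The plan is to verify that the three right-hand expressions in the statement are equal by direct SVD computation, and then to solve the orthogonal Procrustes problem to identify the minimizer $Q = UV^\top$ and the minimum value $\sum (\sigma_i - 1)^2$.

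First I would handle the equality of the three expressions on the right. From the SVD $A = U \Sigma V^\top$ with $U, V \in \SO(d)$ we get $A^\top A = V \Sigma^2 V^\top$, so the unique positive semidefinite square root is $(A^\top A)^{1/2} = V \Sigma V^\top$. Then unitary invariance of the Frobenius norm gives
\begin{align*}
|(A^\top A)^{1/2} - \id|_F = |V(\Sigma - \id)V^\top|_F = |\Sigma - \id|_F,
\end{align*}
and $|\Sigma - \id|_F^2 = \sum_i (\sigma_i - 1)^2$ by direct computation since $\Sigma$ is diagonal.

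Next I would solve the minimization problem. Expand
\begin{align*}
|A - Q|_F^2 = |A|_F^2 + |Q|_F^2 - 2\tr(A^\top Q) = \sum_i \sigma_i^2 + d - 2 \tr(A^\top Q),
\end{align*}
where I used $|Q|_F^2 = \tr(Q^\top Q) = d$ for $Q \in \SO(d)$. So minimizing $|A - Q|_F^2$ is equivalent to maximizing $\tr(A^\top Q)$. Substituting the SVD and using cyclicity of the trace,
\begin{align*}
\tr(A^\top Q) = \tr(V \Sigma U^\top Q) = \tr(\Sigma R), \qquad R := U^\top Q V.
\end{align*}
Since $U, V \in \SO(d)$, the map $Q \mapsto R$ is a bijection of $\SO(d)$ onto itself, so it suffices to maximize $\tr(\Sigma R) = \sum_i \sigma_i R_{ii}$ over $R \in \SO(d)$.

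Finally I would use that for any orthogonal $R$, each column has unit Euclidean norm, so $R_{ii}^2 \leq \sum_j R_{ji}^2 = 1$ and hence $|R_{ii}| \leq 1$. Since $\det A > 0$ forces $\sigma_i = \sqrt{\lambda_i(A^\top A)} > 0$ for every $i$, we obtain $\tr(\Sigma R) = \sum_i \sigma_i R_{ii} \leq \sum_i \sigma_i$, with equality iff $R_{ii} = 1$ for all $i$; since equality forces the $i$-th column of the orthogonal $R$ to be $e_i$, this pins down $R = \id$, i.e.\ $Q = U V^\top$, which indeed lies in $\SO(d)$ as a product of two elements of $\SO(d)$. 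Plugging back in yields
\begin{align*}
\min_{Q \in \SO(d)} |A - Q|_F^2 = \sum_i \sigma_i^2 + d - 2 \sum_i \sigma_i = \sum_i (\sigma_i - 1)^2,
\end{align*}
completing the proof. There is no real obstacle here; the only point requiring the hypothesis $\det A > 0$ is ensuring all $\sigma_i > 0$ and $U, V \in \SO(d)$, so that the optimal $R = \id$ is attained inside $\SO(d)$ rather than only in $O(d)$.
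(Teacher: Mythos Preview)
Your proof is correct and follows essentially the same route as the paper: both reduce to maximizing $\sum_i \sigma_i R_{ii}$ over $R = U^\top Q V \in \SO(d)$ via the bound $R_{ii}\le 1$, identifying $R=\id$ and hence $Q=UV^\top$. The only cosmetic difference is that the paper first applies unitary invariance to write $|A-Q|_F^2=|\Sigma-U^\top QV|_F^2$ before expanding, whereas you expand $|A-Q|_F^2$ directly and then use cyclicity of the trace; the substance is identical.
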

\begin{proof}
First, we note that the last three expressions agree.
For the last two this is obvious, and we find
\begin{align}
    \sqrt{A^\top A}= \sqrt{V\Sigma^2V^\top }=
    V \sqrt{\Sigma^2 }V^\top=V\Sigma V^\top.
\end{align}
Then we conclude by unitary invariance $|RX|_F=|X|_F$ for
$R\in \SO(d)$ of the Frobenius norm that
\begin{align}
| \sqrt{A^\top A}-\id|_F=|V\Sigma V^\top-\id|_F=|\Sigma-\id|_F
\end{align}
It remains to show that also the first expression agrees with the other three expressions.
We find for $Q\in \SO(d)$ using 
again unitary invariance that
\begin{align}\label{eq:distSO}
\begin{split}
| A- Q|_F^2
&=
| U\Sigma V^\top-Q|_F^2
=
| \Sigma- U^\top Q V|_F^2
\\
&= 
| \Sigma|_F^2
+ | U^\top Q V|_F^2
- 2 \tr \Sigma(U^\top Q V)
=
| \Sigma|_F^2
+d-2\sum_{i=1}^d \sigma_i (U^\top Q V)_{ii}.
\end{split}
\end{align}
Here we used $| R|_F^2=\tr RR^\top =d$ in the last step.
Since $\sigma_i>0$ and since $R_{ii}\leq 1$ for $R\in \SO(d)$ we conclude
\begin{align}
| A- Q|_F^2\geq 
| \Sigma|_F^2
+d-\sum_{i=1}^d \sigma_i 
\end{align}
with equality if $(U^\top Q V)_{ii}=1$ for $1\leq i\leq d$ and
thus $U^\top Q V=\id$ and therefore $Q=UV^\top$. From \eqref{eq:distSO}
we find 
\begin{align}
| A- UV^\top|_F^2 =| \Sigma|_F^2
+d-2\sum_{i=1}^d \sigma_i 
=\sum_{i=1}^d (\sigma_i^2 -2\sigma_i +1).
\end{align}
\end{proof}
This has the following consequence.
\begin{lemma}\label{le:inverse_dist}
Let $A\in \R^{d\times d}$ with $\det A>0$ and denote the smallest singular value of $A$ by $\sigma_{\min}$. Then 
\begin{align}
\dist(A^{-1}, \SO(d))^2\leq \frac{1}{\sigma_{\min}^2}
\dist(A, \SO(d))^2
\end{align}
\end{lemma}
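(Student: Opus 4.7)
The plan is to reduce everything to the singular value characterization proved in Lemma~\ref{le:sod} and then do an elementary termwise bound. Take any SVD $A=U\Sigma V^\top$ with $\Sigma=\diag(\sigma_1,\ldots,\sigma_d)$, $\sigma_i>0$ (all positive since $\det A>0$ implies $A$ is invertible). Then $A^{-1}=V\Sigma^{-1}U^\top$ is itself an SVD (after possibly re-ordering to list the singular values of $A^{-1}$, namely $1/\sigma_i$, in decreasing order — this re-ordering does not affect what follows), and crucially $\det(A^{-1})=1/\det(A)>0$, so Lemma~\ref{le:sod} applies to $A^{-1}$ as well.

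\textbf{Key step.} Applying Lemma~\ref{le:sod} to both $A$ and $A^{-1}$ yields
\begin{align}
\dist(A,\SO(d))^2 = \sum_{i=1}^d (\sigma_i-1)^2,
\qquad
\dist(A^{-1},\SO(d))^2 = \sum_{i=1}^d \bigl(\sigma_i^{-1}-1\bigr)^2.
\end{align}
Rewriting the summands on the right as
\begin{align}
\bigl(\sigma_i^{-1}-1\bigr)^2 \;=\; \frac{(\sigma_i-1)^2}{\sigma_i^{2}}
\;\leq\; \frac{(\sigma_i-1)^2}{\sigma_{\min}^{2}},
\end{align}
and summing over $i$ gives exactly
\begin{align}
\dist(A^{-1},\SO(d))^2 \;\leq\; \frac{1}{\sigma_{\min}^{2}}\,\dist(A,\SO(d))^2,
\end{align}
which is the claim.

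\textbf{Obstacles.} There is essentially no obstacle: the only thing to be a little careful about is that Lemma~\ref{le:sod} is stated for matrices with positive determinant, so one must check that $A^{-1}$ also has positive determinant (it does, since $\det(A^{-1})=(\det A)^{-1}>0$), and that the claim is vacuous / trivially interpreted as $+\infty$ if $A$ is singular (i.e.\ $\sigma_{\min}=0$), which is consistent with $A^{-1}$ not existing. So no further case analysis is needed beyond what Lemma~\ref{le:sod} already supplies.
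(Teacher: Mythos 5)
Your proof is correct and takes essentially the same route as the paper: both apply Lemma~\ref{le:sod} to $A$ and $A^{-1}$, use that the singular values of $A^{-1}$ are $\sigma_i^{-1}$, and bound $(\sigma_i^{-1}-1)^2=\sigma_i^{-2}(\sigma_i-1)^2\leq\sigma_{\min}^{-2}(\sigma_i-1)^2$ termwise. Your explicit check that $\det(A^{-1})>0$ is a small point the paper leaves implicit, but otherwise the arguments coincide.
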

\begin{proof}
Note that the singular values of 
$A^{-1}$ are $\sigma_i^{-1}$ 
where $\sigma_i$ denote the singular values of $A$.
Using Lemma~\ref{le:sod}  we then find
\begin{align}
 \dist(A^{-1},\mathrm{SO}(d))^2=\sum_{i=1}^d
 (\sigma_i^{-1}-1)^2
 =\sum_{i=1}^d \sigma_i^{-2}
 (\sigma_i-1)^2\leq \sigma_{\min}^{-2}\dist(A, \SO(d))^2.
\end{align}
\end{proof}
We need one further lemma that concerns the distance to $\SO(d)$ of a product of matrices.
\begin{lemma}\label{le:prod_sod}
Let $A,B\in \R^{d\times d}$ with $\det A>0$, $\det B>0$.
Then 
\begin{align}\label{eq:sodist_1}
\dist(AB, \SO(d))\leq
\frac32\dist(A, \SO(d))+
\frac32\dist(B, \SO(d)).
\end{align}
This implies for $p>1$
\begin{align}\label{eq:sodist_p}
\dist^p(AB, \SO(d))\leq
3^{p}\left(\dist^p(A, \SO(d))+
\dist^p(B, \SO(d))\right).
\end{align}
\end{lemma}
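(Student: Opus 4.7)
The plan is to reduce the statement to a short computation with the closest rotations to $A$ and $B$, together with elementary matrix inequalities. Let me set $\alpha=\dist(A,\SO(d))$ and $\beta=\dist(B,\SO(d))$, and let $Q_A,Q_B\in\SO(d)$ be the minimizers guaranteed by Lemma~\ref{le:sod}, so that $|A-Q_A|_F=\alpha$ and $|B-Q_B|_F=\beta$. Since $Q_AQ_B\in\SO(d)$ (as $\SO(d)$ is a group), the definition of distance gives $\dist(AB,\SO(d))\le |AB-Q_AQ_B|_F$, so everything reduces to controlling the right-hand side.

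The key algebraic step is the telescoping identity
\begin{align}
AB-Q_AQ_B = (A-Q_A)(B-Q_B) + (A-Q_A)Q_B + Q_A(B-Q_B).
\end{align}
Applying the triangle inequality and using unitary invariance of the Frobenius norm (so $|Q_AX|_F=|X|_F$ and $|XQ_B|_F=|X|_F$), the two single-difference terms contribute exactly $\alpha+\beta$. For the quadratic term I will use the submultiplicative bound $|XY|_F\le |X|_F\,|Y|_{\mathrm{op}}\le |X|_F\,|Y|_F$, yielding $|(A-Q_A)(B-Q_B)|_F\le\alpha\beta$. Altogether
\begin{align}
\dist(AB,\SO(d))\le \alpha+\beta+\alpha\beta.
\end{align}

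To obtain the clean form in \eqref{eq:sodist_1}, I would use that in the regime relevant to this paper (the lemma is applied where $\alpha,\beta$ are at most of order one, consistent with the hypothesis $\det A,\det B>0$ preventing degeneracy and the small-perturbation setting of Section~\ref{sec:approximate_isometries}) one has $\alpha\beta\le\tfrac12(\alpha+\beta)$, which immediately gives $\dist(AB,\SO(d))\le\tfrac32(\alpha+\beta)$. This is the main subtle point: the factor $\tfrac32$ is optimal exactly when both defects are bounded, and I would either invoke a running assumption of this kind or, equivalently, prove the symmetric variant $\alpha\beta\le\tfrac12(\alpha^2+\beta^2)$ and combine with the scalar inequality $x+y+\tfrac12(x^2+y^2)\le\tfrac32(x+y)$ valid for $x,y\in[0,1]$.

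The $L^p$ version \eqref{eq:sodist_p} then follows by raising \eqref{eq:sodist_1} to the $p$-th power and applying the standard convexity bound $(u+v)^p\le 2^{p-1}(u^p+v^p)$ for $u,v\ge 0$ and $p\ge 1$:
\begin{align}
\dist^p(AB,\SO(d))\le \left(\tfrac32\right)^p(\alpha+\beta)^p\le \left(\tfrac32\right)^p\!\cdot 2^{p-1}(\alpha^p+\beta^p)=\tfrac{3^p}{2}(\alpha^p+\beta^p)\le 3^p(\alpha^p+\beta^p).
\end{align}
The main obstacle I foresee is not the algebra but the careful handling of the cross term $\alpha\beta$: one must be explicit about why this is absorbed into the linear bound, since without the boundedness of at least one of $\alpha,\beta$ the pure triangle inequality does not yield a factor $\tfrac32$. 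Everything else is routine manipulation of Frobenius and operator norms together with the group property of $\SO(d)$ and Lemma~\ref{le:sod}.
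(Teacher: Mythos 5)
Your decomposition and the paper's are identical: write $AB-Q_AQ_B=(A-Q_A)Q_B+Q_A(B-Q_B)+(A-Q_A)(B-Q_B)$, use the group property of $\SO(d)$ and unitary invariance of the Frobenius norm to get $\alpha+\beta$ from the two linear terms, and then deal with the cross term. The divergence is entirely in that last step, and here you are more careful than the paper. The paper asserts $|(A-Q)(B-R)|_F\leq\sqrt{|A-Q|_F\,|B-R|_F}$ ``by Cauchy--Schwarz'' and then applies AM--GM; but that inequality has the wrong homogeneity and is false in general (the correct submultiplicative bound is $|(A-Q)(B-R)|_F\leq|A-Q|_F\,|B-R|_F=\alpha\beta$, exactly what you obtain). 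Your observation that $\alpha\beta$ cannot be absorbed into $\tfrac32(\alpha+\beta)$ without a bound on $\alpha,\beta$ is not pedantry: the lemma as stated is actually false. Take $A=B=t\,\id$ with $t>2$; then $\dist(AB,\SO(d))=\sqrt{d}\,(t^2-1)$ while $\tfrac32(\dist(A,\SO(d))+\dist(B,\SO(d)))=3\sqrt{d}\,(t-1)$, and $(t^2-1)>3(t-1)$ for $t>2$. No bound linear in the two distances can hold, since the left side grows quadratically along this family. Note also that the hypothesis $\det A,\det B>0$ gives no control on $\alpha,\beta$, so your parenthetical suggestion that it ``prevents degeneracy'' does not by itself license the smallness assumption; you would need to state it explicitly (e.g., $\alpha,\beta\leq 1$), which is the correct repair and is consistent with how the lemma is used downstream in the small-perturbation regime. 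Your $L^p$ step via $(u+v)^p\leq 2^{p-1}(u^p+v^p)$ matches the paper and is fine.

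In short: same route, but your handling of the quadratic cross term is the mathematically sound one, and it exposes that the statement needs an additional smallness hypothesis (or must carry the extra $\alpha\beta$, respectively $\alpha^p\beta^p$-type, term) that the paper's proof silently and incorrectly discards.
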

\begin{proof}
Denote the projections of $A$ and $B$ on $\SO(d)$ by $Q$ and $R$
respectively.
Then
\begin{align}
\begin{split}
\dist(AB, \SO(d))&\leq
| AB-QR|_F
\\
&=
| (A-Q)R + Q(B-R)+(A-Q)(B-R)|_F
\\
&\leq | (A-Q)R |_F+ | Q(B-R)|_F+|(A-Q)(B-R)|_F
\\
&\leq | A-Q|_F+ | B-R|_F+\sqrt{| A-Q|_F| B-R|_F}
\\
&\leq 
\frac32 | A-Q|_F+ \frac32 | B-R|_F
\\
&=\frac32 \dist(A, \SO(d))+
\frac32 \dist(B, \SO(d)).
\end{split}
\end{align}
Here we used again unitary equivalence of the Frobenius norm and  the Cauchy-Schwarz inequality.
This proves \eqref{eq:sodist_1} and observe that by the generalized mean inequality $(a+b)^p\leq 2^{p-1}(a^p+b^p)\leq 2^p(a^p+b^p)$
we obtain \eqref{eq:sodist_p}.
\end{proof}

\section{Extension to the undercomplete case}\label{app:undercomplete}
The main interest of representation learning concerns the case where we actually learn a representation of a low dimensional submanifold embedded in a high dimensional space. This creates some additional technical difficulties that we address here. 
We always assume that the mixing $f:\Omega\to M\subset\R^D$ is a diffeomorphism on an embedded submanifold.
We can identify the tangent space $T_{f(s)}M$ with an affine subspace of $\R^D$ 
and we center this subspace at $0$ instead of $f(s)$ to obtain a linear subspace.
We consider the standard Riemannian metric on $\R^d$ and $\R^D$ inducing 
also a metric on $T_{f(s)}M$.
 The differential $Df(s)$ defines a bijective linear map $\R^d\cong T_s\R^d\to T_{f(s)}M$.  
 
To simplify the notation and restrict attention to the essential requirements we consider a linear subspace $H\subset \R^D$ and we assume that 
$A\in \R^{D\times d}$ is a full rank matrix with $\mathrm{range}(A)=H$, in particular $\dim(H)=d$.
 We now characterize the matrix representations
 of maps $\R^d\to H\subset \R^D$.
 For this we consider the unique polar decomposition $A=UP$ where $U:\R^d\to \R^D$ with $U\in \SO(d,D)$ and $P:\R^d\to \R^d$ is symmetric and positive definite. Note that $P$ is given by the expression $P=\sqrt{A^\top A}$.
 In the following $U$ will always denote the orthogonal matrix from such a  polar decomposition.
 \begin{lemma}\label{le:polar}
 Let $A=UP$ be the polar decomposition of $A$. Then every linear map $T\in \R^{D\times d}$ with $\mathrm{Range}(T)\subset H=\mathrm{Range}(A)$
 can be written uniquely as $T=UB$ for some $B\in \R^{d\times d}$.
 \end{lemma}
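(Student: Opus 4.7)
The plan is straightforward: exploit that the polar factor $U$ supplies an orthonormal frame for the range $H$, so that $U^\top$ acts as a left inverse and $UU^\top$ as the orthogonal projection onto $H$. The factorization $T = UB$ is then just the statement that any matrix with columns in $H$ is determined by its coordinates in this frame.

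First, I would check that $U$ really is an isometry onto $H$. Since $A$ has full column rank, $P = \sqrt{A^\top A}$ is symmetric positive definite, hence invertible, and $U := AP^{-1}$ satisfies
\begin{equation*}
U^\top U = P^{-1} A^\top A P^{-1} = P^{-1} P^2 P^{-1} = \id_d,
\end{equation*}
while $\mathrm{range}(U) = \mathrm{range}(AP^{-1}) = \mathrm{range}(A) = H$ because $P^{-1}$ is an automorphism of $\R^d$. Consequently the columns of $U$ form an orthonormal basis of $H$, and $UU^\top$ is the orthogonal projection $\R^D \to H$.

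For existence, given $T$ with $\mathrm{range}(T) \subset H$, I would simply set $B := U^\top T \in \R^{d \times d}$. Then $UB = UU^\top T$, and since each column of $T$ already lies in $H$, the projection $UU^\top$ fixes it, giving $UU^\top T = T$, i.e.\ $T = UB$. For uniqueness, if $UB_1 = UB_2$, multiplying on the left by $U^\top$ and applying $U^\top U = \id_d$ yields $B_1 = B_2$.

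I do not expect any genuine obstacle here: the only thing one has to be careful about is distinguishing between $U \in \R^{D \times d}$ (an isometric embedding, with $U^\top U = \id_d$) and the projector $UU^\top \in \R^{D \times d}$ (which is only the identity when restricted to $H$). Once this is kept straight, both existence and uniqueness reduce to one line each, and the explicit formula $B = U^\top T$ will be convenient later when associating matrix representations to $(Df)^{-1}$ and to elements of $\SO(T_{f(z)}M, d)$ in the undercomplete definition of $\sodist_p$.
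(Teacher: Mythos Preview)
Your proof is correct and follows essentially the same approach as the paper: both set $B = U^\top T$, verify $UU^\top T = T$ using that $UU^\top$ fixes vectors in $H = \mathrm{range}(U)$, and deduce uniqueness from $U^\top U = \id_d$. The paper argues the projection identity pointwise (writing $Tx = Uy$ and computing $UU^\top Uy = Uy$) rather than invoking that $UU^\top$ is the orthogonal projector onto $H$, but this is the same argument in slightly different language. One trivial typo: your projector $UU^\top$ lies in $\R^{D\times D}$, not $\R^{D\times d}$.
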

 \begin{proof}
We first claim that   $UU^\top T=T$. 
 Let $x\in \R^d$ then $Tx=Uy$ for some $y\in \R^d$ because $T$ maps
 to $H$ and $U$ is surjective on $H$. We thus find
 \begin{align}
 UU^\top Tx = UU^\top Uy=Uy=Vx
 \end{align}
 where we used $U^\top U=\id_d$ (recall $U\in \SO(d,D)$). 
 Then we find $T=UB$ with $B=U^\top T\in \R^{d\times d}$. Uniqueness follows by injectivity of $U$.
 \end{proof}

We now consider the space of isometries $\SO(d,H)$ consisting of 
all linear maps $Q:\R^d\to \R^D$ such that $\mathrm{Range}(Q)\subset H$ and $|Qv|=|v|$ for all $v\in \R^d$. Note that  $\SO$ in addition requires preservation of orientation, but we can either just choose the orientation induced by $A$ or ignore this issue and think of orthogonal matrices $\mathrm{O}(d,H)$. 
We nevertheless write $\SO$ to be consistent with the literature.
Note that $\SO(d,H)\subset \SO(d,D)$. 
We can now define the distance 
\begin{align}
\dist(A, \SO(d, H)) = \min_{Q\in \SO(d, H)}|A-Q|_F.
\end{align}
This distance can be characterized as follows.
\begin{lemma}
Let $T\in \R^{D\times d}$ be a linear map with $\mathrm{Range}(T)\subset H$.
Then
\begin{align}
\dist(T, \SO(d, H))=\dist(B, \SO(d)).
\end{align}
where $B$ is the unique matrix such that $T=UB$ (see Lemma~\ref{le:polar}).
\end{lemma}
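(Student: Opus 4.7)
The plan is to parametrize the set $\SO(d,H)$ via the fixed orthogonal map $U\colon \R^d\to H$ coming from the polar decomposition $A=UP$, and then reduce the minimization defining $\dist(T,\SO(d,H))$ to the minimization defining $\dist(B,\SO(d))$ using that $U$ preserves the Frobenius norm on matrices it multiplies on the left. The whole argument is essentially a change of variables $Q\mapsto U^\top Q$.

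First I would establish the parametrization $\SO(d,H)=\{UR:R\in \SO(d)\}$. Given $Q\in \SO(d,H)$, we have $\mathrm{Range}(Q)\subset H=\mathrm{Range}(U)$, so by Lemma~\ref{le:polar} there is a unique $R\in \R^{d\times d}$ with $Q=UR$ (explicitly $R=U^\top Q$). Since $Q$ is an isometry from $\R^d$,
\begin{align*}
\id_d = Q^\top Q = R^\top U^\top U\, R = R^\top R,
\end{align*}
using that $U\in \SO(d,D)$ satisfies $U^\top U=\id_d$. Hence $R\in \mathrm{O}(d)$, and the orientation convention chosen for $\SO(d,H)$ (either the one induced by $U$, or none at all, as discussed above) forces $R\in \SO(d)$. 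Conversely, for any $R\in \SO(d)$ the product $UR$ has range in $H$ and satisfies $(UR)^\top(UR)=\id_d$, so $UR\in \SO(d,H)$.

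Second, write $T=UB$ by Lemma~\ref{le:polar}. For any $Q=UR\in \SO(d,H)$,
\begin{align*}
|T-Q|_F^2 = |U(B-R)|_F^2 = \tr\bigl((B-R)^\top U^\top U (B-R)\bigr) = |B-R|_F^2,
\end{align*}
again by $U^\top U=\id_d$. Taking the infimum over $R\in \SO(d)$ and using the parametrization from the previous step yields $\dist(T,\SO(d,H))=\dist(B,\SO(d))$.

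The proof is essentially bookkeeping, so the only real subtlety is making sure the orientation convention on $\SO(d,H)$ is compatible with the identification $Q\leftrightarrow R$ through the fixed reference frame $U$; the paper deals with this by fixing the orientation induced by $A$, which is precisely the one transported by $U$, so the bijection $R\mapsto UR$ between $\SO(d)$ and $\SO(d,H)$ goes through without ambiguity.
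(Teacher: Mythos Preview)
Your proof is correct and follows essentially the same approach as the paper: first establish the parametrization $\SO(d,H)=U\cdot\SO(d)$ via Lemma~\ref{le:polar} and $U^\top U=\id_d$, then use that left multiplication by $U$ preserves the Frobenius norm to reduce the distance computation to $\dist(B,\SO(d))$. The paper's version is slightly more terse (it just cites unitary invariance rather than writing out the trace), but the argument is the same.
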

\begin{proof}
The crucial observation is that the relation 
\begin{align}\label{eq:relarion_so_d_h}
\SO(d, H)=U\cdot \SO(d)
\end{align}
holds where $U \cdot \SO(d)=\{UV: \, V\in \SO(d)\}$. 
 We first consider the inclusion $\subset$. Consider $V\in \SO(d, H)$.
 We have seen in the proof of Lemma~\ref{le:polar} that
 $V=U(U^\top V)$ so we only need to show that $U^\top V\in \SO(d)$. We find
\begin{align}
 (U^\top V)^\top U^\top V=V^\top UU^\top V= V^\top V=\id_d
\end{align} 
where we used $ UU^\top V=V$ and $V\in \SO(d,D)$.
Therefore $U^\top V\in \SO(d)$ and thus $V\in U\, \SO(d)$.
Now we consider the reverse inclusion $\supset$.
Let $Q\in \SO(d)$. Then we find
\begin{align}
\mathrm{Range}(UQ)=\mathrm{Range}(U)=\mathrm{Range}(UP)
=\mathrm{Range}(A)=H.
\end{align}

Moreover $|UQv|^2=v^\top Q^\top U^\top UQv=v^\top Q^\top Qv=|v|^2$ and therefore
$UQ\in \SO(d, H)$.
Thus we find 
\begin{align}
\dist(T,\SO(d, H))= 
\dist(UB, U \,\SO(d)) =\dist(B,\SO(d))
\end{align}
where the last step used the unitary invariance of the Frobenius norm, i.e.,
$|UB|=|B|$ for $U\in \SO(d,D)$ and $B\in \R^{d\times d}$.
\end{proof}
Recall that in the polar decomposition $A=UP$ the matrix $P$ is given by 
$P=(A^\top A)^\frac12$.
Then the lemma above implies that for $A$ such that $H=\mathrm{range}(A)$
\begin{align}\label{eq:dist_so_under}
    \dist^2(A,\SO(d, H))=\dist^2((A^\top A)^{\frac12},\SO(d))
    =\sum  (\sigma_i -1)^2
\end{align}
where $\sigma_i$ are the singular values of $A$ and we used Lemma~\ref{le:sod} in the last step.

We also need to consider  isometries mapping $H\to \R^d$
and the distance to these isometries.
The space of $\SO(H,d)$ consists of all distance preserving linear maps
$H\to \R^d$. 
Clearly, there is no unique matrix representation $\R^{d\times D}$ of such maps because the map is not defined on $H^\perp$, i.e., there are many extensions to $\R^D$.
We essentially consider the extension by zero to the orthogonal complement, i.e, for a map $T:H\to \R^d$ we consider the extension $T:\R^D\to \R^d$ (not reflected in the notation) such that
$Tv=0$ for any $v\perp H$, i.e., $v^\top w=0$ for all $w\in H$.
Using this extension, we can identify such maps $T$ with a unique
matrix $\R^{d\times D}$. Then we define
\begin{align}
    \dist(T, \SO(H,d))=\min_{Q\in \SO(H,d)} |T-Q|_F
\end{align}
where we identify $T$ and $Q$ with matrix representations as explained before.
We have the following simple lemma.
\begin{lemma}
Let $T:H\to \R^d$ be a linear map, which we identify with  a matrix representation $T\in \R^{D\times d}$ as explained above. Let $U:\R^d\to H\subset \R^D$ be an isometry. Then there is a matrix $B\in \R^{d\times d}$ such that $T=BU^\top$ and 
\begin{align}
    \dist(T, \SO(H,d))=\dist(B,\SO(d)).
\end{align}
    
\end{lemma}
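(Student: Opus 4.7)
The plan is to mirror the proof of the preceding lemma which handled $\SO(d,H)$, using the polar-decomposition trick on the opposite side.

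First I would produce the decomposition $T=BU^{\top}$. Since $T\colon\R^{D}\to\R^{d}$ is defined by extending the map $H\to\R^{d}$ by zero on $H^{\perp}$, and since $UU^{\top}$ is the orthogonal projector onto $H=\mathrm{Range}(U)$, the identity $T = TUU^{\top}$ holds. Setting $B\defeq TU\in\R^{d\times d}$ gives $T=BU^{\top}$, and $B$ is unique because $U^{\top}$ has rank $d$.

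Next I would establish the matrix characterization
\begin{align*}
\SO(H,d) \;=\; \SO(d)\cdot U^{\top} \;=\; \{\,RU^{\top}\colon R\in\SO(d)\,\},
\end{align*}
analogous to \eqref{eq:relarion_so_d_h}. For the inclusion $\supset$: if $R\in\SO(d)$, then on any $v=Uw\in H$ we have $RU^{\top}v=Rw$, an isometry with range $\R^{d}$, while on $v\perp H$ we get $U^{\top}v=0$. For $\subset$: given $Q\in\SO(H,d)$, write $Q=BU^{\top}$ with $B=QU$ as above; then for every $w\in\R^{d}$, the vector $Uw\in H$ satisfies $|Bw|=|QUw|=|Uw|=|w|$ since $Q|_H$ is an isometry and $U$ is one too. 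So $B\in\mathrm{O}(d)$, and (fixing compatible orientations exactly as in the $\SO(d,H)$ case) $B\in\SO(d)$.

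Finally, with these two ingredients in hand, I would compute
\begin{align*}
\dist(T,\SO(H,d)) \;=\; \min_{R\in\SO(d)}|BU^{\top}-RU^{\top}|_{F} \;=\; \min_{R\in\SO(d)}|(B-R)U^{\top}|_{F}.
\end{align*}
Because $U^{\top}\in\R^{d\times D}$ has orthonormal rows (equivalently $U^{\top}U=\id_{d}$, so $U^{\top}(U^{\top})^{\top}=\id_{d}$), unitary invariance gives $|(B-R)U^{\top}|_{F}^{2}=\tr((B-R)U^{\top}U(B-R)^{\top})=|B-R|_{F}^{2}$, and the stated identity follows.

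There is no real obstacle; the only thing to watch is the bookkeeping around the extension-by-zero convention for matrix representations of maps $H\to\R^{d}$, and the orientation convention implicit in $\SO$ (which is handled exactly as in the companion lemma for $\SO(d,H)$). Everything else reduces to the unitary invariance of the Frobenius norm on one side.
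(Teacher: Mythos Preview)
Your proposal is correct and follows essentially the same approach as the paper: you establish $T=BU^{\top}$ via the projector identity $T=TUU^{\top}$ (the paper does this in two pieces, first on $H$ and then checking $U^{\top}$ annihilates $H^{\perp}$), prove $\SO(H,d)=\SO(d)\cdot U^{\top}$ by the same two-inclusion argument, and finish with the same Frobenius-norm trace computation using $U^{\top}U=\id_d$.
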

\begin{proof}
Since $U$ is an isometry to $H$ we find $U\in \SO(d,D)$ and $U^\top$ maps $H$ isometrically to $\R^d$ (because $U^\top (U v) =\id_d v=v$).
Therefore, there is a matrix $B\in \R^{d\times d}$ such that
$Tv=BU^\top v$ for all $v\in H$. Let $w\in H^\perp$.
Then for all $v\in \R^d$ we have $Uv\in H$ and thus $0=\langle w, Uv\rangle=\langle U^\top w, v\rangle$ which implies $U^\top w=0$.
We conclude that $T=BU^\top$. 
Next we claim that similar to \eqref{eq:relarion_so_d_h}
the relation
\begin{align}\label{eq:relarion_so_h_d}
\SO(H, d)= \SO(d)\cdot U^\top
\end{align}
holds. The inclusion $\supset$ follows, since the concatenation of isometries is an isometry (and $U^\top H^\perp=\{0\}$.
The reverse inclusion can be shown by using that $Q\in \SO(H,d)$
can be expressed as $Q=BU^\top $ for some $B$ and noting that since $U^\top\in \SO(H, d)$ we find that $B$ must be an isometry on $\R^d$ and therefore $B\in \SO(d)$.

Plugging everything together, we find
\begin{align}
\begin{split}
     \dist^2(T, \SO(H,d))&=
     \min_{Q\in \SO(H,d)}
     |Q-T|^2_F
     =
     \min_{Q'\in \SO(d)} |BU^\top - Q' U^\top|_F^2
     =  \min_{Q'\in \SO(d)}\tr U(B-Q')^\top (B-Q')U^\top
    \\
    &=  \min_{Q'\in \SO(d)}\tr (B-Q')^\top (B-Q')U^\top U
     =  \min_{Q'\in \SO(d)}\tr (B-Q')^\top (B-Q')\id_d
     \\
     &=  \min_{Q'\in \SO(d)}|B-Q'|_F^2=\dist^2(B,\SO(d)).
     \end{split}
\end{align}
This ends the proof.
\end{proof}

Note that  an invertible matrix $B$ induces a bijective map from $\R^d$ to $H$ through the matrix representation $UB$. 
Its inverse has the  matrix representation
$B^{-1}U^\top$, indeed, then we find $B^{-1}U^\top UB=\id_d$.
Now we can define the extension of the distance $\sodist$ to the undercomplete case. We define for $f:\Omega\subset \R^d\to M\subset \R^D$
\begin{align}\label{eq:dist_f_iso_general2}
\begin{split}
\sodist_p^p(f, \Omega) &= 
\int_\Omega \mathrm{dist}^p(D f(z),\mathrm{SO}(d, T_{f(z)}M))
+ \mathrm{dist}^p\big( (D f)^{-1}(z),\mathrm{SO}(T_{f(z)}M,d)\big)\,\d z
\end{split}
\end{align} 
where $(Df)^{-1}(z)$ denotes the inverse of the map $Df(z)$ viewed as a linear map with target $T_{f(z)}M$. Note that for $d=D$ this definition agrees with the definition in \eqref{eq:dist_f_iso}.

By our results so far, we can equivalently write
\begin{align}
\begin{split}\label{eq:dist_f_iso_general3}
\sodist_p^p(f, \Omega) &= 
\int_\Omega \mathrm{dist}^p(P(D f(s)),\mathrm{SO}(d))
+ \mathrm{dist}^p\big( P^{-1}(Df(s)),\mathrm{SO}(d)\big)\,\d s
\\
&=
\int_\Omega \mathrm{dist}^p((D f(s)^\top Df(s))^{\frac12},\mathrm{SO}(d))
+ \mathrm{dist}^p\big( (D f(s)^\top Df(s))^{-\frac12},\mathrm{SO}(d)\big)\,\d s
\end{split}
\end{align} 
where $P(Df(s))=\sqrt{Df(s)^\top Df(s)}$ denotes the unique matrix $P$ in the polar decomposition $Df(s)=UP$.

We need the following extension of Lemma~\ref{le:prod_sod}.
\begin{lemma}\label{le:prod_sod2}
Suppose $H=\mathrm{Range}(A)$ for a full rank matrix $A\in \R^{D\times d}$.
Let $T\in \R^{D\times d}$ with $\mathrm{range}(T)\subset H$ and $S:H\to \R^d$
a linear map  which we identify with its matrix representation
$S\in \R^{d\times D}$ as explained above. Then $ST$ is the matrix representation of the concatenation of the linear maps, and we assume that $\det(ST)>0$. Then
 the bound
\begin{align}
\dist(ST,\SO(d))\leq \frac32\dist(S, \SO(H,d))
+ \frac32\dist(T, \SO(d,H))
\end{align}
holds. For $p\geq 1$  this implies
\begin{align}
\dist^p(ST,\SO(d))\leq 3^{p}\dist^p(S, \SO(H,d))
+ 3^{p}\dist^p(T, \SO(d,H)).
\end{align}
\end{lemma}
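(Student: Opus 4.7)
The plan is to reduce this undercomplete bound directly to the square-matrix case already established in Lemma~\ref{le:prod_sod}, using the polar decomposition $A = UP$ with $U \in \SO(d, D)$ and $\mathrm{Range}(U) = H$ to equip $S$ and $T$ with canonical $d \times d$ matrix representations. Concretely, by Lemma~\ref{le:polar} and its companion for maps out of $H$ proven earlier in this appendix, there are unique matrices $B_T, B_S \in \R^{d \times d}$ with $T = U B_T$ and $S = B_S U^\top$, and the distance identities $\dist(T, \SO(d, H)) = \dist(B_T, \SO(d))$ and $\dist(S, \SO(H, d)) = \dist(B_S, \SO(d))$ are already established.

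The key observation is that under these representations the composition collapses to an ordinary product of $d \times d$ matrices, since $U^\top U = \id_d$ gives $ST = B_S U^\top U B_T = B_S B_T$. Applying Lemma~\ref{le:prod_sod} to the pair $B_S, B_T$ therefore yields
$$\dist(ST, \SO(d)) = \dist(B_S B_T, \SO(d)) \leq \tfrac32 \dist(B_S, \SO(d)) + \tfrac32 \dist(B_T, \SO(d)),$$
and substituting the two distance identities produces the first inequality in the statement. The $p$-version follows from the scalar bound $(\tfrac32 a + \tfrac32 b)^p \leq 3^p(a^p+b^p)$ for $a,b\geq 0$ and $p \geq 1$, exactly as in the passage from \eqref{eq:sodist_1} to \eqref{eq:sodist_p} in the proof of Lemma~\ref{le:prod_sod}.

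The one point that needs a brief check — and the closest thing to an obstacle — is the determinant hypothesis of Lemma~\ref{le:prod_sod}: we are given only $\det(ST) = \det(B_S)\det(B_T) > 0$, which leaves open the case that both factors are orientation-reversing. This is handled by absorbing a reflection $R \in \mathrm{O}(d)$ with $R^2 = \id_d$ into the decomposition, writing $B_S B_T = (B_S R)(R B_T)$; the new factors have positive determinant, and each of their distances to $\SO(d)$ is no larger than the corresponding original distance, since for a matrix with negative determinant the distance to the orientation-reversing component of $\mathrm{O}(d)$ is at most the distance to $\SO(d)$ (same singular-value identity as in Lemma~\ref{le:sod}, applied on the other component). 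No further geometric input is required beyond this bookkeeping and the square-case lemma.
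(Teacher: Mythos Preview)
Your proposal is correct and follows essentially the same route as the paper: write $T = U B_T$ and $S = B_S U^\top$ via the polar factor $U$, collapse $ST = B_S B_T$ using $U^\top U = \id_d$, invoke the distance identities for $\SO(d,H)$ and $\SO(H,d)$, and apply Lemma~\ref{le:prod_sod}. Your extra paragraph on the determinant hypothesis is a point the paper's proof simply passes over (it applies Lemma~\ref{le:prod_sod} without verifying that $\det B_S$ and $\det B_T$ are individually positive); your reflection trick, together with the fact that for a negative-determinant matrix the distance to the orientation-reversing component of $\mathrm{O}(d)$ is $\sum_i(\sigma_i-1)^2 \leq \dist(\cdot,\SO(d))^2$, is a correct way to close that gap.
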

\begin{proof}
Denote as before the polar decomposition of $A$ by $A=UP$.
We have observed that there are matrices $B$, $B'$
such that $S=B'U^\top$ and $T=UB$.
Then we have $ST=B'U^\top UB=B'B$. Moreover, we have seen
that 
\begin{align}
\dist(T, \SO(d,H))
= \dist(B , \SO(d)), \quad 
\dist(S, \SO(H, d))=\dist(B',\SO(d)).
\end{align}
Thus we find using Lemma~\ref{le:prod_sod} (for $A=B'$, $B=B$)
\begin{align}
\begin{split}
\dist(ST,\SO(d))
=\dist(B'B,\SO(d))&\leq \frac32\dist(B',\SO(d))
+\frac32\dist(B,\SO(d))
\\
&=\frac32\dist(T, \SO(d,H))+\frac32\dist(S, \SO(H, d)).
\end{split}
\end{align}

\end{proof}

\section{Proof and extension of approximate linear identifiability for
approximate isometries}
\label{app:approximate_isometries}
In this section we provide  the proofs for the results in Section~\ref{sec:approximate_isometries}. 
First, we state the key rigidity statement that we use in the proof of Theorem~\ref{th:almost_orthogonal}.
\begin{theorem}[Theorem~3.1 in \citet{friesecke2002rigidity}]
\label{th:rigidity}
Let $\Omega\subset \R^d$ be a bounded Lipschitz domain\footnote{A Lipschitz domain is slightly informally a set whose boundary $\partial \Omega$ can be expressed as the union of the graphs of Lipschitz functions (see \cite{adams2003sobolev} for a complete definition).}. Then there is for $p>1$ a constant $C(\Omega,p)$ such that for each $u\in W^{1,p}(\Omega, \R^d)$
there is a linear function $L(s)=As+b $ with $b\in \R^d$, $A\in \SO(d)$
such that
\begin{align}\label{eq:rigidity_bound}
\lVert u-L\rVert_{W^{1,p}(\Omega)}^p\leq C(\Omega, p) 
\int_\Omega  \dist(Du(s), \SO(d))^p\, \d s.
\end{align}
\end{theorem}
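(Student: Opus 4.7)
The plan is to follow the strategy of Friesecke--James--M\"uller: reduce the nonlinear rigidity estimate to the linear Korn inequality via a polar-decomposition linearization, handle the ``large-strain'' region where $\dist(Du,\SO(d))$ need not be pointwise small by a Lipschitz truncation, and pass from a model domain to a general bounded Lipschitz $\Omega$ by a covering argument. The geometric fact driving the linearization is that $\SO(d)$ is a smooth submanifold of $\R^{d\times d}$ whose tangent space $\mathfrak{so}(d)$ at $I$ is Frobenius-orthogonal to the symmetric matrices, so in an $O(\eps)$-neighborhood of $I$ one has the equivalence $\dist(F,\SO(d)) \sim |\mathrm{sym}(F-I)|$ up to a quadratic remainder.

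I would first establish a small-strain version of \eqref{eq:rigidity_bound} under the additional hypothesis $\|\dist(Du,\SO(d))\|_{L^\infty(\Omega)} \le \eps_0$ for a small universal $\eps_0$. Selecting a constant rotation $\bar R \in \SO(d)$ close (on average) to the pointwise nearest rotations of $Du(s)$, the rescaled map $w(s) := \bar R^\top u(s)$ satisfies $|\mathrm{sym}(Dw(s)) - I| \le C\,\dist(Du(s),\SO(d))$ plus an absorbable quadratic term. Applying the classical Korn inequality on $W^{1,p}(\Omega,\R^d)$ to $s \mapsto w(s) - s$ yields an infinitesimal rigid motion $s \mapsto A_0 s + b_0$ with $A_0 \in \mathfrak{so}(d)$ that approximates $w - \mathrm{id}$ in $W^{1,p}$. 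Rounding $I + A_0$ to its nearest element of $\SO(d)$ then produces an honest rotation $A$, with the rounding error quadratic in $|A_0|$ and hence controllable under the smallness assumption.

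The technical heart is relaxing the $L^\infty$ hypothesis to the $L^p$ one actually stated. Here I would invoke an Acerbi--Fusco-type Lipschitz truncation: for a threshold $\lambda > 0$ one constructs $\tilde u \in W^{1,\infty}(\Omega,\R^d)$ with $\|D\tilde u\|_\infty \le C\lambda$ that agrees with $u$ on the good set $E_\lambda = \{M|Du| \le \lambda\}$ defined via the Hardy--Littlewood maximal function, and with $|\Omega \setminus E_\lambda| \lesssim \lambda^{-p}\|Du\|_{L^p}^p$. Choosing $\lambda$ of order one (after a preliminary normalization placing $\bar R$ near $I$) forces $D\tilde u$ into the small-strain regime of the previous step; simultaneously, on the bad set $\Omega \setminus E_\lambda$ one has $\dist(Du,\SO(d)) \gtrsim 1$, so the $L^p$-discrepancy between $u$ and $\tilde u$ is directly bounded by the right-hand side of \eqref{eq:rigidity_bound}.

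Finally, to lift the estimate from a reference cube to a general bounded Lipschitz $\Omega$, I would cover $\Omega$ by a finite family of star-shaped Lipschitz sub-domains; the local estimate on each piece yields a rigid motion $L_i$, and a chaining argument (two rigid motions close in $L^p$ on an overlap of positive measure must have close parameters) glues them into a single global $L$, with the constant depending only on the Lipschitz geometry of $\Omega$ and on $p$. The main obstacle, and the technical innovation of Friesecke--James--M\"uller, is the coupling between the truncation and the linearization: one must verify that the Lipschitz replacement $\tilde u$ still satisfies the pointwise smallness of $\dist(D\tilde u,\SO(d))$ on $E_\lambda$ without the truncation inflating this distance, and that Korn's constant (which degenerates as $p \to 1$ or $p \to \infty$) remains compatible with the chosen scale of $\lambda$. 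This interplay is precisely the reason the estimate requires $1 < p < \infty$.
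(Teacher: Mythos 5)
First, note that the paper does not prove this statement at all: it is imported verbatim as Theorem~3.1 of \citet{friesecke2002rigidity} (with the remark that the source handles $p=2$ and that the general-$p$ case is in Section~2.4 of \citet{conti2006rigidity}), so there is no in-paper proof to compare against. Your sketch must therefore be judged as a reconstruction of the Friesecke--James--M\"uller argument, and while the outer architecture you describe (truncation to a Lipschitz map, a small-strain core estimate, a covering/chaining step for general Lipschitz domains) matches the real proof, the core step is not the one FJM use, and as written it is circular.

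The gap is in the ``small-strain version.'' You assume $\lVert\dist(Du,\SO(d))\rVert_{L^\infty}\le\eps_0$ and then ``select a constant rotation $\bar R$ close (on average) to the pointwise nearest rotations of $Du(s)$,'' so that $w=\bar R^\top u$ has $Dw$ pointwise near $I$ and the quadratic remainder in the linearization $\dist(F,\SO(d))=|\mathrm{sym}(F-I)|+O(|F-I|^2)$ can be absorbed. But the hypothesis only gives that $Du(s)$ is close to a \emph{varying} rotation $R(s)$; even with $\dist(Du,\SO(d))\equiv 0$ the field $R(s)$ could a priori wander over all of $\SO(d)$, in which case $|Dw(s)-I|$ is of order one and the quadratic term cannot be absorbed. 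Showing that $R(s)$ is quantitatively close to a single constant rotation \emph{is} the content of the theorem, so you cannot ``select'' $\bar R$ as a preliminary step. This is also why geometric rigidity is strictly stronger than Korn's inequality rather than a corollary of it. The actual FJM argument replaces your Korn step by a harmonic-approximation argument built on the Piola identity: since $\Div\,\mathrm{cof}\,Du=0$ and $|F-\mathrm{cof}\,F|\le C(M)\,\dist(F,\SO(d))$ for $|F|\le M$, the truncated map is close (in the relevant elliptic norm) to a harmonic map whose gradient is $L^p$-close to $\SO(d)$; interior regularity for harmonic functions then yields pointwise proximity of the gradient to a \emph{constant} matrix on interior cubes, which is projected onto $\SO(d)$, and a weighted Whitney-type covering propagates this to the boundary. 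The restriction $1<p<\infty$ enters through the Calder\'on--Zygmund estimates for this elliptic replacement (and the $L^p$ maximal-function truncation), not primarily through Korn's constant. Your truncation step and your chaining step are essentially correct, but without replacing the Korn linearization by the Piola/harmonic-replacement mechanism (or an equivalent compensated-compactness input) the proof does not close.
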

\begin{remark}
\begin{itemize}
\item The reference above only stated the case $p=2$ the simple extension to general $p$ can be found in Section~2.4 in \citet{conti2006rigidity}.
\item The main interest for the study of elasticity is the bound on the gradient 
because this is related to the energy. Here, we are only interested in the deviation $L-u$ (where actually simpler proofs of similar results are possible  \citep{kohn1982new}). Using for $p<d$ the Sobolev embedding $W^{1,p}(\Omega)\hookrightarrow
L^q(\Omega) $ for $q=pd/(d-p)$ the Theorem above implies
\begin{align}\label{eq:rigidity_bound_sobolev}
\begin{split}
	\lVert u-&L\rVert_{L^q(\Omega)}\leq C(\Omega, p)
 \lVert u-L\rVert_{W^{1,p}(\Omega)}
	\\
	&\leq 
	 C(\Omega, p)
\left(\int_\Omega  \dist(Du(s), \SO(d))^p\, \d s\right)^{\frac{1}{p}}.
\end{split}
\end{align} 
\end{itemize}
\end{remark}

Let us now discuss one lemma that we split off from the proof of Theorem~\ref{th:almost_orthogonal} because
we will use it in the proof of Theorem~\ref{th:iso_ica} below again.

\begin{lemma}\label{le:simple_bound_T2}
Suppose that $d\leq D$ and $f:\Omega\subset \R^d\to M\subset \R^D$ and $g:\Omega'\to M$
are diffeomorphisms,  $\P$ and $\Q$ are measures on $\Omega$ and $\Omega'$ such that $f_\ast \P=g_\ast \Q$. Suppose the density of $\P$ is lower bounded
on $\Omega$.
Let $T=g^{-1}\circ f$, then 
\begin{align}
\begin{split}\label{eq:bound_dT}
\int_\Omega  \mathrm{dist}^p&(D T(z),\mathrm{SO}(d))\, \d z
\\
&\leq 
3^{p}\int_\Omega  \mathrm{dist}^p(Df(s) ,\mathrm{SO}(d, T_{f(s)}M))\, \d s
+C\int_{\Omega'}  \mathrm{dist}^p(D g^{-1}(g(s)) ,\mathrm{SO}(T_{g(s)}M, d))
\, \Q(\d s)
\end{split}
\end{align}
where $C>0$ depends on the lower bound of the density and $p$.
\end{lemma}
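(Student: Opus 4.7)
The plan is to combine pointwise application of Lemma~\ref{le:prod_sod2} with a change-of-variables argument that uses the pushforward relation $f_\ast \P = g_\ast \Q$ together with the lower bound on the density of $\P$.

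First, I would use the chain rule on $T = g^{-1}\circ f$ to write $DT(z) = Dg^{-1}(f(z))\cdot Df(z)$. The linear map $Df(z)$ sends $\R^d$ into the tangent space $H_z \defeq T_{f(z)}M$, while $Dg^{-1}(f(z))$ sends $H_z$ into $\R^d$. This is exactly the setting of Lemma~\ref{le:prod_sod2} (with $T=Df(z)$, $S=Dg^{-1}(f(z))$, and $H=H_z$), whose $p$-version yields the pointwise bound
\begin{align*}
\dist^p\bigl(DT(z),\SO(d)\bigr) \leq 3^p \dist^p\bigl(Df(z),\SO(d,H_z)\bigr) + 3^p \dist^p\bigl(Dg^{-1}(f(z)),\SO(H_z,d)\bigr).
\end{align*}
Here we use that $\det DT > 0$ (up to reflecting a coordinate, which does not affect distances), so the hypothesis of Lemma~\ref{le:prod_sod2} is satisfied.

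Integrating over $\Omega$ with respect to Lebesgue measure, the first term on the right is already the first term appearing in the claimed bound \eqref{eq:bound_dT}. For the second term I would perform a change of variables in two steps. Let $c > 0$ be a lower bound for the density of $\P$ on $\Omega$. Then for any nonnegative measurable $F$,
\begin{align*}
\int_\Omega F(f(z))\,\d z \leq c^{-1}\int_\Omega F(f(z))\,\P(\d z) = c^{-1}\int_M F(y)\,(f_\ast\P)(\d y) = c^{-1}\int_M F(y)\,(g_\ast\Q)(\d y) = c^{-1}\int_{\Omega'} F(g(s))\,\Q(\d s),
\end{align*}
using the pushforward identities $f_\ast\P = g_\ast\Q$ in the crucial middle step. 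Applying this to $F(y) = \dist^p(Dg^{-1}(y),\SO(T_y M, d))$ converts the remaining integral into precisely the second term on the right-hand side of \eqref{eq:bound_dT}, with constant $C = 3^p c^{-1}$ depending only on $p$ and the density lower bound.

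The steps are essentially all routine once the correct viewpoint is adopted; the only subtle point is the bookkeeping around the tangent-space-valued matrix representations needed to apply Lemma~\ref{le:prod_sod2}, which has already been carefully set up in Appendix~\ref{app:undercomplete}. The pushforward change of variables is the reason we need the density lower bound hypothesis: without it, the Lebesgue integral over $\Omega$ of the $Dg^{-1}$ term cannot be controlled by the $\Q$-weighted integral that one would naturally minimize in the definition \eqref{eq:def_of_g} of $g$.
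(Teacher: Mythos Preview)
Your proposal is correct and follows essentially the same approach as the paper: apply Lemma~\ref{le:prod_sod2} pointwise to the chain-rule factorization $DT(z)=Dg^{-1}(f(z))\,Df(z)$, integrate over $\Omega$, and then convert the $Dg^{-1}$ term from a Lebesgue integral to a $\Q$-integral via the density lower bound and the pushforward identity $f_\ast\P=g_\ast\Q$. Your identification of the constant $C=3^p c^{-1}$ is exactly what the paper obtains.
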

\begin{remark}
    For $d=D$ the formula \eqref{eq:bound_dT} simplifies to
    \begin{align}
\begin{split}
\int_\Omega  \mathrm{dist}(D T(z),\mathrm{SO}(d))^p\, \d z
\leq 
3^{p}\int_\Omega  \mathrm{dist}^p(Df(s) ,\mathrm{SO}(d))\, \d s
+C\int_{\Omega'}  \mathrm{dist}^p(D g^{-1}(g(s)) ,\mathrm{SO}(d))
\, \Q(\d s).
\end{split}
\end{align}
\end{remark}
\begin{proof}
First we note that $g^{-1}:M\to \Omega$
is by assumption differentiable and by the chain rule we have
$Dg^{-1}(f(s)) Df(s)=DT(s)$. 
We now find using Lemma~\ref{le:prod_sod2} that 
\begin{align}
\begin{split}\label{eq:bound_t_sod3}
\int_\Omega  \mathrm{dist}^p&(D T(s),\mathrm{SO}(d))\, \d s
=
\int_\Omega  \mathrm{dist}^p(D g^{-1}(f(s)) Df (s),\mathrm{SO}(d))\, \d s
\\
&\leq 
3^{p}\int_\Omega  \mathrm{dist}^p(D g^{-1}(f(s)) ,\mathrm{SO}(T_{f(s)}M, d))
+\mathrm{dist}^p( Df (s),\mathrm{SO}(d, T_{f(s)}M))
\, \d s
\end{split}
\end{align}
Now we use the  transformation formula for 
push-forward measures, and the assumption $g_\ast \Q=f_\ast \P$
and the lower bound on the density of $\P$
\begin{align}
\begin{split}\label{eq:trafo_g_f}
\int_\Omega  \mathrm{dist}^p(D g^{-1}(f(s)) ,\mathrm{SO}(T_{f(s)}M,d))
\, \d s
&\leq C\int_\Omega  \mathrm{dist}^p(D g^{-1}(f(s)) ,\mathrm{SO}(T_{f(s)}M,d))
\, \P(ds)
\\
&= C\int_{f(\Omega)}  \mathrm{dist}^p(D g^{-1}(x) ,\mathrm{SO}(T_{x}M,d))
\, (f_\ast\P)(\d x)
\\
&=C\int_{g(\Omega')}  \mathrm{dist}^p(D g^{-1}(x) ,\mathrm{SO}(T_{x}M,d))
\, (g_\ast\Q)(\d x)
\\
&=C
\int_{\Omega'}  \mathrm{dist}^p(D g^{-1}(g(s)) ,\mathrm{SO}(T_{g(s)}M,d))
\, \Q(\d s).
\end{split}
\end{align}
The last two displays together imply the claim.
\end{proof}

We now provide the extensions of Theorem~\ref{th:almost_orthogonal} to  $d<D$. 
In this case, we can still define the set $\mc{M}(f_\ast\P)$ as in \eqref{eq:def_of_M}. We define $g$ by
\begin{align}\label{eq:def_of_g2}
(g,\Q, \Omega')\in \argmin_{(\bar{g},\bar{Q},\bar{\Omega})\in \mc{M}(f_\ast\P)} \int_{\bar{\Omega}} \mathrm{dist}(D \bar{g}^{-1}(x),\mathrm{SO}(T_xM, d))^p\,\bar{g}_\ast\bar{\Q}(\d x).
\end{align}
Here we need to integrate the deviation from 
an isometry over the observational distribution $\bar{g}_\ast\Q=f_\ast \P$.
Again, this agrees with the definition of \eqref{eq:def_of_g} for $d=D$.
To avoid the assumption that the minimum exists, we also let $(g_\eps, \Q,\Omega')\in \mc{M}(f_\ast \P)$  
be any function such that 
\begin{align}
\begin{split}\label{def:g_eps}
\int_{\bar{\Omega}} \mathrm{dist}(D g_\eps^{-1}(x),\mathrm{SO}(T_xM, d))^p\,(g_\eps)_\ast\bar{\Q}(\d x)
    \leq 
    \inf_{(\bar{g},\bar{Q},\bar{\Omega})\in \mc{M}(f_\ast\P)} \int_{\bar{\Omega}} \mathrm{dist}(D \bar{g}^{-1}(x),\mathrm{SO}(T_xM, d))^p\,\bar{g}_\ast\bar{\Q}(\d x)+\eps.
    \end{split}
\end{align}

Then we can state and prove the following complete version of Theorem~\ref{th:almost_orthogonal}.

\begin{theorem}\label{th:almost_orthogonal2}
Suppose we have a latent distribution $\P\in \mc{P}$ 
satisfying Assumptions~\ref{as:P1}
with support $\Omega\subset \R^d$ where $\Omega$ is a bounded connected Lipschitz domain.
The observational distribution is given by $X=f(Z)\in M\subset \R^D$
where $Z\sim \P$ and $f\in \mc{F}(\Omega)$.
Fix a $1<p<\infty$.
Let $(g_\eps,\Q, \Omega')\in \mc{M}(f_\ast\P)$ be any function satisfying \eqref{def:g_eps}.
Then there is $A\in \SO(d)$ and $b\in \R^d$ such that $g_\eps^{-1}\circ f(z)= Az + b + h(z)$  and $h$ satisfies the bound
\begin{align}\label{eq:bound_h2}
\lVert h\rVert_{\P, q} \leq C \sodist_p(f, \Omega)+C\eps^{\frac{1}{p}}
\end{align} 
for $q=pd/(d-p)$.
Here $C$ is a constant depending on  $d$, $p$, $\Omega$,
and the lower and upper bound on the density of $\P$.
\end{theorem}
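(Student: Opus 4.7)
The plan is to reduce the statement to the rigidity result of Theorem~\ref{th:rigidity} applied to the composition $T = g_\eps^{-1} \circ f : \Omega \to \Omega'$. Write $h = T - L$, where $L(z) = Az + b$ is the affine map produced by the rigidity theorem; then $g_\eps^{-1} \circ f(z) = Az + b + h(z)$ is exactly the decomposition we want, with $A \in \SO(d)$ automatically. The main task therefore becomes bounding the rigidity defect $\int_\Omega \dist^p(DT(z), \SO(d))\,\d z$ by $C\sodist_p^p(f,\Omega) + C\eps$, after which the rigidity theorem plus the Sobolev embedding $W^{1,p}(\Omega) \hookrightarrow L^q(\Omega)$ for $q=pd/(d-p)$ yields $\lVert h\rVert_{L^q(\Omega)}^p \leq C\sodist_p^p(f,\Omega) + C\eps$, and the upper bound on the density of $\P$ converts this into the claimed bound $\lVert h\rVert_{\P,q}$.

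To control the rigidity defect, I would invoke Lemma~\ref{le:simple_bound_T2}, which directly gives
\begin{equation*}
\int_\Omega \dist^p(DT, \SO(d))\,\d z \leq 3^p \int_\Omega \dist^p(Df(s), \SO(d, T_{f(s)}M))\,\d s + C \int_{\Omega'} \dist^p(Dg_\eps^{-1}(g_\eps(s)), \SO(T_{g_\eps(s)}M, d))\,\Q(\d s).
\end{equation*}
The hypothesis that $\P$ has lower-bounded density on $\Omega$ is exactly what the lemma requires. The first summand is a term in $\sodist_p^p(f,\Omega)$, so it is bounded as desired. The real point is to bound the second summand: this is where the definition of $g_\eps$ enters.

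Here I would use that the triple $(f, \P, \Omega)$ itself lies in $\mc{M}(f_\ast\P)$, so by the defining property \eqref{def:g_eps} of $g_\eps$ together with change of variables $x = g_\eps(s)$ on the left and $x = f(s)$ on the right,
\begin{equation*}
\int_{\Omega'} \dist^p(Dg_\eps^{-1}(g_\eps(s)), \SO(T_{g_\eps(s)}M, d))\,\Q(\d s) \leq \int_\Omega \dist^p(Df^{-1}(f(s)), \SO(T_{f(s)}M, d))\,\P(\d s) + \eps.
\end{equation*}
Using now the upper bound on the density of $\P$ to pass from $\P(\d s)$ to Lebesgue $\d s$, the right-hand side is $\leq C \sodist_p^p(f,\Omega) + \eps$, which is the other half of $\sodist_p^p$. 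Plugging back into the rigidity-defect bound closes the estimate.

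The main obstacle, already dispatched in Appendix~\ref{app:undercomplete}, is the correct handling of the undercomplete case $d < D$: one must identify $Df(s)$ with a matrix mapping $\R^d \to T_{f(s)}M$ and $Dg_\eps^{-1}(f(s))$ with a matrix $T_{f(s)}M \to \R^d$ via the polar decomposition, so that Lemma~\ref{le:prod_sod2} applies to the composition $Dg_\eps^{-1}(f(s))\,Df(s) = DT(s)$; this is exactly what allows Lemma~\ref{le:simple_bound_T2} to be invoked with constant $3^p$. A minor bookkeeping point is that the rigidity theorem gives an arbitrary translation $b$ (not necessarily zero), but since $h$ is the $L^q$-error of the affine fit this is harmless. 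No existence of a minimizer of \eqref{eq:def_of_g2} is needed, because the $\eps$-error slack in \eqref{def:g_eps} enters additively and gives exactly the $C\eps^{1/p}$ term in \eqref{eq:bound_h2} after taking the $p$-th root.
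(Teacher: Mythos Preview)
Your proposal is correct and follows essentially the same route as the paper's own proof: apply Lemma~\ref{le:simple_bound_T2} to bound $\int_\Omega \dist^p(DT,\SO(d))\,\d z$, use the competitor $(f,\P,\Omega)\in\mc{M}(f_\ast\P)$ together with \eqref{def:g_eps} to control the $g_\eps$-term by $C\sodist_p^p(f,\Omega)+\eps$, then invoke the rigidity theorem and the Sobolev embedding, and finally pass from Lebesgue to $\P$ via the density upper bound. The ordering differs slightly (the paper bounds the defect first and applies rigidity afterward), but the content and all the key ingredients are identical.
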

It is clear that this theorem is more general than Theorem~\ref{th:almost_orthogonal}.
\begin{proof}
As before, we call the transition function $T=g^{-1} f:\Omega\to \Omega'$.
First, we observe that by definition of $g_\eps$
\begin{align}
    \begin{split}\label{eq:inf_eps}
        \int_{\bar{\Omega}} \mathrm{dist}(D g_\eps^{-1}(x),\mathrm{SO}(T_xM, d))^p\,(g_\eps)_\ast\bar{\Q}(\d x)
   & \leq 
    \inf_{(\bar{g},\bar{Q},\bar{\Omega})\in \mc{M}(f_\ast\P)} \int_{\bar{\Omega}} \mathrm{dist}(D \bar{g}^{-1}(x),\mathrm{SO}(T_xM, d))^p\,\bar{g}_\ast\bar{\Q}(\d x)+\eps
    \\
    &\leq 
    \int_{\Omega}  \mathrm{dist}^p(D f^{-1}(f(s)) ,\mathrm{SO}(T_{f(s)}M,d))
\, \P(\d s)+\eps.
    \end{split}
\end{align}
Here we used in the second step that  $(f,\P,\Omega)\in \mc{M}(f_\ast\P)$,
i.e., $f$ is a valid representation of our data so that this provides an upper bound on the infimum.
We now find using Lemma~\ref{le:simple_bound_T2} and \eqref{eq:inf_eps}
\begin{align}
\begin{split}\label{eq:bound_t_sod}
&\int_\Omega  \mathrm{dist}^p(D T(s),\mathrm{SO}(d))^p\, \d s
\\
&\leq 
3^{p}\int_\Omega  \mathrm{dist}^p(Df(s) ,\mathrm{SO}(d, T_{f(s)}M))\, \d s
+C\int_{\Omega'}  \mathrm{dist}^p(D g^{-1}(g(s)) ,\mathrm{SO}(T_{f(s)}M,d))
\, \Q(\d s)
\\
&\leq 
3^{p}\int_\Omega  \mathrm{dist}^p(Df(s) ,\mathrm{SO}(d, T_{f(s)}M))\, \d s
+C
\int_{\Omega}  \mathrm{dist}^p(D f^{-1}(f(s)) ,\mathrm{SO}(T_{f(s)}M,d))
\, \P(\d s)+C\eps
\\
&\leq
3^{p}\int_\Omega  \mathrm{dist}^p(Df(s) ,\mathrm{SO}(d, T_{f(s)}M))\, \d s
+C
\int_{\Omega}  \mathrm{dist}^p(D f^{-1}(f(s)) ,\mathrm{SO}(T_{f(s)}M,d))
\, \d s+C\eps
\\
&=C\sodist_p^p(f,\Omega)+C\eps.
\end{split}
\end{align}
Note that in the second to last step we used the upper bound on the density of $\P$.

 Now we apply Theorem~\ref{th:rigidity} (or rather its consequence
\eqref{eq:rigidity_bound_sobolev}  which states that there is 
a matrix $A\in \SO(d)$ and $b\in \R^d$ such that
\begin{align}
\left(\int_\Omega |T(s) - As-b|^q\, \d s \right)^{\frac{1}{q}}
\leq C(\Omega)\left(\int_\Omega  \mathrm{dist}^p(D T(s),\mathrm{SO}(d))\, \d s
\right)^{\frac{1}{p}}.
\end{align}
Using the upper  bound on the density of $\P$ and the last two displays we find
\begin{align}
\begin{split}
\left(\int_\Omega |T(s) - As-b|^q\, \P(\d s)\right)^{\frac{1}{q}}
&\leq C \left(\int_\Omega |T(s) - As-b|^q\, \d s\right)^{\frac{1}{q}}
\\
&\leq C\left(\int_\Omega  \mathrm{dist}^p(D T(s),\mathrm{SO}(d))\, \d s
\right)^{\frac{1}{p}}
\\
&\leq 
C\sodist_p(f,\Omega)+C\eps^\frac{1}{p}.
\end{split}
\end{align}
Here we used $(a+b)^{\frac{1}{p}}\leq (2a)^{\frac{1}{p}}+(2b)^{\frac{1}{p}}$
This completes the proof.
\end{proof}

\section{Proofs for the results on perturbed linear ICA}
\label{app:proofs_ica}
In this section we prove Theorem~\ref{th:pert}
and the extension in Theorem~\ref{th:mcc1}. Two technical difficulties when proving this result is that we consider a function defined on the sphere and that we need to whiten the data. Therefore,
we first prove a linearized result in Lemma~\ref{le:linearized} from which the Theorem can be  deduced after some technical algebraic manipulations. To motivate the calculations, 
let us first briefly sketch the well-known
unperturbed case of linear ICA \citep{hyvarinen2000independent}.
\paragraph{Proof sketch of the Linear Result}
We assume that $X$ is whitened and $X=AS$.
Let $w_0$ with $|w_0|=1$ be such that $w_0X=w_0AS=e_d S = S_d$.
Note that for $w$ in a neighborhood of $w_0$ we can write for some 
$\eps_i$ (small)
\begin{align}
wX = \sqrt{1-(\eps_1^2+\ldots + \eps_{d-1}^2)}
S_d+\eps_1 S_1+\ldots + \eps_{d-1} S_{d-1}.
\end{align}
Indeed, since $S_i$ are independent and with unit variance,
we find that the prefactor of $S_d$ has to be $\sqrt{1-(\eps_1^2+\ldots + \eps_{d-1}^2)}$ to ensure that $wX$ has unit variance.
Note that 
\begin{align}
1 -\sqrt{1-(\eps_1^2+\ldots + \eps_{d-1}^2)}
=\tfrac12 \sum_{i=1}^{d-1} \eps_i^2 + O(\eps^4)
\end{align} 
where $\eps^2=\sum_{i=1}^{d-1} \eps_i^2$ is the $l2$ norm.
Then we can Taylor expand (denoting $G'=g$)
\begin{align}
G(Xw)
=  G(S_d)+ \left(\sum_{i=1}^{d-1} \eps_i S_i-\frac12 S_d\sum_{i=1}^{d-1} \eps_i^2\right)g(S_d)
+\frac12 g'(S_1)\left(\sum_{i=1}^{d-1} \eps_i S_i\right)^2 + O(\eps^3).
\end{align}

Taking the expectation over this expression, we obtain
\begin{align}
\begin{split}
\E(G(Xw))&=\E(G(S_d))+\E\left( \left(\sum_{i=1}^{d-1} \eps_i S_i-\frac12 S_d\sum_{i=1}^{d-1}\eps_i^2\right)g(S_d)\right)
+\frac12\E\left( g'(S_d)\left(\sum_{i=1}^{d-1} \eps_i S_i\right)^2\right) + O(\eps^3)
\\
&=\E(G(S_d))-\frac12\E\left(S_d g(S_d)\right)\sum_{i=1}^{d-1}\eps_i^2
+\frac12 \E( g'(S_d))\sum_{i=1}^{d-1} \eps_i^2 + O(\eps^3)
\end{split}
\end{align}
where we used that $\E(S_i)=0$, $\E(S_i^2)=1$, and the fact that $S_i$ and $S_j$ are independent for $i\neq j$ so that $\E(S_ig(S_d))=\E(S_i)\E(g(S_d))=0$.
In particular, we obtain
\begin{align}
\E(G(Xw))&=\E(G(S_d))+\frac12 \left(\sum_i \eps_i^2\right) (\E(g'(S_d)-\E\left(S_dg(S_d)\right) + O(\eps^3).
\end{align}
We conclude that under the condition
\begin{align}
\E(g'(S_d))-\E\left(S_dg(S_d)\right)\neq 0
\end{align}
the function $w\to \E(G(wX))$  has a local extremum at $w=w_0$
and it is strictly  convex or concave around $w_0$.

\paragraph{The key lemma.}
We now generalize the reasoning above to the perturbed setting.
To simplify this further, we first assume $A=\mathrm{Id}$ so that 
\begin{align}\label{eq:data_id}
X = S + \eta h(S).
\end{align}
We also remove the linear whitening operation involved in $H$ and instead consider 
\begin{align}\label{eq:defHetah}
\tilde{H}_{\eta, h} ( w)=\E G\left(\frac{wX}{\sigma_w}\right)
\end{align}
where we used the shorthand $\sigma_w = \sqrt{\E((wX)^2)}$.
Clearly the function $\tilde{H}$ is invariant under rescaling of the argument, i.e., homogeneous of degree 0. 
Instead of restricting it to the sphere, we define the function $\bar{H}:\R^{d-1}\to \R$ given
by
\begin{align}\label{eq:defbarH}
 \bar{H}(\eps) = \tilde{H}_{\eta, h} ((\eps, 1)^\top)
\end{align} 
around $\eps=0$. Then our goal is to show that $\bar{H}$ has an extremum close to $\eps=0$ which allows us to  approximately recover
the independent component $S_d$.
We now prove the following Lemma.
\begin{lemma}\label{le:linearized}
Assume the contrast function $G$
and the distribution of the sources $S$ satisfy the Assumptions~\ref{as:ICA1}, \ref{as:ICA3}, and \ref{as:ICA4} and assume that
$\lVert h\rVert_{\P,q}\leq 1$ for $q=\max(3,d_g)$.
Define $v\in \R^{d-1}$ by 
\begin{align}
v_i = \E(h_d(S)g'(S_d) S_i + g(S_d) h_i(S)).
\end{align}
Then, there is $\eta_0>0$ (depending on all problem constants, e.g., $d$, $\alpha$, $q$, $M$ but not on $h$) such that for $\eta\leq \eta_0$ 
the function $\bar{H}$ has a local extremum $\eps_{0}$
in the vicinity of $\eps=0$ such that $|\eps_0|=O(\eta)$.
This extremum satisfies
\begin{align}\label{eq:local_extremum}
\eps_{0} = \frac{\eta v}{\alpha} + O(\eta^2).
\end{align}
Moreover there
is $\eta_0'>0$ (depending on the same quantities as $\eta_0$) such that for $\eta\leq \eta_0'$ 
there is a radius 
$\kappa>0$  such that $\bar{H}$ is strictly convex or concave on 
$B_{\kappa}(0)$ and satisfies
\begin{align}\begin{split} \label{eq:convexity_linaer}
D^2\bar{H}(\eps)\geq \frac{|\alpha_d|}{2} \cdot\mathrm{Id} \quad \text{for  $\alpha_d<0$ }\\
D^2\bar{H}(\eps)\leq \frac{-\alpha_d}{2} \cdot\mathrm{Id} \quad \text{for  $\alpha_d>0$ }.
\end{split}
\end{align}
\end{lemma}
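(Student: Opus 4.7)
The plan is to perform a Taylor expansion of $\bar H(\eps)$ jointly in $\eps$ and $\eta$ around $(0,0)$ and then invoke the implicit function theorem. As a preliminary step, I would use the pointwise growth bounds $|G^{(k)}(x)|\leq C_g(1+|x|)^{\max(d_g-k,0)}$ from Assumption~\ref{as:ICA1}, together with $\E(|S|^q)=M$ and $\lVert h\rVert_{\P,q}\leq 1$, to produce integrable dominants allowing differentiation under the expectation. This shows $\bar H$ is $C^3$ in a neighborhood of $0$, with bounds on the derivatives that depend only on $d$, $\alpha_d$, $A$, $M$, $C_g$, $q$ and not on further features of $h$.

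The main computation is that of $\nabla\bar H(0)$ and $\nabla^2\bar H(0)$ as functions of $\eta$. Writing $w=(\eps,1)^\top$ and expanding $\sigma_w^2=\E((wX)^2)=1+|\eps|^2+O(\eta^2)$ using $\E(S_iS_j)=\delta_{ij}$ and the centering condition $\E(S\,h(S)^\top)=0$, one can Taylor-expand $G(wX/\sigma_w)$ exactly as in the linear sketch preceding the lemma, but tracking the extra terms generated by replacing $X_i=S_i$ with $X_i=S_i+\eta h_i(S)$. The independence of the $S_i$ together with $\E(S_i)=0$ and the centering of $h$ eliminate the purely linear-in-$\eps$ contributions at $\eta=0$, leaving only the two mixed contributions that combine into
\begin{align*}
\nabla\bar H(0)=\eta\,v+O(\eta^2),\qquad
\nabla^2\bar H(0)=-\alpha_d\cdot\id+O(\eta),
\end{align*}
with $v_i=\E(h_d(S)g'(S_d)S_i+g(S_d)h_i(S))$, and the $O(\eta^2)$ and $O(\eta)$ remainders bounded in norm by constants times $\lVert h\rVert_{\P,q}^2$ and $\lVert h\rVert_{\P,q}$ respectively.

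The existence of the critical point $\eps_0$ then follows from the implicit function theorem applied to $F(\eps,\eta)=\nabla\bar H(\eps)$ at $(0,0)$: since $\partial_\eps F(0,0)=-\alpha_d\cdot\id$ is invertible thanks to Assumption~\ref{as:ICA4}, there is a unique $C^1$ branch of zeros $\eps_0(\eta)$ with $\eps_0(0)=0$, and differentiating $F(\eps_0(\eta),\eta)=0$ at $\eta=0$ yields $\eps_0'(0)=v/\alpha_d$, which is~\eqref{eq:local_extremum}. For the convexity/concavity statement~\eqref{eq:convexity_linaer}, continuity of $\nabla^2\bar H(\eps)$ jointly in $\eps$ and $\eta$ together with the expansion $\nabla^2\bar H(0)=-\alpha_d\cdot\id+O(\eta)$ implies that for $\kappa>0$ chosen small enough and $\eta\leq\eta_0'$, the Hessian stays within $|\alpha_d|/2$ of $-\alpha_d\cdot\id$ on $B_\kappa(0)$, which is precisely the quadratic bound claimed.

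The main technical obstacle is keeping all constants uniform in $h$: only $\lVert h\rVert_{\P,q}$ is allowed to enter them. I would handle this by systematically applying H\"older's inequality with exponents matched to the polynomial growth of $G^{(k)}$ (so that factors of $S$ contribute via $\E(|S|^q)=M$ and factors of $h(S)$ via $\lVert h\rVert_{\P,q}$), and by expanding $1/\sigma_w$ via $\sigma_w^{-1}=1-\frac12(\sigma_w^2-1)+O((\sigma_w^2-1)^2)$ to sufficient order so that the error in the $\eta v/\alpha_d$ formula is genuinely $O(\eta^2)$. The threshold $\eta_0$ must then be chosen small enough that both the Lipschitz condition in the IFT's fixed-point argument and the perturbation of the Hessian fall below their respective thresholds, and this choice depends only on the problem constants listed above, not on $h$ itself.
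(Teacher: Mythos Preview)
Your proposal is correct and well thought out, but it takes a genuinely different route from the paper's proof. Both arguments begin the same way: they establish (via domination and the growth bounds on $G^{(k)}$) the expansions $\nabla\bar H(0)=\eta v+O(\eta^{2})$ and $D^{2}\bar H(\eps)=-\alpha_d\,\id+O(|\eps|+\eta)$, with all implied constants uniform in $h$ as long as $\lVert h\rVert_{\P,q}\le 1$. From there the paths diverge. You invoke the (quantitative) implicit function theorem on $F(\eps,\eta)=\nabla_\eps\bar H(\eps;\eta)$ at $(0,0)$, using the invertibility of $\partial_\eps F(0,0)=-\alpha_d\,\id$ to produce the $C^{2}$ branch $\eps_0(\eta)$ and read off $\eps_0'(0)=v/\alpha_d$; the $O(\eta^{2})$ remainder then comes from the uniform $C^{3}$ bounds on $\bar H$. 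The paper instead avoids the IFT entirely: it writes down the explicit quadratic model $f(\eps)=\eta\,\eps\!\cdot\! v-\tfrac12\alpha_d|\eps|^{2}$, proves the pointwise bound $|\bar H(\eps)-\bar H(0)-f(\eps)|\le\Xi(|\eps|^{3}+2\eta^{3})$, localizes the maximizer $\eps_0$ of $\bar H$ inside a ball of radius $\rho\sim\eta$, and then runs a second-order Taylor comparison of $\bar H$ and $f$ along the ray from the model's maximizer $\eps_{\max}=\eta v/\alpha_d$ through $\eps_0$ to $\partial B_{2\rho}(0)$ to squeeze out $|\eps_0-\eps_{\max}|=O(\eta^{2})$.

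Your approach is the more standard one and packages the existence, uniqueness, and expansion of $\eps_0$ in a single stroke; the main care required is exactly what you flag, namely making the IFT neighborhood and all remainder constants depend on $h$ only through $\lVert h\rVert_{\P,q}$, which your Hölder-based domination scheme handles. The paper's argument is more hands-on and keeps every constant visible throughout, at the price of the somewhat ad hoc ray construction; it also delivers the function-value bound $|\bar H(\eps)-\bar H(0)-f(\eps)|\le\Xi(|\eps|^{3}+\eta^{3})$ as a byproduct, which is slightly more information than your derivative-only expansion. For the convexity claim \eqref{eq:convexity_linaer} both arguments coincide: the Lipschitz-type estimate $|D^{2}\bar H(\eps)+\alpha_d\,\id|\le C(|\eps|+\eta)$ (which you obtain from your uniform $C^{3}$ bounds) is exactly what the paper uses.
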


\begin{proof}
Since we are interested in small $\eps$ we will always assume that $|\eps|\leq 1$.
We will  denote $w=(\eps,1)^\top$.
Then we have
\begin{align}
wX = S_d +\sum_{i=1}^{d-1} \eps_i S_i + \eta w\cdot  h(S).
\end{align}
We define $\Omega = \E(h(S)^\top h(S))$, i.e., the covariance of the nonlinear part.
We find
\begin{align}
E((wX)^2)=
\E\left(S_d+\sum_{i=1}^{d-1} \eps_i S_i+\eta w\cdot h(S)\right)^2
= 1+ \sum_{i=1}^{d-1} \eps_i^2+\eta^2 w\cdot \Omega w
\end{align}
where we used that $S_i$ are unit variance and uncorrelated and
$\E(S^\top h(S))=0$.
In particular, we conclude that
\begin{align}
\sigma_{w}=\sqrt{1+ \sum_{i=1}^{d-1} \eps_i^2+\eta^2 w\cdot \Omega w}.
\end{align}
Note that $|\Omega| \leq \lVert h\rVert_{\P,2}^2\leq \lVert h\rVert_{\P,q}^2\leq 1$ by assumption.
Let us introduce some notation.
We consider the shorthand
\begin{align}
S_\eps=\sum_{i=1}^{d-1} \eps_i S_i.
\end{align}
We also use
\begin{align}
h_w(S)=w\cdot h(S), \quad h_\eps(S)=\sum_{i=1}^{d-1} \eps_i h_i(S).
\end{align}
Since $w=(\eps_1,\ldots, \eps_{d-1},1)^\top$ we have
\begin{align}
h_w(S)=h_\eps(S)+h_d(S).
\end{align}
Now we  perform a second order Taylor expansion of $G$ 
around $S_d$ with remainder term.
We obtain that for some  $\xi \in [S_d, wX/\sigma_w]$
\begin{align}
\begin{split}\label{eq:Gexp}
G\left(\frac{wX}{\sigma_w}\right)&=G\left(S_d\right)
+ g\left(S_d\right)
\frac{S_\eps + \eta h_w(S)+S_d (1-{\sigma_{w}})}{\sigma_{w}}
\\
&\quad + \frac12 g'\left({S_d}\right)
\left(\frac{S_\eps + \eta h_w(S)+ S_d(1-\sigma_w)}{\sigma_{w}}\right)^2
+
 \frac16 g''(\xi)
\left(\frac{S_\eps + \eta h_w(S)+S_d(1-\sigma_w)}{\sigma_{w}}\right)^3.
\end{split}
\end{align}
Our goal is to extract the quadratic terms in $\eps$ and $\eta$ of this expression.
We now start to bound the error term and show that it is of order 3.
Since $\xi \in [S_d, wX/\sigma_w]$ and $\sigma_w>1$ we conclude that
$|\xi|\leq \max(|wX|,|S_d|)$.
Then we can control using Assumption~\ref{as:ICA1}
\begin{align}
\begin{split}
|g''(\xi)|\leq C_g (1 + |\xi|^{\max(d_g-3,0)})
&\leq C_g(1 + \max(|wX|, |S_d|)^{\max(d_g-3,0)})
\\
&\leq C(1+|S|^{\max(d_g-3,0)}+|\eta h(S)|^{\max(d_g-3,0)}).
\end{split}
\end{align}
We have  the simple bound
\begin{align}
\left| \sigma_w - 1 \right|\leq
|\eps|^2+|\eta|^2 |w|^2\cdot | \Omega|
\leq |\eps|^2+|\eta|^2 |w|^2. 
\end{align}
Then we can bound (using $\eta^2|\eps|+|\eps|^2\eta\leq |\eps|^3+\eta^3$)
\begin{align}
\left|\frac{S_\eps + \eta h_w(S)+S_d(1-\sigma_w)}{\sigma_{w}}\right|^3
\leq C(|\eps|^3 + \eta^3)|S|^{3}+\eta^3|h(S)|^{3}
\end{align}
This implies (recall that $q=\max(3, d_g)$)
\begin{align}\label{eq:err1}
\frac16 \left|g''(\xi)
\left(\frac{S_\eps + \eta h_w(S)+S_d(1-\sigma_w)}{\sigma_{w}}\right)^3\right|
\leq C(d,C_g, q) (|\eps|^3 + \eta^3)\left(1 + |S|^{q}+|h(S)|^{q}\right).
\end{align}
Next, we consider the second term 
\begin{align}
\frac12 g'\left({S_d}\right)
\left(\frac{S_\eps + \eta h_w(S)+ S_d(1-\sigma_w)}{\sigma_{w}}\right)^2
\end{align}
which we approximate to 2nd order in 
$\eps$ and $\eta$ and put all terms of order 3 and higher in the error term. Note that $S_\eps=O(\eps)$. We expand $\eta h_w(S)
=\eta h_d(S)+\eta h_\eps(S)$ where
$\eta h_d(S)=O(\eta)$ and  $\eta h_\eps(S)=O(\eta\eps)$. Finally $(1-\sigma_w)=O(|\eps|^2+\eta^2)$.
We conclude that 
\begin{align}
\left|\left(\frac{S_\eps + \eta h_w(S)+ S_d(1-\sigma_w)}{\sigma_{w}}\right)^2
- \left(\frac{S_\eps + \eta h_d}{\sigma_w}\right)^2\right|\leq C(|S|^2+|h(S)|^2)
\end{align}
where we again bounded mixed terms by, e.g.,  $|\eps|^2\eta \leq |\eps|^3+\eta^3$.
Using in addition that $1-\sigma_w^{-1}\leq C( |\eps|^2 + \eta^2)$ to replace $\sigma_w$ in the denominator by $1$ 
up to third order error terms,
we find
\begin{align}\label{eq:err2}
\left|g'(S_d)\left(\frac{S_\eps + \eta h_w(S)+S_d(1-\sigma_w)}{\sigma_w}\right)^2-
g'(S_d)\left(S_\eps + \eta h_d(S)\right)^2
\right|
\leq C (|\eps|^3 + \eta^3) (1+|S|^{\max(d_g-2,0)}) (|S|^2+|h(S)|^2).
\end{align}
Finally we consider the term proportional to $g(S_d)$. 
Here we need the sharper bound 
\begin{align}
\left| \sigma_w - 1 - \frac{|\eps|^2}{2}-\frac{\eta^2}{2}w\cdot \Omega w \right|\leq
2|\eps|^4+2\eta^4 (|w|^2 | \Omega|)^2\leq 2|\eps|^4+2\eta^4|w|^4
\end{align}
which implies using $w=e_d+O(\eps)$ that $w\cdot \Omega w= \Omega_{dd}+O(\eps)$ and moreover (recall $|\Omega|\leq 1$)
\begin{align}
\left| \sigma_w - 1 - \frac{|\eps|^2}{2}-\frac{\eta^2}{2}\Omega_{dd} \right|\leq
2|\eps|^4+2\eta^4 |w|^4+3 |\eps|\eta^2 .
\end{align}
Then we get, again keeping terms up to order 2 in $\eps$ or $\eta$
\begin{align}\label{eq:err3}
\begin{split}
&\left|g(S_d) \frac{S_\eps + \eta h_w(S)+S_d (1-{\sigma_{w}})}{\sigma_{w}}
- g(S_d)\left(S_\eps + \eta h_w(S)-\frac12 (|\eps|^2+\eta^2 \Omega_{dd})S_d\right)\right|
\\
&\qquad \qquad\leq C (|\eps|^3+\eta^3)|S|^{\max(d_g-1,0)}(|S|+|h(S)|).
\end{split}
\end{align}
Using the bounds \eqref{eq:err1}, \eqref{eq:err2}, and \eqref{eq:err3}
in \eqref{eq:Gexp}
we obtain
\begin{align}
\begin{split}
\left|
G(wX)-G(S_d)- g(S_d)\left(S_\eps + \eta h_w(S)-\frac12(|\eps|^2+\eta^2 \Omega_{dd}) S_d\right)-\frac12
g'(S_d)\left(S_\eps + \eta h_d(S)\right)^2
\right|
\\
\leq
C(|\eps|^3+|\eta|^3) (1+|S|^{q}+ |h(S)|^{q}).
\end{split}
\end{align}
Let us call the term between the absolute value on the left-hand side
$T$.
Then we can bound using Assumption~\ref{as:ICA3}
\begin{align}\label{eq:taylor_final}
|\E T|\leq \E|T| \leq C(|\eps|^3+\eta^3)\E(|S|^{q}+ |h(S)|^{q})
\leq C (|\eps|^3+|\eta|^3) (M+\lVert h\rVert_{\P,q}^q)
= \Xi(|\eps|^3+|\eta|^3)
\end{align}
where $\Xi=C(M+2)$ was introduced for future reference and depends on $C_g$, $d$,
$q$, and the moment bound  $M$  but is independent of $\eps$ and $\eta$ and $h$.
We observe next, using  $\E(S_dS_\eps)=0$, $\E(g(S_dS_\eps)=0$ that
\begin{align}
\begin{split}\label{eq:boundET}
\E T&=\E G\left(\frac{wX}{\sigma_w}\right) -\E(G(S_d))-\eta  \E(h_w(S)g(S_d))
+\frac12(|\eps|^2+\eta^2 \Omega_{dd})\E(S_dg(S_d))
\\
&\quad
-\frac12|\eps|^2\E (g'(S_d)) -\eta \E(h_d(S) S_\eps g'(S_d))
-\frac12\eta^2 \E(g'(S_d)h_d(S)^2)
\\
&=\E G\left(\frac{wX}{\sigma_w}\right)  -\E(G(S_d))-\eta  \E(h_d(S)g(S_d))
+\frac12(\eta^2 \Omega_{dd})\E(S_dg(S_d))
-\frac12\eta^2 \E(g'(S_d)h_d(S)^2)
\\
&\quad
-\eta \E(h_d(S) S_\eps g'(S_d))-
\eta  \E(h_\eps(S)g(S_d))
+\frac12|\eps|^2\left(\E g(S_d)S_1-\E g'(S_d)\right).
\end{split}
\end{align}
We thus obtained an expansion of $H(w)=\E G\left({wX}/{\sigma_w}\right)$
up to second order in $\eps$. Note that by plugging in $\eps_i=0$ for all $i$, i.e., 
$w=e_d$, in \eqref{eq:taylor_final}
\begin{align}\label{eq:boundETd}
\left|\E G\left(\frac{e_dX}{\sigma_{e_d}}\right)  -\E(G(S_d))-\eta  \E(h_d(S)g(S_d))
+\frac12(\eta^2 \Omega_{dd})\E(S_dg(S_d))
-\frac12\eta^2 \E(g'(S_d)h_d(S)^2)\right|\leq \Xi \eta^3 
\end{align}
so we conclude using the triangle inequality from \eqref{eq:taylor_final},  \eqref{eq:boundET}, and 
\eqref{eq:boundETd} that 
\begin{align}\label{eq:final_H}
\left|\E G\left(\frac{wX}{\sigma_w}\right) 
- \E G\left(\frac{e_dX}{\sigma_{e_d}}\right) 
-\eta \E(h_d(S) S_\eps g'(S_d))
- \eta \E(h_\eps(S) g(S_d))
+\frac12|\eps|^2\left(\E g(S_d)S_d-\E g'(S_d)\right)
\right|\leq \Xi(|\eps|^3+2\eta^3).
\end{align}
As a next step, we consider the derivatives of $H$.
Since the reasoning is similar to the steps above, we provide slightly fewer details, i.e., we hide integrable
terms in the $O$ notation. 
First we observe that 
\begin{align}\label{eq:deriv_sigma}
\partial_i \sigma_w = \frac{\eps_i+\eta^2 (\Omega w)_i}{\sigma_w}. 
\end{align}
From here we obtain
\begin{align}\label{eq:deriv_H}
\partial_i G\left(\frac{wX}{\sigma_w}\right)=\left( \frac{S_i+ \eta h_i(S)}{\sigma_w}
-\frac{(\eps_i+\eta^2 (\Omega w)_i)(wS + \eta h_w(S))}{\sigma_w^3}\right)
 g \left(\frac{wX}{\sigma_w}\right).
\end{align}
We apply a first order Taylor expansion to $g$ and obtain the bound
$ g \left(\frac{wX}{\sigma_w}\right)=g(S_d)+O(|\eps|+\eta)$.
Thus, we find
\begin{align}\label{eq:bound_DH}
\partial_i \bar{H}(\eps)=
\partial_i 
\E G\left(\frac{wX}{\sigma_w}\right)=\frac{1}{\sigma_w}\E(S_ig(S_d))+
O(|\eps|+\eta)=O(|\eps|+\eta)
\end{align}
where we used that $S_j$ and $S_d$ are independent.

Next we show  using   the bound \eqref{eq:boundg} from Assumption~\ref{as:ICA1}  that the function $\eps\to w=(\eps, 1)^\top\to \E(G(wX/\sigma_w))$ is strictly  convex or concave  around
$\eps=0$ for $\eta$ sufficiently small. 
For this, we need to find an expression for the second derivatives of $G$.
To keep the length of the formulas manageable, we hide the $\eta^2 (\Omega w)_i$ as a $O(\eta^2)$ term.
We find for $1\leq i,j\leq d-1$ using \eqref{eq:deriv_H}
and \eqref{eq:deriv_sigma}
\begin{align}
\begin{split}
\partial_i\partial_j 
&G\left(\frac{wX}{\sigma_w}\right)
=\partial_i \left(\left( \frac{S_j+ \eta h_j(S)}{\sigma_W}
-\frac{(\eps_j+\eta^2(\Omega w)_j)(wS + \eta h_w(S))}{\sigma_w^3}\right)
 g \left(\frac{wX}{\sigma_w}\right)\right)
\\
& =
 \left( \frac{S_j+\eta h_j(S)}{\sigma_w}+ 
-\frac{\eps_j(wS + \eta h_w(S))}{\sigma_w^3}\right)
\left( \frac{S_i+ \eta h_i(S)}{\sigma_w}
-\frac{\eps_i(wS + \eta h_w(S))}{\sigma_w^3}\right)
 g' \left(\frac{wX}{\sigma_w}\right)
 \\
&  -
 \left( \frac{\eps_j (S_i+\eta h_i(S) )
 +\delta_{ij}(wS+\eta h_w(S))
 +\eps_i( S_j+\eta h_j(S)) }{\sigma_w^3}-\frac{3\eps_i\eps_j(wS+\eta h_w(S))}{\sigma_w^5}\right) g \left(\frac{wX}{\sigma_w}\right)+O(\eta^2)
 \\
 &=
 S_j S_i g' \left(\frac{wX}{\sigma_w}\right)
 -\delta_{ij}S_1
 g \left(\frac{wX}{\sigma_w}\right)
 +O(|\eps|+\eta).
 \end{split}
\end{align}
Here, we used again that $\sigma_w = 1+O(|\eps|+\eta)$.
Now we apply a Taylor expansion to $g$ and $g'$
and obtain $g(wX/\sigma_w)=g(S_d)+O(|\eps|+\eta)$ and
similarly for $g'$ (here similar power counting as in the first part implies that the highest moment that needs to be bounded is $q=\max(3, d_g)$, we do not show this in full detail here).
Then we obtain
\begin{align}
\begin{split}
\partial_i\partial_j \bar{H}(\eps)=
\partial_i \partial_j \E
G\left(\frac{wX}{\sigma_w}\right)
&= \E\left(S_iS_j g'\left(\frac{wX}{\sigma_w}\right)
\right)
- \delta_{ij}
\E\left(S_d g\left(S_d\right)
\right)+O(|\eps|+\eta)
\\
&=\delta_{ij} \E(g'(S_d)-S_dg(S_d))+ O(|\eps|+\eta)
= -\alpha_d \delta_{ij}+O(|\eps|+\eta).
\end{split}
\end{align}
In particular, we find
\begin{align}\label{eq:bound_D2H}
| D^2 \bar{H}(\eps)  + \alpha_d\cdot  \mathrm{Id}| =O(|\eps|+\eta).
\end{align}
We conclude that for $\eta<\eta_0$ with $\eta_0$ sufficiently small, there is for some $\kappa>0$ a neighborhood
$B_\kappa(0)$ such that the function 
$B_\kappa(0)\ni \eps \to \bar{H}(\eps)$
is strictly convex or concave  (depending on the sign of
$\alpha_d=\E(S_dg(S_d)-g'(S_d))$) with $D^2\bar{H}\geq |\alpha_d|/2 \cdot \id$ in the convex case. We emphasize that $\kappa$ is independent of $\eta$ as soon as $\eta\leq \eta_0$ is sufficiently small.

It remains to prove the existence of a maximum or minimum and the expansion \eqref{eq:local_extremum}.
To achieve this, we compare $\bar{H}$ with its expansion to second order, i.e., we define 
\begin{align}
f(\eps) = \eta \E(h_d(S) S_\eps g'(S_d))+\eta \E(h_\eps(S)g(S_d))
-\frac12|\eps|^2\left(\E g(S_d)S_d-\E g'(S_d)\right)
= \eps \eta v -\frac12 \alpha|\eps|^2 
\end{align}
where we recall that we defined  $v\in \R^{d-1}$ 
by $v_i=\E(h_d(S) S_i g'(S_d))+\E(h_i(S)g(S_d))$ for $i=1,\ldots, d-1$ 
and $\alpha_d= \E g(S_d)S_d-\E g'(S_d)$.
Then we can rewrite \eqref{eq:final_H} as
\begin{align}\label{eq:boundfH}
\left|\bar{H}(\eps)-\bar{H}(0)-f(\eps)\right|< \Xi(|\eps|^3+2\eta^3).
\end{align}
In other words $\bar{H}$ agrees with $f$ up to a constant term and error terms of order 3 in $\eps$ and $\eta$.
If we just use this bound on $|\bar{H}-f|$ 
we could prove \eqref{eq:local_extremum} but only with an error term of order $\eta^{\frac32}$. To obtain the better rate $\eta^2$ we need to  consider a second order expansion.

We assume now that $\alpha_d>0$ such that $\bar{H}$ and $f$ are concave
in a neighborhood $B_{\kappa}(0)$ (the proof for $\alpha_d<0$ is very similar) for $\eta<\eta_0$. For $\alpha_d<0$
a similar reasoning applies and only some inequalities are reversed.

We first expand the function $f$. The relation $\alpha_d> 0$ implies that $f$ is maximized  at 
\begin{align}\label{eq:fmax}
\eps_{\max}=\frac{\eta v}{\alpha_d}, \quad \text{and} 
\quad 
f(\eps)\leq f(\eps_{\max})=\frac{\eta^2|v|^2}{2\alpha_d}.
\end{align} 
and we have for
$\eps = \eps_{\max}+\Delta\eps$ the expansion
\begin{align}\label{eq:exp_f}
f(\eps)=f(\eps_{\max}+\Delta \eps)
=f(\eps_{\max}) -\frac12 \alpha_d |\Delta\eps|^2.
\end{align}

Now we consider similar expansions for $\bar{H}$.
For concreteness, we introduce the constant $\Xi_2$
such that 
\begin{align}
| D^2 \bar{H}(\eps) +  \alpha_d\cdot  \mathrm{Id}| &\leq \Xi_2(|\eps|+\eta).
\end{align}
Such a constant exists by 
\eqref{eq:bound_D2H}.

Recall that we assumed  that $\alpha_d>0$ such that $\bar{H}$ is concave
in a neighborhood $B_{\kappa}(0)$ for $\eta<\eta_0$. 
Suppose $\eps_0$ is the unique global maximum of $\bar{H}$ on $\overline{B_\rho(0)}$
for  
\begin{align}
    \label{eq:def_rho}
    \rho= \frac{2\eta(|v|+1)}{|\alpha_d|}
\end{align}
where we assume that $\eta$ is sufficiently small such that
$\bar{H}$ is uniformly convex on $B_\rho$.
Note that then either $D\bar{H}(\eps_0)=0$ or $\eps_0\in \partial B_\rho(0)$ and $D\bar{H}(\eps_0)(\eps-\eps_0)<0$ for all $\eps \in B_\rho(0)$. 
Now, the general heuristic is that $\bar{H}$ roughly behaves like a parabola with vertex $\eps_0$  and $f$ is a parabola with vertex $\eps_{\max}$ and both parabolas have 
approximately the same second derivative. Then, the distance 
between the two parabolas will increase for large arguments, leading to a contradiction to \eqref{eq:boundfH}.

We now define the point $\eps_1$ as the intersection of the ray from 
$\eps_{\max}$ to $\eps_0$ with the set $\partial B_{2\rho}(0)$. We define 
\begin{align}
    \delta = |\eps_{\max}-\eps_0|
\end{align} (the quantity we want to bound) and
\begin{align}
    \mu=|\eps_0-\eps_1|\geq \rho
\end{align}
(since $\eps_0\in B_\rho(0)$ and
$|\eps_1|=2\rho $). Note that $\eps_1$ is on the ray from $\eps_{\max}$
to $\eps_0$ so we have $|\eps_1-\eps_{\max}|=
|\eps_1-\eps_{0}|+|\eps_0-\eps_{\max}|=\delta+\mu$.
We then find using \eqref{eq:exp_f}
\begin{align}\label{eq:f_eps0_eps1}
    f(\eps_0)-f(\eps_1)=
    =(f(\eps_{\max})-\frac12 \alpha_d \delta^2)-(f(\eps_{\max})- \frac12 \alpha_d
    (\delta+\mu)^2)
    =-\frac12 \alpha_d \delta^2+ \frac12 \alpha_d
    (\delta+\mu)^2
    =
    \frac12 \alpha_d(\mu^2+2\mu \delta).
\end{align}

We now derive a similar bound for $\bar{H}$ for which we need the second order Taylor expansion with integral remainder which reads for $g:\R^d\to \R$ with $g\in C^2$ as follows
\begin{align}\label{eq:second_order}
    g(x)=g(x_0) + Dg(x_0)(x-x_0)+
    \int_0^1 (1-t) (x-x_0)^\top D^2g(x_0+t(x-x_0)) (x-x_0) \, \d t.
\end{align}
We apply this with $\eps_0$ and $\eps_1$. 
We observe that since $\eps_1-\eps_0$ and 
$\eps_{\max}-\eps_0$ point in opposite directions, the relation
$D\bar{H}(\eps_0)(\eps_{\max}-\eps_0)\leq 0$ (since $\eps_{\max}\in B_\rho(0)$)
implies 
\begin{align}
    D\bar{H}(\eps_0)(\eps_1-\eps_0)\geq 0. 
\end{align}
Then we find from the second order expansion \eqref{eq:second_order} using the last display and \eqref{eq:bound_D2H}
\begin{align}
\begin{split}\label{eq:H_eps0_eps1}
   \bar{H}(\eps_1)-\bar{H}(\eps_0)
   &=  D\bar{H}(\eps_0)(\eps_1-\eps_0)+
    \int_0^1 (1-t) (\eps_1-\eps_0)^\top D^2\bar{H}(\eps_0+t(\eps_1-\eps_0)) (\eps_1-\eps_0) \, \d t
\\
&\geq 0
-\left(\alpha_d +\Xi_2(2\rho + \eta)\right)(\eps_1-\eps_0)^2 \int_{0}^1 (1-t)\,\d t
\\
&=-\frac12\left(\alpha_d +\Xi_2(2\rho + \eta)\right)\mu^2.
   \end{split}
\end{align}
Using \eqref{eq:boundfH} followed by \eqref{eq:f_eps0_eps1} 
and \eqref{eq:H_eps0_eps1} we now find
\begin{align}
\begin{split}
    4\Xi ( 4\rho^3+\eta^3)
    &\geq |\bar{H}(\eps_0)-\bar{H}(0)-f(\eps_0)|
    + |\bar{H}(\eps_1)-\bar{H}(0)-f(\eps_1)|
   \\
   &\geq -\left(\bar{H}(\eps_0)-\bar{H}(0)-f(\eps_0)\right)
    +\left(\bar{H}(\eps_1)-\bar{H}(0)-f(\eps_1)\right)
    \\
    &=(\bar{H}(\eps_1)-\bar{H}(\eps_0))
    +(f(\eps_0)-f(\eps_1))
    \\
   & \geq 
    -\frac12\left(\alpha_d +\Xi_2(2\rho + \eta)\right)\mu^2
    +\frac12 \alpha_d(\mu^2+2\mu \delta)
    \\
   & = \alpha_d \mu\delta -\frac12 \Xi_2(2\rho + \eta)\mu^2.
    \end{split}
\end{align}
We conclude using $\rho\leq \mu\leq 2\rho$ and the definition
\eqref{eq:def_rho} that
\begin{align}
\begin{split}
    \delta &\leq \frac{  4\Xi ( 4\rho^3+\eta^3)}{\alpha_d \mu}
    +\frac1{2\alpha_d} \Xi_2(2\rho + \eta)\mu
    \\
    &\leq \frac{4\Xi}{\alpha_d}\left(4\rho^2 +\frac{\eta^3}{\rho}\right)
    +\frac1{\alpha_d} \Xi_2(2\rho^2+\eta\rho)
    \\
    &\leq \frac{4\Xi}{\alpha_d}\left(\frac{16\eta^2(|v|+1)^2}{\alpha_d^2} +\frac{\alpha_d\eta^2}{2(|v|+1)}\right)
    +\frac{ \Xi_2}{\alpha_d}\left(\frac{8\eta^2(|v|+1)^2}{\alpha_d^2}+
    \frac{2\eta^2(|v|+1)}{\alpha_d}\right)
    \\
    &\leq C\eta^2.
    \end{split}
\end{align}
For $\eta$ sufficiently small this implies that $\delta<\rho/2$ and thus $\eps_0\in B_\rho(0)$ (i.e., in the interior and not on the boundary) so that $\eps_0$ really is a local maximum of $\bar{H}$.
The proof for $\alpha_d<0$ follows similarly.

\end{proof}

We now extend Lemma~\ref{le:linearized} by considering  $A\neq \mathrm{Id}$, including the whitening, and considering the function on the sphere, thus proving Theorem~\ref{th:pert}.

\begin{proof}[Proof of Theorem~\ref{th:pert}]
Recall that we defined $H:S^{d-1}\to\R$ by
\begin{align}
H_\eta(w)=\E G(w^\top \Sigma_X^{-\frac12} X)
\end{align}
where $\Sigma_X=\E(XX^\top)=AA^\top + \eta^2\Omega$
with $\Omega=\E(h(S)h(S)^\top)$. Note that $\Sigma_X$ depends implicitly on $\eta$ so we will indicate this dependence in the following for some quantities.
We now relate this to the setting in Lemma~\ref{le:linearized}.
The function $H$ defines a map  on the manifold $S^{d-1}$. We analyze its properties by considering a suitable chart. 
We define the map $T:\R^{d-1}\to S^{d-1}$  by
\begin{align}
T_\eta(\eps)=\frac{\Sigma_X^{\frac12}A^{-\top}\begin{pmatrix}
\eps \\ 1\end{pmatrix}}
{\left|\Sigma_X^{\frac12}A^{-\top}\begin{pmatrix}
\eps \\ 1\end{pmatrix}\right|}.
\end{align}
This map defines a chart locally around $\eps=0$. 
Recall that we defined in \eqref{eq:defbarH} (indicating the parameter dependence for clarity)
\begin{align}
\bar{H}_{\eta,h}(\eps)
=
\tilde{H}_{\eta, h} \left(
w\right)
=\E G\left(
\frac{
w^\top(S+\eta h(S))}{\sigma_w}\right)
\end{align}
where $w=(\eps, 1)^\top$  and $\sigma_w = \E (wX)^2$ with  $X=S+\eta h(S)$.
Then the relation
\begin{align}\label{eq:chart_map}
H_\eta(T_\eta(\eps))=\bar{H}_{\eta,A^{-1}h}(\eps)
\end{align}
holds. 
Indeed, writing $w=(\eps, 1)^\top$ we obtain

\begin{align}
\begin{split}
H_\eta(T_\eta(\eps))
&=
\E G\left(\frac{\left(\Sigma_X^{\frac12}A^{-\top} w \right)^\top \Sigma_X^{-\frac12}X}{ \left|\Sigma_X^{\frac12}A^{-\top}w\right|}
\right)
\\
&=
\E G\left(\frac{w^\top A^{-1} (AS+\eta h(S))}{ \left|\Sigma_\eta^{\frac12}A^{-\top}w\right|}
\right)
\\
&=
\E G\left(\frac{w^\top (S+\eta A^{-1}h(S))}{ \sigma_w}
\right)=\bar{H}_{\eta,A^{-1}h}(\eps)
\end{split}
\end{align}
where we used that (recall $\E(Sh(S)^\top)=0$)
\begin{align}
\begin{split}
\sigma_w^2=\E((w^\top (S+\eta A^{-1}h(S)))^2)
&=w\cdot \E\left((S+\eta A^{-1}h(S))(S^\top+\eta h(S)^\top A^{-\top})\right) w
\\
&=w\cdot (\mathrm{Id}+\eta^2 A^{-1}\Omega A^{-\top}w
=
(A^{-\top}w)\cdot  (AA^\top+\eta^2 \Omega)A^{-\top}w\
\\
&=(A^{-\top}w)\cdot\Sigma_X A^{-\top}w=\left|\Sigma_X^{\frac12}A^{-\top}w\right|^2.
\end{split}
\end{align}
Note that this is not surprising as both terms were  chosen such that the argument of $G$ has unit variance.
Note that $|A^{-1}h(S)|\leq |A^{-1}|\cdot |h(S)|$ so we find $\lVert A^{-1}h\rVert_{\P,q}^q\leq |A^{-1}| \cdot \lVert h\rVert_{\P,q}$. 
Now we apply Lemma~\ref{le:linearized} to the function 
$\bar{H}_{\eta,A^{-1}h}(\eps)=\bar{H}_{|A^{-1}|\eta, |A^{-1}|^{-1}A^{-1}h}(\eps)$
where $\lVert |A^{-1}|^{-1}A^{-1}h\rVert_{\P,q}\leq 1$. 
Lemma~\ref{le:linearized} implies that 
$\bar{H}$ has a local extremum at some $\eps_0=\eta\alpha_d^{-1}v+O(\eta^\frac32)$ for $\eta<\eta_0$ and some $\eta_0$
where $\alpha_d=\E g(S_d)S_d-\E g'(S_d)$
and 
\begin{align}
v_i = \E((A^{-1}h)_d(S)g'(S_d)S_i+
g(S_d)(A^{-1}h)_i(S)).
\end{align}
Thus 
$H_\eta $ has a local extremum at 
\begin{align}
w_d=T_\eta(\eps) \propto \Sigma_X^{\frac12}A^{-\top}\begin{pmatrix}
\eta\alpha_d^{-1}v \\ 1 
\end{pmatrix}+O(\eta^{2}).
\end{align}
Since $\Sigma_X=AA^\top +O(\eta^2)$ we also find (recall $\bar{w_d}=(AA^\top)^{\frac12}A^{-\top}e_d$ by \eqref{eq:def_barw})
that $|w_d-\bar{w_d}|=O(\eta)$.
The estimated independent component is  given by
\begin{align}
w_d^\top \Sigma^{-\frac12}_X X=\begin{pmatrix}
\eta\alpha_d^{-1}v \\ 1
\end{pmatrix}\cdot S + \eta \begin{pmatrix}
\eta\alpha_d^{-1}v \\ 1
\end{pmatrix}\cdot A^{-1}h(S)=S_d+O(\eta)
\end{align}
and 
\begin{align}
\Sigma_X^{\frac12} w_0 =A^{-\top}\begin{pmatrix}
\eta\alpha_d^{-1}v \\ 1 \end{pmatrix}+O(\eta^{2})=
 (A^{-1})_{d,:}+O(\eta),
\end{align}
i.e., we recover the $d$-th row of the unmixing matrix up to errors of order $\eta$.


Finally, we prove the convexity of $H$ around $\Sigma_X^{\frac12} A^{-\top}e_d$
for $\alpha<0$. Intuitively, this is not surprising as we proved the convexity of $\bar{H}$ around $\eps=0$ and the relation \eqref{eq:chart_map}
should lift this to the map $H$. The formal proof requires tools from differential geometry, which we use freely.
The proof relies on the following relation from 
Riemannian geometry
\begin{align}\label{eq:exp_hessian}
\mathrm{Hess}(H)=\sum_{i,j}
\frac{\partial^2 \bar{H}}{\partial \eps_i \,\partial\eps_j}\d \eps_i\otimes \d \eps_j
-\sum_{i,j,k}\Gamma^k_{ij}\frac{\partial \bar{H}}{\partial \eps_k}
\d \eps_i\otimes \d \eps_j.
\end{align}
Here $\Gamma^k_{ij}$ denotes the Christoffel symbols expressed in the chart $T_\eta$ where we use the induced metric on $S^{d-1}$ as a submanifold of $\R^d$.

We have shown in Lemma~\ref{le:linearized} that the matrix
with entries $D^2\bar{H}$ satisfies for $\alpha<0$
\begin{align}
D^2\bar{H}\geq |\alpha| \cdot \mathrm{Id}+O(\eta + |\eps|)
\end{align}
and that
\begin{align}
D \bar{H}=O(|\eps|+\eta).
\end{align}
It is straightforward to show using the calculations below that the Christoffel symbols are bounded (in fact small, but we do not need this). This and the last display imply that the last term in \eqref{eq:exp_hessian} is bounded $O(\eta+|\eps|)$.
We now consider the tangent vectors
$\partial_{\eps_i}$ which can be identified with the vector $\partial_{\eps_i}T_\eta(\eps)\in \R^d$ 
that is tangential to $S^{d-1}$.
We note that
\begin{align}
\left|\Sigma_X^{\frac12}A^{-\top}\begin{pmatrix}
\eps \\ 1\end{pmatrix}\right|^2
=
\begin{pmatrix}
\eps \\ 1\end{pmatrix}^\top
A^{-1}\Sigma_XA^{-\top}\begin{pmatrix}
\eps \\ 1\end{pmatrix}
=
\begin{pmatrix}
\eps \\ 1\end{pmatrix}^\top 
(\mathrm{Id}+\eta^2 A^{-1}\Omega A^{-\top})
\begin{pmatrix}
\eps \\ 1\end{pmatrix}
= 1 + |\eps|^2 +\eta^2 P(\eps)
\end{align}
where $P$ denotes a quadratic polynomial in $\eps$.
This implies
\begin{align}
\partial_{\eps_i} T_\eta(\eps)
=
\frac{\Sigma_X^{\frac12}A^{-\top}e_i}{\left|\Sigma_X^{\frac12}A^{-\top}\begin{pmatrix}
\eps \\ 1\end{pmatrix}\right|}
+O(|\eps|+\eta).
\end{align}
This implies
\begin{align}
g_{S^{d-1}}(\partial_{\eps_i},\partial_{\eps_j})
=\frac{e_i^\top A^{-1}\Sigma_XA^{-\top}e_j}{\left|\Sigma_X^{\frac12}A^{-\top}\begin{pmatrix}
\eps \\ 1\end{pmatrix}\right|^2} +O(|\eps|+\eta)
= \frac{\delta_{ij} +O(\eta^2)}{1+O(|\eps|^2+\eta^2}
+O(|\eps|+\eta)
=
\delta_{ij} +O(|\eps|+\eta).
\end{align}
This means that, up to higher-order terms, the metric $g$ agrees with the standard metric.
We conclude that for any tangent vector $Y=\sum_{i=1}^{d-1}y_i\partial_{\eps_i}$ we obtain
\begin{align}
\frac{\mathrm{Hess}(H)(Y,Y)}{g_{S^{d-1}}(Y,Y)}
\geq \frac{|\alpha|\cdot |y|^2}{|y|^2} +O(\eta+|\eps|).
\end{align}
We conclude that for $\eta\leq \eta_0$ 
there is a neighborhood of $T_\eta(0)\propto\Sigma_X^\frac12 A^{-\top} e_d$ where $\mathrm{Hess}(H)\geq |\alpha|/2 \cdot \mathrm{Id}$,
i.e. $H$ is strictly convex.
The proof for $\alpha>0 $ is similar.

\end{proof}
The proof of Theorem~\ref{th:pert_matrix} is now trivial.
\begin{proof}[Proof of Theorem~\ref{th:pert_matrix}]
By the assumptions of the Theorem and Theorem~\ref{th:pert}
we find for each $1\leq i\leq d$ a vector $w_i$ with $|w_i|=1$ such that $H$ has a local extremum at $w_i$ and $A^{\top}\Sigma_X^{-\frac12} w_i= e_i+O(\eta)$ (when being pedantic, we apply the result to permuted data changing coordinates $i$ and $d$). 
Then the matrix $W$ with rows $w_i=W^\top e_i$ is the desired matrix.
\end{proof}
The proof of Corollary~\ref{co:convergence} follows from standard results in convex optimization.
\begin{proof}[Proof of Corollary~\ref{co:convergence}]
It is well known that gradient descent locally converges for a convex function and sufficiently small step size, a proof for the general case where we optimize over a Riemannian manifold (in our case the sphere) can be found, e.g., in 
\cite{boumal_2023}. Local convergence of projected gradient descent can also be shown.
\end{proof}
Finally, we  prove that $WX$ essentially recovers the true sources
as stated in Theorem~\ref{th:mcc1}.
\begin{proof}[Proof of Theorem~\ref{th:mcc1}]
We verify the assumption of Lemma~\ref{le:mcc2} for $S$ and 
\begin{align}
\hat{S}=
W\Sigma_X^{-\frac12}X=W\Sigma_X^{-\frac12}(AS+\eta h(S)),
\end{align}
i.e., we define $T(s)=(W\Sigma_X^{-\frac12}A)s+\eta W\Sigma_X^{-\frac12} h(s))$.
Since we assume that $S_i$ have unit variance, the bound \eqref{eq:mcc2_ass1} holds with $c_1=1$.
Let us set $P=W\Sigma_X^{-\frac12}A$ and verify condition \eqref{eq:mcc2_ass2}
for $P$.
By construction of $W$ we have 
\begin{align}
e_i^\top P=e_i^\top W\Sigma_X^{-\frac12}A
=w_i^\top \Sigma_X^{-\frac12}A= e_i+O(\eta).
\end{align}
This implies that when setting $\rho(i)=i$ then we find
\begin{align}
\frac{1}{|P_{i,i}|^2} \sum_{j\neq i} |P_{i,j}|^2\leq \frac{O(\eta^2)}{1-O(\eta)}=O(\eta^2)
\end{align}
for sufficiently small $\eta$. Finally, we use the bound 
\begin{align}
\lVert \eta W\Sigma_X^{-\frac12} h(S)\rVert_{\P,2}
\leq \eta |W|\cdot |\Sigma_X^{\frac12}|\cdot \lVert h\rVert_{\P, q}=O(\eta).
\end{align}

Here we used that $|W|=d$
since its rows are normalized.
This implies 
\begin{align}
\frac{\lVert (\eta W\Sigma_X^{-\frac12} h)_i(S)\rVert_{\P,2}}{
|P_{i,i}| \lVert Z\rVert_{\P,2}}\leq O(\eta)
\end{align}
and therefore \eqref{eq:mcc2_ass3} holds with $c_3=O(\eta)$.
Now we can apply Lemma~\ref{le:mcc2} and conclude that
\begin{align}
\mcc(S,\hat{S})\geq 1-C\eta^2
\end{align}
for some constant $C>0$.
\end{proof}

\section{Proof for approximate identifiability 
for ICA with almost locally isometric mixing}\label{app:approx_iso_ica}
In this section we prove Theorem~\ref{th:local_ica_compact} and
then provide some extensions that allow us to consider $\P$ with unbounded support. 
However,  we first prove one simple auxiliary lemma which allows us 
to transform a data representation $X=AS+h(S)-b$ 
to $X-b' = A'S+h'(S)$ such that $h'$ is centered and $\E(Sh'(S))=0$.

\begin{lemma}\label{le:center}
    Suppose $S$ has distribution $\P$ such that  $\E(S)=0$
    and $\E SS^\top \geq \alpha^{-1} \mathrm{Id}$ in the sense of symmetric matrices for some $\alpha>0$.
    We assume that
    $X=AS+h(S)+b$ for an orthogonal matrix $A\in \SO(d)$, some function $h:\R^d\to\R^d$ and $b\in \R^d$.
    Then there is a linear map $A'$, $b'\in \R^d$ and $h':\R^d\to \R^d$
    such that $X-b' = A'S+h'(S)$ and
    \begin{align}
        \E_S(h'(S))=0,\quad
        \E_S(S^\top h'(S))=0.
    \end{align}
    Morevoew, for $q\geq 2$,
    \begin{align}
        \lVert h'\rVert_{\P,q}<
        2\left(1 +\alpha\lVert S\rVert_{\P,q}^2\right) \lVert h(S)\rVert_{\P,q}.
    \end{align}
    and 
    \begin{align}
        \sigma_{\min}(A')\geq 1-2\alpha \lVert S\rVert_{\P,2}\cdot \lVert h(S)\rVert_{\P,q}.
    \end{align}
\end{lemma}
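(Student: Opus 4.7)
The natural construction is to first subtract the mean and then regress away the linear component of $h$ with respect to $S$. Concretely, I would set $b' = b + \mathbb{E} h(S)$, so that the centered nonlinear part is $\tilde h(S) := h(S) - \mathbb{E} h(S)$, and then form the ordinary-least-squares matrix
\begin{equation*}
C = \mathbb{E}\bigl(\tilde h(S) S^\top\bigr)\bigl(\mathbb{E} SS^\top\bigr)^{-1},
\end{equation*}
which is well defined because $\mathbb{E} SS^\top \succeq \alpha^{-1}\mathrm{Id}$ is invertible. Then I set $A' = A + C$ and $h'(s) = \tilde h(s) - Cs$, which by construction gives $X - b' = A' S + h'(S)$.

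The two moment conditions are immediate by design: $\mathbb{E} h'(S) = \mathbb{E}\tilde h(S) - C\,\mathbb{E}(S) = 0$ since both $\tilde h$ and $S$ are centered (I read the second condition in the statement as $\mathbb{E}(S\,h'(S)^\top)=0_{d\times d}$, which is the identity used in Section~\ref{sec:perturbed}); and
\begin{equation*}
\mathbb{E}(S\,h'(S)^\top) = \mathbb{E}(S\tilde h(S)^\top) - \mathbb{E}(SS^\top)C^\top = \mathbb{E}(S\tilde h(S)^\top) - \mathbb{E}(S\tilde h(S)^\top) = 0
\end{equation*}
by the definition of $C$ and using once more that $S$ is centered.

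For the quantitative bounds, the key estimate is a Frobenius-norm control on $C$. By Cauchy--Schwarz applied entrywise,
\begin{equation*}
|\mathbb{E}(\tilde h(S) S^\top)|_F \le \lVert \tilde h\rVert_{\mathbb{P},2}\,\lVert S\rVert_{\mathbb{P},2} \le 2\lVert h\rVert_{\mathbb{P},q}\,\lVert S\rVert_{\mathbb{P},2},
\end{equation*}
using $\lVert \tilde h\rVert_{\mathbb{P},2}\le \lVert\tilde h\rVert_{\mathbb{P},q}\le 2\lVert h\rVert_{\mathbb{P},q}$ for $q\ge 2$, together with $|(\mathbb{E} SS^\top)^{-1}|_{\mathrm{op}}\le \alpha$, gives $|C|_F \le 2\alpha\lVert h\rVert_{\mathbb{P},q}\lVert S\rVert_{\mathbb{P},2}$. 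The singular-value bound then follows from Weyl's inequality, $\sigma_{\min}(A')\ge \sigma_{\min}(A) - |C|_{\mathrm{op}}\ge 1-|C|_F$, yielding the stated inequality. For the norm bound on $h'$, the triangle inequality gives $\lVert h'\rVert_{\mathbb{P},q}\le \lVert\tilde h\rVert_{\mathbb{P},q} + \lVert CS\rVert_{\mathbb{P},q}\le 2\lVert h\rVert_{\mathbb{P},q} + |C|_F\lVert S\rVert_{\mathbb{P},q}$; combining with the Frobenius bound and using $\lVert S\rVert_{\mathbb{P},2}\le \lVert S\rVert_{\mathbb{P},q}$ for $q\ge 2$ produces the factor $2(1+\alpha\lVert S\rVert_{\mathbb{P},q}^2)\lVert h\rVert_{\mathbb{P},q}$.

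There is no substantial obstacle here: the construction is the classical orthogonal-projection trick, and the main thing to be careful about is using the $L^2$ norm (rather than the $L^q$ norm) of $S$ in the Cauchy--Schwarz step, which is what produces the slightly sharper factor $\lVert S\rVert_{\mathbb{P},2}$ in the $\sigma_{\min}$ bound rather than $\lVert S\rVert_{\mathbb{P},q}$.
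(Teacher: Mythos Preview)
Your proposal is correct and follows essentially the same construction as the paper: center $h$, project out its linear component in $S$ via the regression matrix $C=\mathbb{E}(\tilde h(S)S^\top)(\mathbb{E} SS^\top)^{-1}$, and bound $|C|$ by Cauchy--Schwarz together with $|(\mathbb{E} SS^\top)^{-1}|_{\mathrm{op}}\le\alpha$. The paper's proof is organized identically, and your observation about using $\lVert S\rVert_{\mathbb{P},2}$ (rather than $\lVert S\rVert_{\mathbb{P},q}$) in the Cauchy--Schwarz step for the $\sigma_{\min}$ bound matches the paper's derivation as well.
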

\begin{proof}
First, we find 
\begin{align}
    X - \E X = AS + (h(S)+b - \E X)
    = AS + h_c(S)
\end{align}
where we defined $h_c(S)=h(S)+b - \E X$.
Note that $\E h_c(S)=\E X-\E X- A\E S =0$ and thus
$h_c(S)=h(S)-\E h(S)$.
By the triangle inequality and H\"older's inequality (which implies
$\lVert f\rVert_{\P,p}\leq \lVert f\rVert_{\P,q}$ for $p<q$) we get
\begin{align}
\lVert h_c \rVert_{\P,q}=\lVert h - \E_\P h\rVert_{\P,q}\leq
\lVert h\rVert_{\P,q}+\lVert \E_\P h\rVert_{\P,q}
\leq \lVert h\rVert_{\P,q}+|\E_\P h|\leq 2\lVert h\rVert_{\P,q}.
\end{align}
Thus we find that $h_c$ is centered, and its $q$ norm is at most larger by a factor of 2 than the corresponding norm of $h$. Then we define
\begin{align}
h'(S)=h_c(S) - \E(h_c(S)S^\top)\E(SS^\top)^{-1} S,\quad
A'=A + \E(h_c(S)S^\top)\E(SS^\top)^{-1}
\end{align}
i.e., we regress $h_c(S)$ linearly on $S$  so that $\E(h'(S)S^\top) = 0$. 
We bound using Cauchy-Schwarz and $q\geq 2$
\begin{align}
\begin{split}
\lVert \E(h_c(S)S^\top)\E(SS^\top)^{-1}S\rVert_{\P,q}
&\leq \alpha |\E(h_c(S)S^\top)|\lVert S\rVert_{\P,q}
\leq \alpha \E(|h_c(S)S^\top|)\lVert S\rVert_{\P,q}
\\
&\leq \alpha\E(|h_c(S)|\cdot |S|)\lVert S\rVert_{\P,q}
\leq \alpha\lVert h_c(S)\rVert_{\P,2}\lVert S\rVert_{\P,2}\lVert S\rVert_{\P,q}
\leq \alpha\lVert h_c(S)\rVert_{\P,q}^2\lVert S\rVert_{\P,q}.
\end{split}
\end{align}
This implies
\begin{align}
\lVert h'(S)\rVert_{\P,q}\leq \left(1 +\alpha\lVert S\rVert_{\P,q}^2\right) \lVert h_c(S)\rVert_{\P,q}
\leq  2\left(1 +\alpha\lVert S\rVert_{\P,q}^2\right) \lVert h(S)\rVert_{\P,q}.
\end{align}
Finally we conclude the lower bound on $A'$. 
We observe that 
\begin{align}
|A'-A|=|\E(h_c(S)S^\top)\E(SS^\top)^{-1}|\leq \alpha \lVert 
h_c(S)\rVert_{\P,2}\cdot \lVert S\rVert_{\P,2}
\leq 2\alpha \lVert S\rVert_{\P,2}\cdot \lVert h(S)_{\P,q}.
\end{align}
Using  $A\in \SO(d)$ we conclude that 
\begin{align}
    \sigma_{\min}(A')\geq 1-2\alpha \lVert S\rVert_{\P,2}\cdot \lVert h(S)_{\P,q}.
\end{align}
\end{proof}
We can now prove Theorem~\ref{th:local_ica_compact}.
\begin{proof}[Proof of Theorem~\ref{th:local_ica_compact}]
    The proof essentially follows by combining Theorem~\ref{th:almost_orthogonal} and Theorem~\ref{th:mcc1}.
    We note that by assumption the support of $\P$ is connected and by independence of the $S_i$ it is actually a cuboid and thus a Lipschitz domain.
    Using Theorem~\ref{th:almost_orthogonal} we find 
    that if we define  $g$ as in 
    \eqref{eq:def_of_g} (or \eqref{eq:def_of_g2} for $d<D$), then 
    $\tilde{X}=g^{-1}\circ f(S)=g^{-1}(X)$ can be expressed
    as
    \begin{align}
        \tilde{X} = g^{-1}\circ f(S)=
        AS+h(S)+b
    \end{align}
   for some $A\in \SO(d)$  and $h$ satisfies
   the bound 
\begin{align}
\lVert h\rVert_{\P,q} \leq C_1 \sodist_p(f, \Omega)
\end{align} 
where $p$ and $q$ are as in the statement of the Theorem, i.e, 
$q=\max(3,d_g)$ and $p=dq/(d+q)$ which is equivalent to $q=pd/(d-p)$,
as required in Theorem~\ref{th:almost_orthogonal}.
    By shifting $\tilde{X}$ we can assume that $\tilde{X}$ is centered.
    Now we apply Lemma~\ref{le:center} from above and find that 
    \begin{align}
        \tilde{X}=A'S+h'(S)
    \end{align}
    where 
    \begin{align}
         \sigma_{\min}(A')\geq 1-2\alpha \lVert S\rVert_{\P,2}\cdot \lVert h(S)\rVert_{\P,q}\geq 1 - 2\alpha \lVert S\rVert_{\P,2}C_1\sodist_p(f, \Omega)>\frac12
    \end{align}
    if $\sodist_p(f,\Omega)$ sufficiently small.
    Moreover, there is a constant $C_1'>0$
    depending on $C_1$ and the distribution $\P$ such that
    \begin{align}
        \lVert h'\rVert_{\P, q}
       < C_1'\sodist_p(f, \Omega).
    \end{align}
    We can write 
\begin{align}
h(S)= (C_1'\sodist_p(f,\Omega))\cdot (C_1'\sodist_p(f,\Omega))^{-1}h(S)
=\eta \cdot \frac{h(S)}{C_1'\sodist_p(f,\Omega)}
\end{align}
where $\eta = C_1'\sodist_p(f,\Omega)$.
Now we apply Theorem~\ref{th:pert_matrix}.
If $\eta = C_1'\sodist_p(f,\Omega)<\eta_0$
 we find that there is a matrix $W\in \R^{d\times d}$ with rows $w_i$ such that $H$ (defined using the distribution of $\tilde{X}$) has local extrema at $w_i$.
 Moreover, by Theorem~\ref{th:mcc1}
the reconstructed sources $\hat{S}=W\tilde{X}=W\tilde{g}(X)$
satisfy 
\begin{align}
\mcc(\hat{S},S)\geq 1 - C_2\eta^2
\geq 1 - C_3\sodist_p^2(f,\Omega)
\end{align}
for $C_3=C_2 C_1'^2$.
\end{proof}

We can also extend the results to ICA with approximately locally isometric 
mixing function and unbounded support of the sources. 
 To simplify the analysis, we restrict our attention to functions that are perfect local isometries away from a compact set, i.e., we consider
\begin{align}
\begin{split}
\mc{F}_{c-\iso}(d,& D,\Omega)=\{\text{$f:\R^d\to M\subset \R^D:$ $f|_{\Omega^\complement} $ is a }
\\
&\text{local isometry and  $f(\omega)$ is not 
 }
\\
&
\text{isometric to $(\R^d,g_{\mathrm{stand}})$  for any $\omega\subset \Omega$
}\}.
\end{split}
\end{align}
Here $g_{\mathrm{stand}}$ denotes the standard metric on $\R^d$.
The definition essentially ensures that $\Omega$ is the maximal set such that outside of $\Omega$ the function $f$ is a local isometry. Note that 
the second part of the condition is satisfied, e.g., when the curvature of $f(\Omega)$ is everywhere non-vanishing.
Then we have the following result.
\begin{theorem}\label{th:iso_ica}
Suppose that the mixing  $f\in \mc{F}_{c-\iso}(d, D,\Omega)$ for some
bounded set $\Omega$ such that $\Omega^\complement$ is connected.
Assume that $X=f(S)$ where $S\sim \P$. 
Assume that $\P$ of $S$ satisfies Assumption~\ref{as:ICA1}, \ref{as:ICA3}, and \ref{as:ICA4} for some contrast function $G$ and the density of $\P$ is bounded above and everywhere positive on $\R^d$ and lower bounded  on $\Omega$.
Let $q=\max(d_g,3)$ and $p=dq/(d+q)$.
Assume that $\sodist_p(f,\Omega)$ is sufficiently small.
Then we can find a transformation   $\tilde{g}:M\to \R^d$
such that the transformed data $\tilde{X}=\tilde{g}(X)$ is centered
and
has the property that there is a matrix $W\in \R^{d\times d}$ whose rows are $d$ normalized vectors $w_i$ which are 
local extrema of the function $w\to \E(G(w^\top \Sigma_{\tilde{X}}^{-\frac12} \tilde{X}))$  such that $\hat{S}=W\tilde{X}$
satisfies
\begin{align}\label{eq:mcc_final_thm2}
\mcc(\hat{S},S)\geq 1-C\sodist_p^2(f,\Omega)
\end{align}
where $C$ depends on $\Omega$, $d$, and the parameters in the Assumptions~\ref{as:ICA1}, \ref{as:ICA3}, and \ref{as:ICA4}.
\end{theorem}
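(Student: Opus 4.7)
The strategy is to reduce to Theorem~\ref{th:local_ica_compact} by exploiting that, on the connected set $\Omega^\complement$, the mixing $f$ is a true local isometry, so the nontrivial analysis can be localized to the bounded set $\Omega$. The proof proceeds in three steps.

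First, I construct $\tilde{g}\in \mc{M}(f_\ast\P)$ so that $\tilde{g}\circ f$ is an exact local isometry on $\Omega^\complement$ and approximately maximally isometric on $\Omega$. On $f(\Omega^\complement)\subset M$ set $\tilde{g}=R\circ f^{-1}$ for any fixed rigid motion $R$ of $\R^d$; this forces $\tilde{g}\circ f|_{\Omega^\complement}=R$, which is affine. On $f(\Omega)$ extend $\tilde{g}$ to a diffeomorphism with the required boundary values at $\partial f(\Omega)$ and pushforward condition $\tilde{g}_\ast\Q=f_\ast\P$; the class of admissible extensions is nonempty because a suitably shifted version of $f|_\Omega$ itself is one such extension. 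Among all such $\tilde{g}$, take an $\eps$-minimizer of the non-isometry functional, exactly as in \eqref{def:g_eps}.

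Second, because $\Omega^\complement$ is connected and $\tilde{g}\circ f\colon\R^d\to\R^d$ is a local isometry there, Liouville's theorem gives $\tilde{g}\circ f(s)=A's+b'$ exactly on $\Omega^\complement$ for some $A'\in\SO(d)$, $b'\in\R^d$. Fix a bounded connected Lipschitz domain $\Omega'\supset\overline{\Omega}$ with nonempty buffer $\Omega'\setminus\overline{\Omega}$ and apply Theorem~\ref{th:almost_orthogonal2} on $\Omega'$ (the density of $\P$ is upper- and lower-bounded there provided $\Omega'$ is taken close enough to $\Omega$). This yields $A\in\SO(d)$, $b\in\R^d$, and $h$ with
\begin{align*}
\tilde{g}\circ f(s)=As+b+h(s),\qquad \lVert h\rVert_{\P|_{\Omega'},q}\le C\,\sodist_p(f,\Omega')=C\,\sodist_p(f,\Omega),
\end{align*}
the last equality holding because $Df$ is exactly orthogonal on $\Omega'\setminus\Omega\subset\Omega^\complement$. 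On the buffer $\Omega'\setminus\overline{\Omega}$, the two representations of $\tilde{g}\circ f$ must agree, so $(A-A')s+(b-b')=h(s)$ there; since the buffer has positive Lebesgue measure and lower-bounded density, a standard quantitative rigidity lemma for affine functions then yields $|A-A'|+|b-b'|\le C\,\sodist_p(f,\Omega)$. Redefining $(A,b):=(A',b')$ and absorbing the discrepancy into $h$ produces a new perturbation, still denoted $h$, that vanishes outside $\Omega$ and satisfies $\lVert h\rVert_{\P,q}\le C\,\sodist_p(f,\Omega)$ globally.

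Third, we now have $\tilde{g}\circ f(s)=A's+b'+h(s)$ with $h$ supported in $\Omega$ and small in $L^q(\P)$. Shift so that $\tilde{X}=\tilde{g}(X)-\E\tilde{g}(X)$ is centered and apply Lemma~\ref{le:center} (the second-moment condition on $S$ is furnished by Assumption~\ref{as:ICA3}) to obtain $\tilde{X}=A''S+h'(S)$ with $A''$ close to $A'$ and hence well-conditioned, $h'$ centered and uncorrelated with $S$, and $\lVert h'\rVert_{\P,q}=O(\sodist_p(f,\Omega))$. Theorem~\ref{th:pert_matrix} then produces the matrix $W$ whose rows are local extrema of the contrast function, and Theorem~\ref{th:mcc1} delivers the desired bound \eqref{eq:mcc_final_thm2}.

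The main obstacle is the affine-matching step in the middle paragraph: one must convert the $L^q(\P|_{\text{buffer}})$ smallness of the affine difference $(A-A')s+(b-b')$ into a pointwise bound on the coefficients $|A-A'|+|b-b'|$. This is a quantitative rigidity statement for affine functions restricted to a fixed region of positive measure; it is standard but requires care to track how the resulting constant depends on the buffer geometry and on the density bounds, and to ensure it is absorbed cleanly into the final constant. A secondary subtlety is the well-posedness of the constrained variational problem in the first step, sidestepped in the same way as for Theorem~\ref{th:almost_orthogonal2} by using an $\eps$-approximate minimizer and letting the extra $\eps$ contribute an additive error to the final bound.
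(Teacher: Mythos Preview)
Your approach is correct in outline but takes a genuinely different route from the paper. The paper bypasses the buffer-and-matching step entirely by invoking a second rigidity result, Theorem~\ref{th:rigidity2} (Kohn's theorem), which is tailored to functions that agree with a fixed rigid motion outside a bounded set: once the minimization over the class $\mc{G}$ forces $T=g^{-1}\circ f$ to equal an affine map $L$ on $\Omega^\complement$, Kohn's theorem directly yields $\lVert T-L\rVert_q\le C\,\sodist_p(f,\Omega)$ globally, with $h=T-L$ automatically supported in $\Omega$. Your argument instead applies Friesecke--James--M\"uller rigidity (Theorem~\ref{th:rigidity}, essentially via Theorem~\ref{th:almost_orthogonal2}) on a bounded superdomain $\Omega'$, obtaining an a priori unknown affine part $As+b$, and then recovers $(A,b)\approx(A',b')$ from smallness of the affine difference on the buffer. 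The paper in fact anticipates your route in the remark following Theorem~\ref{th:rigidity2}, noting that one could use FJM provided one verifies that the approximating affine map can be taken equal to the boundary value $L$; your buffer argument is precisely a quantitative way to do this. The paper's route is more economical, eliminating the buffer, the affine-matching lemma, and the dependence on buffer geometry; your route has the advantage of reusing the same rigidity tool as in the compact case rather than importing a second one.

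One technical wrinkle to fix: you invoke Theorem~\ref{th:almost_orthogonal2} on $\Omega'$, but as stated that theorem requires $\Omega'$ to be the support of $\P$, which here is all of $\R^d$. What you actually need is to rerun its proof (Lemma~\ref{le:simple_bound_T2} followed by Theorem~\ref{th:rigidity}) on $\Omega'$ with the restricted measure, observing that your constrained $\tilde{g}$ still beats $f$ in the minimization because $f$ itself lies in the constrained class $\mc{F}_{c-\iso}$. This is a citation imprecision rather than a gap. The remaining steps (centering via Lemma~\ref{le:center}, then Theorems~\ref{th:pert_matrix} and~\ref{th:mcc1}) coincide with the paper's.
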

\begin{remark}
\begin{itemize}
\item    Of course, we expect to recover the vectors  $w_i$ typically by applying an algorithm for linear ICA to the transformed data $\tilde{X}$, however, due to the lack of theoretical guarantees for 
    ICA algorithms we cannot show a stronger statement (see Remark~\ref{rmk:unsatisfactory}).
    \item
 We expect that the   assumption that $f$ is a local isometry outside $\Omega$ 
 can be relaxed, e.g., by  using bounded contrast functions like $G(s)=\exp(-s^2/2)$.
 \end{itemize}
\end{remark}
We now prove Theorem~\ref{th:iso_ica}. 
In its proof we need a slightly different rigidity result in this case (actually a simpler result because we in addition fix the boundary) which we now state.
We denote by $\sigma_{\max}$ the largest singular value of a matrix.
\begin{theorem}[Theorem~2.1 in \citet{kohn1982new}]\label{th:rigidity2}
Let $ 1\leq p\leq d$, suppose there is a rigid motion $L(z)=Az+b$ with $A\in \SO(d)$ and $u\in W^{1,p}_{\mathrm{loc}}(\R^d,\R^d)$. We assume that there
is a bounded set $\Omega$ such that $u(z)=L(z)$ for $z\notin  \Omega$. Then
\begin{align}
\lVert u-L\rVert_{q}\leq C(d,p)\lVert (\sigma_{\max}(Du)-1)_+\rVert_p
\end{align} 
where $q=np/(n-p)$.
\end{theorem}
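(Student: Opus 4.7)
The plan closely mirrors the proof of Theorem~\ref{th:local_ica_compact}: construct the unmixing $\tilde{g}$ intrinsically from the observational distribution, establish a representation $\tilde{g}\circ f(s)=As+b+h(s)$ with $\lVert h\rVert_{\P,q}\leq C_1\sodist_p(f,\Omega)$, and then feed $\tilde{X}=\tilde{g}(X)$ into the perturbed linear ICA machinery of Theorems~\ref{th:pert_matrix} and~\ref{th:mcc1}. The key difference is that $\P$ now lives on all of $\R^d$, so the Friesecke--James--M\"uller rigidity used via Theorem~\ref{th:almost_orthogonal} no longer applies directly. In its place I would invoke Kohn's rigidity Theorem~\ref{th:rigidity2}, which is tailored to maps that coincide with a rigid motion outside a bounded set and hence matches the geometry of $\mc{F}_{c-\iso}$ naturally.

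I would define $\tilde{g}$ so that $\tilde{g}^{-1}$ is a near-minimizer (in the sense of \eqref{def:g_eps}) of the functional in \eqref{eq:def_of_g2}. The pair $(f,\P,\R^d)$ is itself admissible and makes the integrand vanish on $f(\Omega^\complement)$, since $f$ is a perfect local isometry there. Consequently a minimizing sequence of unmixings is asymptotically isometric on $f(\Omega^\complement)$, so after passing to a subsequential limit the composition $T:=\tilde{g}\circ f$ is a local isometry on $\Omega^\complement$. Since $\Omega^\complement$ is connected and $T$ maps into $\R^d$, Liouville's theorem then forces $T(s)=As+b$ on $\Omega^\complement$ for some $A\in\SO(d)$ and $b\in\R^d$. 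Setting $L(s)=As+b$ and $h:=T-L$, the map $T-L$ vanishes outside the bounded set $\Omega$, so Theorem~\ref{th:rigidity2} applies and gives $\lVert h\rVert_q\leq C(d,p)\lVert(\sigma_{\max}(DT)-1)_+\rVert_p$. The right-hand side is supported on $\Omega$ and bounded pointwise by $\mathrm{dist}(DT(s),\SO(d))$; combining this with the chain-rule estimate of Lemma~\ref{le:simple_bound_T2} (splitting $DT=D\tilde{g}(f(\cdot))\cdot Df$) and the near-minimality of $\tilde{g}$ yields $\lVert h\rVert_q\leq C\sodist_p(f,\Omega)$. Since $h$ is supported on $\Omega$ and the density of $\P$ is bounded above there, the Lebesgue bound upgrades to $\lVert h\rVert_{\P,q}\leq C_1\sodist_p(f,\Omega)$.

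The remaining steps are a verbatim rewrite of the argument from Theorem~\ref{th:local_ica_compact}. After centering $\tilde{X}$, Lemma~\ref{le:center} applies---positivity of the density on $\R^d$ guarantees that $\E SS^\top$ is positive definite---and rewrites $\tilde{X}=A'S+h'(S)$ with $h'$ centered, uncorrelated with $S$, $\sigma_{\min}(A')\geq \tfrac12$, and $\lVert h'\rVert_{\P,q}=O(\sodist_p(f,\Omega))$. Setting $\eta:=\lVert h'\rVert_{\P,q}$ and viewing the rescaled model $\tilde{X}=A'S+\eta(h'/\eta)(S)$ as an instance of the perturbed linear ICA model, Theorem~\ref{th:pert_matrix} produces the matrix $W$ whose rows are normalized local extrema of $w\mapsto \E G(w^\top\Sigma_{\tilde{X}}^{-1/2}\tilde{X})$, and Theorem~\ref{th:mcc1} delivers $\mcc(\hat{S},S)\geq 1-C_2\eta^2=1-C\sodist_p^2(f,\Omega)$, which is exactly \eqref{eq:mcc_final_thm2}.

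The hardest step is justifying rigorously that a near-minimizer forces $T$ to be an exact local isometry (and hence exactly affine) on $\Omega^\complement$: a priori the near-minimizer is only guaranteed to make an integral small, not $T$ pointwise isometric. To close this gap I would work with a minimizing sequence $\tilde{g}_n$, observe that along it $\mathrm{dist}(D\tilde{g}_n,\SO)\to 0$ in $L^p(f_\ast\P)$ on $f(\Omega^\complement)$, and pass to a subsequential limit in which $T_n\to T$ with $T$ affine on $\Omega^\complement$; the second clause in the definition of $\mc{F}_{c-\iso}$ (that $f(\omega)$ is not isometric to $(\R^d,g_\mathrm{stand})$ for any $\omega\subset\Omega$) ensures that $\Omega$ cannot be shrunk by this argument, so the bound remains meaningful. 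All other steps are essentially identical to the compact-support proof.
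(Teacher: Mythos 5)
There is a fundamental mismatch here: the statement you were asked to prove is Theorem~\ref{th:rigidity2}, i.e., Kohn's rigidity estimate
\begin{align}
\lVert u-L\rVert_{q}\leq C(d,p)\,\lVert (\sigma_{\max}(Du)-1)_+\rVert_p
\end{align}
for a map $u\in W^{1,p}_{\mathrm{loc}}(\R^d,\R^d)$ that coincides with a rigid motion $L$ outside a bounded set. Your proposal does not prove this statement; it proves (a sketch of) Theorem~\ref{th:iso_ica} together with Proposition~\ref{prop:rigid}, and in the middle of the argument it explicitly \emph{invokes} Theorem~\ref{th:rigidity2} as a black box (``so Theorem~\ref{th:rigidity2} applies and gives $\lVert h\rVert_q\leq C(d,p)\lVert(\sigma_{\max}(DT)-1)_+\rVert_p$''). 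As a proof of the stated theorem this is circular: you assume the very inequality you are supposed to establish. The paper itself does not prove Theorem~\ref{th:rigidity2} either --- it is quoted verbatim from \citet{kohn1982new} --- so the correct response to this prompt would have been either to reproduce (or reference) Kohn's analytic argument, or to clearly flag that the result is external.

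To be clear about what an actual proof would involve: this is a statement in geometric rigidity / nonlinear elasticity, not in representation learning. One must control the full displacement $u-L$ in $L^q$ by the $L^p$ norm of the \emph{one-sided} strain $(\sigma_{\max}(Du)-1)_+$, which only penalizes stretching, not compression; the compression directions are controlled indirectly through the null-Lagrangian structure of the determinant (the fact that $\int_{\R^d}(\det Du-\det A)\,\d z=0$ when $u=L$ off a bounded set) combined with the pointwise bound $\det Du\leq\sigma_{\max}(Du)^d$, after which a Sobolev-type embedding yields the exponent $q=dp/(d-p)$. None of the machinery in your write-up (near-minimizers of the isometry defect, Lemma~\ref{le:center}, Theorems~\ref{th:pert_matrix} and~\ref{th:mcc1}) is relevant to this estimate. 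Your sketch of the downstream ICA result is itself broadly consistent with the paper's Appendix~\ref{app:approx_iso_ica}, but it answers a different question than the one posed.
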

\begin{remark}
\begin{enumerate}
\item
Note that by Lemma~\ref{le:sod} we can bound
\begin{align}
(\sigma_{\max}(Du)-1)_+\leq \sqrt{\sum_i (\sigma_i-1)^2}=\dist(A,\SO(d)).
\end{align}
\item The exponent $q$ agrees with the exponent for the Sobolev embedding result.
\item There is an extension to $p>d$ where $\lVert u-L\rVert_\infty$ can be bounded.
\item The result Theorem~\ref{th:rigidity} is more general than this result because it does not assume affine boundary values. 
We cannot directly apply Theorem~\ref{th:rigidity} in our context because we need
that the linear map in  \eqref{eq:rigidity_bound} 
can be chosen as  the affine boundary values $L$. While this is the case, it requires to inspect the proof carefully, and it is simpler to rely on this
known result from the literature.
\end{enumerate}
\end{remark}
We can now continue with the Proof of Theorem~\ref{th:iso_ica}. To clarify the proof structure, we split the proof into two parts. First, we  establish that the assumptions of the Theorem imply that we can reduce the problem to perturbed linear ICA. 
We summarize this result in the following proposition.
\begin{proposition}\label{prop:rigid}
Suppose that the mixing  $f\in \mc{F}_{c-\iso}(d, D,\Omega)$ for some
bounded set $\Omega$ and $\Omega^\complement$ connected.
Suppose that $X=f(S)$ with $S\sim\P$.
Assume that the density of $S$ is globally positive and upper bounded and, moreover, lower bounded 
on $\Omega$. Then we can find a map $g:M\to \R^d$
such that the recovered sources $X'=g(X)$ satisfies
\begin{align}\label{eq:hat_S}
X'=AS+h(S)
\end{align}
for some rotation $A\in \SO(d)$ and function $h$ such that
\begin{align}
\lVert h\rVert_q\leq C\sodist_p(f,\Omega)
\end{align}
where $q=pd/(d-p)$ and $C$ depends on $d$, $p$, and the bounds on the density of $\P$ on $\Omega$.
\end{proposition}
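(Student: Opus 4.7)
The plan is to apply Theorem~\ref{th:rigidity2} (the fixed-boundary rigidity result) to the composition $u := g\circ f:\R^d\to\R^d$, where $g:M\to\R^d$ is an unmixing chosen so that $u$ is a local isometry outside $\Omega$. By Liouville's theorem applied to the connected domain $\Omega^\complement$, such a $u$ then coincides with an affine map $L(s)=A_0 s+b_0$ there, with $A_0\in\SO(d)$ and $b_0\in\R^d$. Theorem~\ref{th:rigidity2} yields $\lVert u-L\rVert_q\leq C(d,p)\,\lVert(\sigma_{\max}(Du)-1)_+\rVert_p$, and after absorbing the shift $b_0$ into the definition of $g$ we rewrite $X'=g(X)=u(S)=A_0 S+h(S)$ with $h:=u-L$ and $A:=A_0\in\SO(d)$, matching the claimed form.

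To construct $g$, I would take it, in the spirit of~\eqref{eq:def_of_g2}, to be a representation at least as isometric as $f$ itself and, in particular, exactly isometric on an open neighborhood of $f(\Omega^\complement)$ in $M$. Since $f|_{\Omega^\complement}$ is already a local isometry, $(f|_{\Omega^\complement})^{-1}$ is a valid local isometry from that neighborhood into $\R^d$, and one can extend it smoothly over $M$ with differential on $f(\Omega)$ arranged to approximate the isometric inverse of $Df$. With this choice $g\circ f$ is a local isometry $\R^d\to\R^d$ on the connected set $\Omega^\complement$, hence affine by Liouville, as required above.

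For the gradient bound inside $\Omega$, I would write $Du(s)=Dg(f(s))\,Df(s)$ and apply Lemma~\ref{le:prod_sod2} to obtain $\dist(Du(s),\SO(d))\leq \tfrac{3}{2}\dist(Df(s),\SO(d,T_{f(s)}M))+\tfrac{3}{2}\dist(Dg(f(s)),\SO(T_{f(s)}M,d))$. Lemma~\ref{le:sod} gives $(\sigma_{\max}(Du(s))-1)_+\leq \dist(Du(s),\SO(d))$, and the chosen extension ensures the second distance above is pointwise controlled by $\dist((Df(s))^{-1},\SO(T_{f(s)}M,d))$. Raising to the $p$-th power, integrating over $\Omega$, and using that the contributions vanish on $\Omega^\complement$ yields $\int_\Omega (\sigma_{\max}(Du)-1)_+^p\,\d s \leq C\,\sodist_p^p(f,\Omega)$, which combined with Theorem~\ref{th:rigidity2} produces the bound $\lVert h\rVert_q\leq C\,\sodist_p(f,\Omega)$.

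The main obstacle lies in the construction step in the undercomplete case $d<D$: one must smoothly extend the local isometric inverse $(f|_{\Omega^\complement})^{-1}$ across the compact piece $f(\Omega)$ so that $Dg$ stays near the isometric inverse of $Df$, which in general requires a partition-of-unity construction together with the second clause in the definition of $\mc{F}_{c-\iso}$ to avoid degeneracies of the flat immersion $f|_{\Omega^\complement}$. In the complete case $d=D$, Liouville already forces $f|_{\Omega^\complement}$ to be globally affine, so $g$ can simply be taken as the global affine inverse of $L$ and Theorem~\ref{th:rigidity2} applies directly to $u=f$, which bypasses the extension argument entirely.
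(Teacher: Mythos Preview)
Your overall architecture is the same as the paper's: make $u=g\circ f$ affine on $\Omega^\complement$ via Liouville, then apply Theorem~\ref{th:rigidity2} on $\Omega$ with the bound $(\sigma_{\max}(Du)-1)_+\leq\dist(Du,\SO(d))$ and the product estimate from Lemma~\ref{le:prod_sod2}. The difference, and the gap, is entirely in how you produce $g$.

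You try to \emph{construct} $g$ by extending $(f|_{\Omega^\complement})^{-1}$ across $f(\Omega)$ so that $Dg(f(s))$ is pointwise close to the isometric inverse of $Df(s)$. This is the problematic step you flag, and it is a real obstruction: prescribing the differential of a map up to small error along $f(\Omega)$ is an integrability problem, and a partition-of-unity patching will in general not give you pointwise control of $\dist(Dg(f(s)),\SO(T_{f(s)}M,d))$ by $\dist((Df(s))^{-1},\SO(T_{f(s)}M,d))$. The paper avoids this entirely by a variational choice: it takes $g$ in the class $\mc G$ of diffeomorphisms $\R^d\to M$ whose inverse is isometric on the maximal set (necessarily $f(\mathring\Omega^\complement)$, using the second clause in the definition of $\mc F_{c-\iso}$), and within $\mc G$ picks $g$ minimising $\int \dist^p(D\bar g^{-1}(x),\SO(T_xM,d))\,f_\ast\P(\d x)$. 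Since $f\in\mc G$, the minimum is at most the value at $f$, which is exactly the second summand of $\sodist_p^p(f,\Omega)$ (after passing between Lebesgue and $\P$ using the density bounds). This gives the needed \emph{integrated} control of the $g$-term with no extension argument and no pointwise claim; Lemma~\ref{le:simple_bound_T2} then packages the product bound and the change of measure in one step.

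So the fix is not to build $g$ by hand but to define it as the minimiser of the isometry defect over diffeomorphisms that are already isometric on $f(\Omega^\complement)$, and to use $f$ itself as a competitor to bound the defect of $g$ by $\sodist_p(f,\Omega)$. Your $d=D$ remark is correct and consistent with the paper's observation that in the complete case one can simply take $u=f$.
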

\begin{proof}
For a diffeomorphism $g:\R^d\to M$ we let $I(g)\subset M$ be the maximal open set such that $g^{-1}|_{I(g)}$ is a local isometry. We consider the set $\mc{G}$ of diffeomorphisms from $\R^d$ to $M$ such that $I(g)$ is maximal. 
By definition of the function class $\mc{F}_{c-\iso}(d, D,\Omega)$
there are functions $g$ such that $I(g)=f(\mathring{\Omega}^\complement)$ 
(e.g., $g=f$), moreover $I(g)\cap f(\Omega)=\emptyset$ 
for any $g$ by definition
of $\mc{F}_{c-\iso}(d, D,\Omega)$. Therefore, $\mc{G}$ is well-defined.
Now we consider $g\in \mc{G}$ such that
\begin{align}
g \in \argmin_{\bar{g}\in \mc{G}} \int_{\R^d} \dist^p (D\bar{g}^{-1}(x), \SO(T_xM,d))\, f_\ast\P(\d x).
\end{align}
By shifting $g$ we can assume that $g^{-1}\circ f(S)$ is centered.
Then we consider the transition function $T=g^{-1}\circ f$.
By assumption $f|_{\Omega^\complement}$ is a local isometry, and
we have shown that $g^{-1}|_{f(\Omega^\complement)}$ is a local isometry and thus 
$T|_{\Omega^\complement}$ is a local isometry and therefore $T(s)=As+b=L(s)$ for 
some $A\in \SO(d)$ by Theorem~\ref{th:isometry}.
We now consider the restriction $T|_\Omega$
and apply Lemma~\ref{le:simple_bound_T2} to find
\begin{align}
\begin{split}
\int_\Omega \dist^p&(DT(s),\SO(d))\,\d s
\\
&\leq 
3^{p}\int_\Omega  \mathrm{dist}^p(Df(s) ,\mathrm{SO}(d, T_{f(s)}M))\, \d s
+C\int_{\Omega'}  \mathrm{dist}^p(D g^{-1}(g(s)) ,\mathrm{SO}(T_{f(s)}M, d))
\, \Q(\d s).
\end{split}
\end{align}
Now since $f\in \mc{G}$ we can conclude (using the upper bound on the density of $\P$) as before that
\begin{align}
\begin{split}
\int_{\Omega'}  \mathrm{dist}^p(D g^{-1}(g(s)) ,\mathrm{SO}(T_{f(s)}M, d))
\, \Q(\d s)
&\leq 
\int_{\Omega}  \mathrm{dist}^p(D f^{-1}(f(s)) ,\mathrm{SO}(T_{f(s)}M, d))
\, \P(\d s)
\\
&\leq C
\int_{\Omega}  \mathrm{dist}^p(D f^{-1}(f(s)) ,\mathrm{SO}(T_{f(s)}M, d))
\, \d s.
\end{split}
\end{align}
Combining the last two displays we find
\begin{align}
\int_\Omega \dist^p(DT(s),\SO(d))\,\d s\leq C \sodist_p^p(f,\Omega).
\end{align}
Now we apply Theorem~\ref{th:rigidity2} (and the remark below this result) to conclude that 
\begin{align}
\lVert T-L\rVert_q \leq C \sodist_p(f,\Omega).
\end{align}
Defining $h=T-L$ and using the upper bound on the density of $\P$ we conclude that $X'=g^{-1}(X)=T(S)$ is given by
\begin{align}
X=L(S)+h(S)
\end{align}
and 
\begin{align}
\lVert h\rVert_{\P,q}\leq C \sodist_p(f,\Omega).
\end{align}
\end{proof}
Now we can complete the proof of Theorem~\ref{th:iso_ica}.

\begin{proof}[Proof of Theorem~\ref{th:iso_ica}]
The proof is now the same as the proof of Theorem~\ref{th:local_ica_compact} except that we first apply Proposition~\ref{prop:rigid} instead of Theorem~\ref{th:almost_orthogonal}.
Indeed, by applying Proposition~\ref{prop:rigid}
and the definition of $g$ used in the proof we find that
\begin{align}
\tilde{X}={g}(X)=g(f(S))=AS+h(S)+b
\end{align}
where $A\in \SO(d)$ and $\lVert h\rVert_{\P,q}\leq C\sodist_p(f,\Omega)
= C\sodist_p(f, \R^d)$.
The rest of the proof is the same as the proof of Theorem~\ref{th:local_ica_compact}, i.e., we apply Lemma~\ref{le:center}
to rewrite $\tilde{X}=A'S+h'(S)$ where $h'(S)$ is centered and
$\E h'(S)S^\top)=0$ and then apply Theorem~\ref{th:mcc1}. Note that
those two results do not assume bounded support of $S$.
\end{proof}

\section{Experimental Illustration of non-robustness of piecewise linear functions}\label{app:illustration}


In this appendix we collect some additional details on the experimental illustration of the non-robustness of learning piecewise linear mixing functions. Recall that we consider $Z$ following a two-dimensional standard normal distribution. We let 
$m$ be a radius dependent rotation \cite{hyvarinen1999nonlinear}
which is given by 
\begin{align}
    m(z_1, z_2)
    =
    \begin{pmatrix}
    \cos(5  \p(z^2))z_1 + \sin(5\p(z^2))z_2
    \\
    -\sin(5\p(z^2))z_1 + \cos(5\p(z^2))z_2
    \end{pmatrix}
\end{align}
where $\p:\R\to\R$ is a smooth bump function supported in $(0,1)$ and 
given by
\begin{align}
    \p(t) =
    \begin{cases}
        \exp\left(-\frac{1}{1-(2t-1)^2}\right) &\text{for $0<t<1$}
        \\
        0 & \text{else}.
    \end{cases}
\end{align}
It can be checked explicitly that $m(Z)\indistribution Z$.
Now we consider a VAE model with encoder and decoder each having 3 hidden residual layers with 64 neurons. We train encoder and decoder in a supervised way on pairs $(m(Z), Z)$ and $(Z,m(Z))$ (where $Z$ follows a standard normal distribution) using the mean squared error until the mean error is smaller than $0.001$. 
We define the map implemented by the decoder as $f$. 
We generate data $f(Z)$ where $Z$ is Gaussian and then 
train a VAE on samples $f(Z)$ and we initialize the VAE with the learned $f$ and $f^{-1}$ and give a large negative bias  of $-5$ to the log-variance layer to make the encoder close to deterministic.

\section{Experimental Illustration of Convergence Rate for perturbed linear ICA}\label{sec:experiment}
\begin{figure*}[!ht]
\begin{center}
\includegraphics[width=\linewidth]{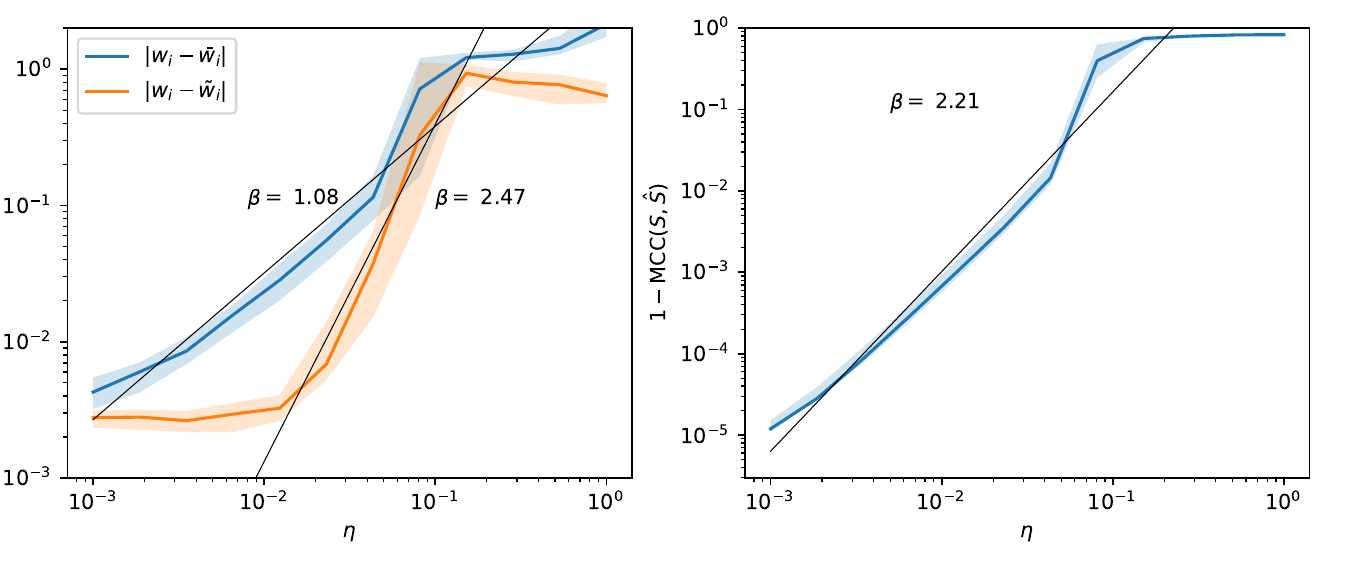}
\end{center}
\caption{
(Left) Distance of recovered linear unmixing $w$ from $\bar{w}$ (see \eqref{eq:def_barw}) and $\tilde{w}$ (see \eqref{eq:tilde_wd})
as a function of $\eta$. Plotted is the median over $5$ runs with $d=5$
(i.e., 25 recovered components).
The shaded area shows the range from the $32\%$ to the $68\%$ quantile.
(Right) Difference of the Mean $\mcc$ over 5 runs of $\hat{S}=W\Sigma^{-\frac12}_XX$ from  perfect recovery ($\mcc=1$).
Regression lines are obtained by linear regression in log-log space for $0.01\leq \eta\leq 0.1$ and $\beta$ values indicate their slope.}
\label{fig:1}
\end{figure*}
To illustrate the results in Section~\ref{sec:perturbed}, we provide a small scale experimental illustration. We consider data generated by $X=AS+\eta h(S)$ 
where $S_i$ follow independent Laplace distributions
and all entries of $A$ follow a centered normal distribution with variance $d^{-1}$ and  
$h_i(S)\propto S_i^3-\beta_i S_i - \alpha_i$ where $\alpha_i$, $\beta_i$ are chosen such that $\E(h_i(S)S_j)=0$, $E(h_i(S))=0$ and the proportionality 
is chosen such that $\E(h_i(S)^2)=1$.
Then we run the Fast-ICA algorithm \citep{hyvarinen1999fast} on $\hat{\Sigma}_X^{-\frac12}X$
with $n=10^6$ samples for $\eta\in [10^{-3},1]$
We initialize $w_{\mathrm{init}}=\bar{w}_i$ (recall that $\bar{w}_i$ 
denote the rows of $A^{-1}(AA^\top)^\frac12$, i.e., the true unmixing of the whitened linear part $A$) to avoid problems with spurious minima 
and matching of minima which are already present for linear ICA. While $\bar{w}_i$ 
cannot be inferred from $X$, this allows us to focus here on the essential differences of our setting to linear ICA.
Recall that $\tilde{w}_i$ was defined in \eqref{eq:tilde_wd} and denotes
the predicted next order expression of the recovered source.
In Figure~\ref{fig:1} we plot the distance of
the output $w_i$ of the fast-ICA algorithm to $\bar{w}_i$  and $\tilde{w}_i$ and evaluate the MCC of the recovered sources $\hat{S}$ and the true sources $S$ as a function of $\eta$, the strength of the perturbation.

We find that the dependence on $\eta$ roughly matches our theoretical results.
Note that for $\eta\gtrsim .1$ the error is already of order 1  and
thus independent of $\eta$ while for $\eta\lesssim .01$ the distance
$|w_i-\tilde{w}_i|$ no longer decreases due to final sample effects
(note that roughly $|w_i-\tilde{w}_i|=O(n^{-\frac12})$ for such $\eta$).
We also refer to \cite{horan2021when} where small scale real-world image data was 
investigated and the independent components recovered. Their mixing function is only approximately locally isometric, and therefore our results give a theoretical justification for their observations.

\section{A Class of approximately isometric random Functions}\label{app:random_functions}
In this section we illustrate how approximate local isometries naturally appear when considering random high dimensional functions. Note that a slightly more general class of random functions was considered in \citet{reizinger2023independent}. However, their work does not  quantify approximate local isometries.
We 
consider random functions $f:\R^d\to \R^D$ where we are mostly interested in the case $D\gg d$.
Assume that each coordinate $f_i$ is given by a random draw 
from a smooth Gaussian process, e.g., a mean zero process whose covariance is given by a
Mat\'ern kernel. Assume that the kernel is isotropic $K(x,y)=K(|x-y|)$
with $K'(0)=0$ and $K''(0)=-1$ (can be obtained by scaling).
Then the following holds for $k\neq l$
\begin{align}
\begin{split}
\mathbb{E}(\partial_k f_i(x)\partial_lf_i(x))
&= 
\lim_{h\to 0} h^{-2}\mathbb{E}( (f_i(s+he_k)-f_i(s))(f_i(s+he_l)-f_i(s)))
\\
&=
\lim_{h\to 0} h^{-2}\mathbb{E}( (f_i(s+he_k)-f_i(s))(f_i(s+he_l)-f_i(s)))
\\
&=
\lim_{h\to 0} h^{-2}(K(\sqrt{2}h)-2K(h)+K(0)).
\end{split}
\end{align}
Here, we used in the last step that $|(s+he_k)-(s+he_l)|=h|e_k-e_l|=\sqrt{2}h$.
We then find using Taylor expansion of $K$ (recall $K'(0)=0$) that
\begin{align}
\begin{split}
    \mathbb{E}(\partial_k f_i(s)\partial_lf_i(s))
    =\lim_{h\to 0} h^{-2}\left((K(0)+h^2 K''(0)+o(h^2)) -2\left(K(0)-\frac{h^2}{2} K''(0)\right) + o(h^2)) +K(0)\right)
    =0.
    \end{split}
\end{align}
We also note that 
\begin{align}
\begin{split}
\mathbb{E}(\partial_k f_i(s)\partial_kf_i(s))
&= 
\lim_{h\to 0} h^{-2}\mathbb{E}( (f_i(s+he_k)-f_i(s))(f_i(s+he_k)-f_i(s)))
\\
&=
\lim_{h\to 0} h^{-2}\mathbb{E}( 2K(0)-2K(h))=
-K''(0) =1.
\end{split}
\end{align}
We conclude that the matrix $Df(s)$ has independent Gaussian entries with
variance $1$. 
To measure how isometric $f$ is, we need to investigate the singular values of
$Df$. This has been investigated extensively in the field of random matrix 
theory, e.g., the random matrices appearing here are a special case of the Wishart distribution. Indeed, there  are explicit expressions for the joined eigenvalue density of $D^{-1}X^\top X$ where $X_{ij}$
follow independent standard normal distributions and fine information on the spectrum is available even when $d,D\to\infty$ jointly with $d/D$ fixed. Here we are only interested in the asymptotic scaling as $D\to \infty$, which is much simpler to obtain, and we now sketch this to provide some intuition. We denote $A=\frac{1}{D}Df(s)^\top Df(s)$, and we introduce $\tilde{f}=\sqrt{D}^{-1}f$.
Using the results above, we find
\begin{align}
  \E A_{kl}=\E  (D\tilde{f}^\top(s) D\tilde{f}(s))_{kl}=\frac{1}{D}\sum_m
\E    (Df(s))_{mk} (Df(s))_{ml}
    =\frac{1}{D}
\sum_m  \E  \partial_k f_m(s)\partial_l f_m(s)
= \delta_{kl}.
\end{align}
In other words $\E A=\id_d$.
By the law of large numbers we infer that
\begin{align}
\lim_{D\to \infty} D\tilde{f}^\top (s) D\tilde{f}(s)
=\id_d.
\end{align}
In other words, we conclude that as $D\to \infty$ the sequence $\tilde{f}$
becomes increasingly isometric. 
This can also be quantified.
Note that for $k\neq l$
\begin{align}
    \E (\partial_k f_i(s)\partial_l f_i(s))^2=
    \E (\partial_k f_i(s))^2
    \E (\partial_l f_i(s))^2=1.
\end{align} 
Here we used that uncorrelated Gaussian variables are independent.
Similarly, we obtain for $k=l$
\begin{align}
      \E (\partial_k f_i(s)\partial_k f_i(s))^2
      - \left(\E (\partial_k f_i(s)\partial_k f_i(s))\right)^2
      =
    \E (\partial_k f_i(s))^4-1=3-1=2.
\end{align}
Linearity of the variance implies that
\begin{align}
    \mathrm{Var} A_{kl} \leq  \frac{D}{D^2}(1+\delta_{kl})=\frac{1}{D}
    (1+\delta_{kl}).
\end{align}
We conclude that 
\begin{align}
    \E |A - \E(A)|_F^2 = \frac{d^2+d}{D}
\end{align}

Let $\sigma_i\geq 0$ be the singular values of $D\tilde{f}(s)$.
Then we bound $|\sigma_i-1|\leq |\sigma^2_i-1|$ for $\sigma_i>0$ and find
\begin{align}
   \sum_i |\sigma_i-1|^2\leq \sum_i |\sigma_i^2-1|^2
    = |A-\E(A)|^2_F
\end{align}
Using \eqref{eq:dist_so_under} we conclude that
\begin{align}
   \E \dist^2(D\tilde{f}(s),\SO(d, T_{f(s)}M))
   = \E \sum_i (\sigma_i-1)^2
   \leq \E |A-\E(A)|^2=\frac{d^2+d}{D}.
\end{align}
The argument for the second term in \eqref{eq:dist_f_iso_general2}
is similar but a bit more involved because we need to bound the inverse. 
Suitable bounds for $A^{-1}$ can be found, e.g., in Theorem~2.4.14 in \cite{kollo2006advanced}. They then imply that 
\begin{align}
      \E \dist^2(D\tilde{f}^{-1}(x),\SO(T_{x}M, d))
   \leq \frac{C}{D}.
\end{align}
We then conclude that
\begin{align}
    \E \sodist_2(\tilde{f}, \Omega) \leq  \sqrt{\E \sodist_2^2(\tilde{f}, \Omega)}\leq  \frac{C}{\sqrt{D}}
\end{align}
and therefore the functions $\tilde{f}$ approach local isometries as $D\to \infty$ in a quantitative way with the expected rate.
This result can be generalized to $p>2$ using stronger concentration bounds for sub-Gaussian variables.

\end{document}